\documentclass[11pt,letterpaper]{article}

\usepackage[T1]{fontenc}
\usepackage{newpxtext}   
\usepackage{newpxmath}   

\usepackage[dvipsnames]{xcolor}        
\usepackage{epsfig}
\usepackage[margin=1in]{geometry}

\usepackage{amssymb, amsmath, amsthm, graphicx, subcaption, mathtools, tikz-cd}
\usepackage[section]{algorithm}
\usepackage{enumitem}
\usepackage{nicematrix, tikz}
\usepackage{algpseudocode}

\definecolor{lightblue}{HTML}{0071bc}
\usepackage{hyperref}
\hypersetup{
  colorlinks = true,
  linkcolor  = RoyalBlue,
  citecolor  = lightblue,    
  urlcolor   = Magenta
}

\usepackage[authoryear, round]{natbib}
\DeclareMathOperator*{\argmin}{arg\,min}

\newcommand{\E}{\mathbb{E}}

\newcommand{\Var}{\mathrm{Var}}
\newcommand{\prob}{\mathbb{P}}

\newcommand{\R}{\mathbb{R}}

\newcommand{\N}{\mathbb{N}}

\newcommand{\I}{\mathbb{I}}

\newcommand{\F}{\mathcal{F}}

\newcommand{\regret}{\mathcal{R}}

\newcommand{\muhat}{\widehat{\mu}}

\newcommand{\Ncal}{\mathcal{N}}

\newcommand{\inprob}{\overset{\prob}{\longrightarrow}}
\newcommand{\indist}{\overset{\mathcal{D}}{\longrightarrow}}
\newcommand{\sigmahat}{\widehat{\sigma}}

\newcommand{\real}{\mathbf{R}}

\newcommand{\algo}{\ensuremath{\mathcal{A}}}

\newcommand{\xT}{\overline{p}_T}
\newcommand{\DE}{\Delta_\varepsilon}
\newcommand{\lmep}{{T}}

\newcommand{\Fil}{\mathcal{F}}

\newcommand{\Exs}{\mathbb{E}}

\newcommand{\Etwo}{\mathcal{E}_2(T)}

\newcommand{\stao}{\mathcal{S}_1(T)}
\newcommand{\fil}{\mathcal{F}}

\newcommand{\dist}{\operatorname{d}_{\operatorname{KS}}}

\newcommand{\grad}{\nabla}

\newcommand{\action}{\mathcal{A}}
 \newcommand{\Prob}{\mathbb{P}}
 \newcommand{\Eone}{\mathcal{E}_1(T)}

\newcommand{\indi}{\mathbb{I}}
\newcommand{\xbta}{\overline{p}_{T,a}}
\newcommand{\resiT}{\mathcal{D}_T}
\newcommand{\cuty}{\mathcal{Q}_T}
\newcommand{\hTo}{\gamma_T}
\newcommand{\hTto}{h_T}
\newcommand{\pen}{\lambda_T}
\newcommand{\vars}{\varepsilon_T}
\newcommand{\setpz}{\eta_T}
\newcommand{\xopt}{p^\star_{T}}

\newcommand{\xtas}{p^\star_{T,a}}
\newcommand{\opticlass}{\mathcal{I}}
\newcommand{\rate}{\Psi_T}
\newcommand{\kapo}{\kappa_{1,T}}
\newcommand{\MEone}{\mathcal{M}(T)}

\newcommand{\pbar}{\overline{p}_T}
\newcommand{\pbara}{\overline{p}_{T,a}}
\newcommand{\pstar}{p^\star_T}

\theoremstyle{plain}
\newtheorem{theorem}{Theorem}

\newtheorem{lemma}{Lemma}

\newtheorem{assumption}{Assumption}
\newtheorem*{lemma*}{Lemma}
\colorlet{shadecolor}{gray!10}
\colorlet{lightyellow}{yellow!10}

\title{ \textbf{Stabilizing Bandits using Regularization: \\ Precise Regret and A Quantitative Central Limit Theorem}}
\author{
\large 
Budhaditya Halder$^{\dagger}$ \quad Ishan Sengupta$^{\dagger}$ \quad Koustav Chowdhury$^{\ddagger}$ \\ 
\quad Samya Praharaj $^{\dagger}$
Koulik Khamaru$^{\dagger}$\\[0.3cm]
\normalsize $^{\dagger}$ Department of Statistics, Rutgers University\quad \normalsize $^{\ddagger}$ Indian Statistical Institute, Kolkata
}
 \date{}

\begin{document}
\maketitle

\begin{abstract}
Statistical inference with bandit data presents fundamental challenges owing to adaptive sampling, which violates the independence assumptions underlying classical asymptotic theory. Recent work has identified stability~\citep{laiwei82} as a sufficient condition for valid inference under adaptivity. This paper first provides a refined stability condition, stated in terms of the iterates of an online algorithm, and shows that a large class of regularized stochastic-mirror-descent-style algorithms satisfy it. This refined condition allows us to strengthen the asymptotic results of~\citet{laiwei82} in several ways. First, we derive a non-asymptotic Berry--Esseen bound for the empirical reward estimates under adaptive sampling. Second, we derive matching non-asymptotic upper and lower bounds on the regret of the proposed algorithm, yielding a precise characterization of its regret. Third, we show that these regularized algorithms preserve asymptotic normality and valid inference under a prescribed level of adversarial corruption. Finally, we show that regularization is necessary rather than incidental: Lai--Wei stability is incompatible with the optimal $O(\sqrt{T})$ regret rate --- the rate attained by unregularized algorithms such as EXP3 --- so that a controlled, polylogarithmic inflation in regret is the price of valid inference.
\end{abstract}

\section{Introduction}

Sequential decision-making under uncertainty~\citep{wald1992sequential,lai1985asymptotically,bastani2020online} is a fundamental problem in online learning, with the stochastic multi-armed bandit model serving as a canonical framework. Bandit algorithms are widely used in applications such as recommendation systems~\citep{bouneffouf2012contextual,elangovan2021location,islek2022hierarchical}, healthcare ~\citep{bastani2020online,trella2023reward,trella2024oralytics,ghosh2024did,trella2025deployed}, financial portfolio optimization (\citet{shen2015portfolio}, \citet{huo2017risk}) and online advertising \citep{wen2017online,vaswani2017model}, where decisions must be made sequentially while learning unknown reward distributions. The classical objective is to design algorithms that achieve low cumulative regret relative to the best fixed action in hindsight~\citep{Auer2002,bubeck2012regret}.

In many applications, regret minimization alone is insufficient. Practitioners often require reliable statistical inference about the underlying reward distributions---for instance, confidence intervals or hypothesis tests. Adaptive data collection induces dependencies that violate classical i.i.d. assumptions, causing naive estimators to be biased~\citep{nie2018why,shin2021bias} and normality-based uncertainty-quantification procedures to be invalid~\citep{zhang2020inference,praharaj2025instability,han2026thompson}. Recent work has addressed inference under adaptivity using stabilized weighting~\citep{hadad2021confidenceintervalspolicyevaluation,lin2025semiparametric,guo2025statistical}, online debiasing techniques~\citep{deshpande2020accurate,khamaru2025near,ying2023adaptive}, stability based on the number of arm pulls~\citep{han2024ucb,fan2025statistical,praharaj2025avoiding,halder2025stable,yan2026optimism}, or martingale-based corrections~\citep{waudbysmith2022estimatingmeansboundedrandom,abbasi2011improved}. While effective for inference, these approaches break down under corrupted feedback~\citep{jun2018adversarialattacksstochasticbandits}. Robustness to reward corruption is an increasingly important consideration in practice, where feedback may be unreliable owing to logging errors, delayed observations, or strategic manipulation. Existing work on stochastic bandits with adversarial corruption has focused primarily on regret guarantees~\citep{lykouris2018stochasticbanditsrobustadversarial,gupta2019better}, leaving open the question of whether adaptive algorithms can simultaneously support valid statistical inference and robustness to corruption.

\noindent In this paper, we show that an appropriate mirror map and regularizer in a stochastic mirror descent (SMD) formulation inspired by the EXP3 algorithm~\citep{Auer1995Gambling} induce controlled temporal variation in the sampling distribution of each arm. This \emph{stability} plays a central role in enabling reliable statistical inference under adaptive sampling. We further show that this stability persists under adversarial corruption, so that valid inference remains possible without substantially compromising regret.

\subsection{Related work}
\label{sec:related}

A primary source of difficulty for valid statistical inference with bandit data is the bias induced in the distribution of the losses, which can invalidate classical inferential procedures~\citep{xu2013estimation,villar2015multi,nie2018adaptively,shin2019sample,shin2021bias}. To address this problem, several methodological approaches have been proposed, including inverse-propensity-score weighting~\citep{hadad2021confidence,deshpande2018accurate,zhang2020inference,zhang2022statistical,nair2023randomization,leiner2025adaptive,guo2025statistical}, online debiasing techniques~\citep{khamaru2021near,chen2022debiasing,kim2023double}, anytime-valid confidence intervals~\citep{de2004self,de2009self,abbasi2011improved,howard2020time,waudby2024anytime}, and more recently, simulation-based approaches~\citep{cho2026simulation,praharaj2026bandit}.

A growing line of research has focused on constructing asymptotically valid Wald 
confidence intervals for model parameters, based on the notion of 
\emph{stability} of bandit algorithms. The use of stability conditions to recover classical asymptotic theory under adaptive data collection traces back to the seminal work of~\citet{laiwei82}, who identified deterministic-limit behavior of the (random) design as a sufficient condition for asymptotic normality of least-squares estimators under adaptive sampling. Subsequent work has established that a number of standard multi-arm
bandit algorithms are stable \citep{kalvit2021closer, fan2022typical, fan2024precise, han2024ucb, 
halder2025stable, yan2026optimism, fan2025statistical}. \cite{fan2025statistical} and \cite{praharaj2025avoiding} have extended these results to  \emph{linear} contextual bandits for LinUCB and (regularized) EXP4 respectively. Furthermore, it has been highlighted in literature that lack of stability may lead 
to non-normality in the limiting distribution of 
these estimators, which can lead to under-coverage when Wald-type intervals are 
applied \citep{zhang2020inference, praharaj2025instability, han2026thompson,chen2026bandit}. 
Our focus is on establishing a more generalized version of stability for a broad class of Stochastic Mirror Descent (SMD) sampling rules. Closest to our work is~\citet{praharaj2026avoidingpriceadaptivityinference} who also consider stability regularized mirror descent in the contextual bandit setting. However, their regularization fails in multi-armed bandit setup.

The EXP3 algorithm, introduced in \cite{Auer1995Gambling,auer2002nonstochastic}, is a canonical method for adversarial multi-armed bandits and can be interpreted as mirror descent on the probability simplex with an entropic regularizer (see, e.g., \citet{cesa2006prediction,bubeck2012regret}). More broadly, online/stochastic mirror descent (and closely related FTRL-type procedures) provides a unifying framework for adversarially robust bandit algorithms~\citep{lattimore2020bandit}. Apart from EXP3, such algorithms include Tsallis-INF
(\citet{zimmert2021tsallis,masoudian2021improved}) and OFTRL (\citet{pmlr-v134-ito21a}) and, have been applied extensively in online
optimization (\citet{abernethy2009competing,audibert2014regret,bubeck2018sparsity,wei2018more}). Despite their strong regret guarantees, the stability properties of these algorithms are not well
understood.

Robustness of bandit procedures to contaminated observations has been studied in several models. A distinct and more adversarial model allows an adversary to corrupt rewards under a corruption budget; this setting was formalized by~\citet{lykouris2018stochasticbanditsrobustadversarial}. Subsequent refinements removed the need to know the corruption level a priori and improved rates and adaptivity~\citep{gupta2019better}. 
In contrast, we establish that a suitably regularized mirror-descent/EXP3-type algorithm can be simultaneously (i) inferentially stable and (ii) robust to sub-$\sqrt{T}$ corruption while maintaining near-optimal regret.
\subsection{Contributions}
In this paper, we develop a unified theory for stability of Stochastic Mirror Descent (SMD) methods in stochastic multi-armed bandits. Sections \ref{sec:problem-setup} and~\ref{sec:algo} introduce the bandit model and the notion of  stability, respectively. In Section~\ref{sec:algo} we review the standard stochastic mirror descent algorithm and highlight the instability of standard EXP3 algorithm. Motivated by this observation , we propose and study a regularized SMD procedure (Algorithm~\ref{alg:st-exp3}). Section~\ref{sec:theory} presents our main stability and inference results. This paper makes three primary contributions which we state below.
\begin{itemize}
    \item \textbf{Stability and Berry-Esseen Bound} Theorem~\ref{thm:berry}\ref{thm:berry-i} establishes stability for Algorithm~\ref{alg:st-exp3}, which in turn implies the asymptotic validity of Wald-type confidence intervals for arbitrary linear functionals of the mean reward vector. Theorem~\ref{thm:berry}~\ref{thm:berry-ii} further strengthens this result by providing a quantitative central limit theorem with an explicit Berry--Esseen bound.
    \item \textbf{Precise regret charactarization:} Theorem~\ref{thm:prec-reg} complements the inference analysis by establishing matching upper and lower bounds on the regret, thereby yielding a precise characterization of the performance of Algorithm~\ref{alg:st-exp3}.
    \item \textbf{Robust Inference} Section~\ref{sec:corr-tolerance} extends the analysis to a corrupted-rewards setting. We show that a suitable modification of the tuning parameter in Algorithm~\ref{alg:st-exp3} enables the algorithm to tolerate a prescribed level of adversarial corruption while preserving asymptotic normality (Theorem~\ref{thm:normality-corruption}). We also derive corresponding regret guarantees in the corrupted setting (Theorem~\ref{regret : corrupted}).
\end{itemize}

\noindent Finally, Section~\ref{simulation} presents numerical experiments that corroborate the theoretical findings and demonstrate the empirical coverage properties of the proposed confidence intervals.

\subsection{Notations}

\noindent For a real-valued random variable $X$, we define $\|X\|_p := \mathbb{E}\!\left[ |X|^p \right]^{1/p}.$ Given a fixed weight vector $w \in \mathbb{R}^d$, we define a weighted norm on $\mathbb{R}^K$, denoted by $\|\cdot\|_{w,\ast}$, as $\|u\|_{w,\ast}^2 := \sum_{i=1}^K w_i u_i^2,$ for any $u \in \mathbb{R}^K.$ For two nonnegative sequences $\{a_n\}$ and $\{b_n\}$, we write $b_n \gg a_n$ if $\frac{b_n}{a_n} \to \infty \quad \text{as } n \to \infty.$ We use $\Delta: = \left\{ p \in \mathbf{R}^K : \sum_a p_a = 1, p_a \geq 0 \right\}$ to denote the the probability simplex in $\mathbf{R}^K$. Throughout, we suppress uniformly lower order terms in inequalities by using the notation $\lesssim$ and $\gtrsim$. Finally, the Kolmogorov distance between two real-valued random variables $X$ and $Y$ is denoted by $\mathrm{d}_{\mathrm{KS}}(X,Y)$ and is defined as
\begin{align*}
\mathrm{d}_{\mathrm{KS}}(X,Y)
:= \sup_{t \in \mathbb{R}} \left| \mathbb{P}(X \le t) - \mathbb{P}(Y \le t) \right|.
\end{align*}

\section{Background and problem formulation}
\label{sec:problem-setup}
This section establishes notation for a multi-armed bandit problem and formalizes the two objectives of interest: constructing valid confidence intervals for a linear functional of the mean-loss vector, and maintaining near minimax-optimal regret guarantees.

\subsection{The bandit problem and inferential objectives}
\label{sec:bandit-inference}
We consider a stochastic multi-armed bandit with $K$ arms. At each round $t = 1,\ldots,T$, the learner selects an arm $A_t \in [K]$. If $A_t = a$, then we observe a random loss $\ell_t$ drawn from the distribution $\mathcal{P}_a$ with mean $\mu_a$. The arm choice $A_t$ may depend on the data observed up to time $t-1$; formally, $A_t \in \mathcal{F}_{t-1} := \sigma\{A_1, \ell_1, \ldots, A_{t - 1}, \ell_{t - 1}\}$, where $\mathcal{F}_{t-1}$ denotes the $\sigma$-field induced by the data observed up to time $t-1$. Let $\mu := (\mu_1,\ldots,\mu_K)$ denote the vector of mean losses, with $\mu_j := \mathbb{E}[\ell_t \mid A_t = j]$ for each $j \in [K]$. Throughout, we adopt a loss-minimization formulation, and we measure the performance of a bandit algorithm via its expected regret
\begin{align}
  \label{eq:regret-defn}
  \regret_T := \mathbb{E}\!\left[ \displaystyle\sum_{t=1}^T \ell_t \right]
  - T\mu^\star, \qquad \text{where} \qquad  \mu^\star := \min_{j \in [K]} \mu_j.
\end{align}

Our objective is to design bandit algorithms that achieve two goals: (a) one can construct an efficient confidence interval for the linear functional $u^\top \mu$, where $u$ is a given vector in $\mathbb{R}^K$; and (b) the algorithm enjoys near-optimal regret guarantees.  
When data is collected via a bandit algorithm, the resulting set of samples is not i.i.d. Concretely, the arm pulled at round $t$, and hence the observed loss, depend on the data observed up to time $t-1$. This dependence on prior data invalidates the classical central limit theorem~\cite{zhang2020inference,deshpande2020accurate}: the empirical arm mean $\widehat{\mu}_{T,a} := (\sum_{t=1}^T \ell_t \cdot 1_{A_t=a})/(\sum_{t=1}^T 1_{A_t=a})$ is not asymptotically normal in general. In the next section, we introduce a set of sufficient conditions on a bandit algorithm under which the classical central limit theorem remains valid. As a consequence, one can use the classical Wald interval~\eqref{eq:CI-coverage} as a valid confidence interval for $u^\top \mu$.

\subsection{A sufficient condition for asymptotic normality}
\label{sec:smd-stability}

\noindent Consider any bandit algorithm $\algo$ that at each round $t$ selects the arm $A_t$ based on past actions and losses. Let $\mathcal{F}_t := \sigma(A_1,\ell_1,\ldots,A_t,\ell_t)$ denote the natural filtration. The arm-sampling distribution of the algorithm at time $t$ is given by 
\begin{subequations}
\begin{align}
  \label{eq:pt-defn}
  p_{t,a} := \mathbb{P}\left(A_t = a \mid \mathcal{F}_{t-1}\right),
  \qquad a \in [K],
\end{align}
and we then define the time-averaged sampling distribution as
\begin{align}
  \label{eq:pbar-defn}
  \overline p_{T,a} := \frac{1}{T}\sum\limits_{t=1}^T p_{t,a}, 
  \qquad a \in [K]. 
\end{align}
\end{subequations}

The following lemma shows that the sample means $\{\widehat{\mu}_{T, a}\}_{a \in [K]}$ are jointly asymptotically normal whenever the time-averaged sampling distribution $\pbar := [\bar{p}_{T,a}]_{a \in [K]}$ converges in ratio to a deterministic probability vector $\pstar$.

\begin{lemma}[Sufficient condition for asymptotic normality]
\label{lem:normality}
Let $\algo$ be any bandit algorithm. Suppose that there exists $\varepsilon > 0$ such that $T\varepsilon \to \infty$ and $\min_a \bar p_{T,a} \geq \varepsilon$, and a deterministic vector $\pstar$ satisfying
\begin{align}
  \label{eq:ratio-conv}
  \frac{\pbara}{p^\star_{T,a}} \inprob 1
  \qquad \text{for all } a \in [K].
\end{align}
Then
\begin{align}
  \label{eqn:normality-direct}
  \frac{1}{\widehat{\sigma}} \cdot  \begin{pmatrix}
    \sqrt{n_{T, 1}}
    \bigl(\widehat{\mu}_{T, 1} - \mu_1\bigr) \\[4pt]
    \vdots \\[4pt]
    \sqrt{n_{T, K}}
    \bigl(\widehat{\mu}_{T, K} - \mu_K\bigr)
  \end{pmatrix}
  \indist \mathcal{N}(0, I_K).
\end{align}
where $\widehat{\sigma}$ is any consistent estimator of the common noise standard deviation $\sigma := \sqrt{\mathrm{Var}\left(\ell_t \mid A_t = 1 \right)}$. 
\end{lemma}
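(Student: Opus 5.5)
The approach is the classical Lai--Wei route: use the martingale central limit theorem, with the deterministic normalization supplied by the ratio condition~\eqref{eq:ratio-conv}. Write $\ell_t = \mu_{A_t} + \epsilon_t$. Here $\epsilon_t$ is a martingale difference noise with $\mathbb{E}[\epsilon_t \mid \mathcal{F}_{t-1}, A_t] = 0$ and conditional variance $\sigma^2$, which is common to all arms as the statement presupposes. For each arm $a$ let $n_{T,a} = \sum_{t\le T} 1_{A_t=a}$, so that
\begin{align*}
\sqrt{n_{T,a}}\bigl(\widehat{\mu}_{T,a}-\mu_a\bigr) = \frac{\sum_{t=1}^T 1_{A_t=a}\,\epsilon_t}{\sqrt{n_{T,a}}} .
\end{align*}
The plan has three steps. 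First, replace the random normalizer $\sqrt{n_{T,a}}$ by the deterministic $\sqrt{T p^\star_{T,a}}$. Second, apply a multivariate martingale CLT, via Cram\'er--Wold, to the vector with entries $\sum_t 1_{A_t=a}\epsilon_t/\sqrt{T p^\star_{T,a}}$. Third, use Slutsky to swap back to $n_{T,a}$ and to divide by $\widehat{\sigma}$.

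\textbf{Step 1: count concentration.} The increments $1_{A_t=a}-p_{t,a}$ form a bounded martingale difference sequence, so $\mathbb{E}\bigl(n_{T,a}-T\bar p_{T,a}\bigr)^2 \le T\,\mathbb{E}\bar p_{T,a}$. I want $n_{T,a}/(T\bar p_{T,a}) \inprob 1$. For this I would use Freedman's inequality, or a Chebyshev bound on the event $\{\bar p_{T,a}\ge \varepsilon\}$: it gives a deviation of order $\sqrt{T\bar p_{T,a}}$ relative to $T\bar p_{T,a}$, i.e.\ of order $(T\bar p_{T,a})^{-1/2} \le (T\varepsilon)^{-1/2}\to 0$. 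One detail needs care here. The quantity $\bar p_{T,a}$ is random, so I would bound the predictable quadratic variation $T\bar p_{T,a}$ directly rather than its expectation, using Freedman with variance proxy $T\bar p_{T,a}$. Combining this with~\eqref{eq:ratio-conv} gives $n_{T,a}/(T p^\star_{T,a}) \inprob 1$.

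\textbf{Step 2: martingale CLT.} Fix $c\in\mathbb{R}^K$ and set
\begin{align*}
X_{T,t} = \sum_a c_a 1_{A_t=a}\,\epsilon_t/\sqrt{T p^\star_{T,a}} .
\end{align*}
Because the indicators for different arms are disjoint, the cross terms vanish and the conditional variance sum is
\begin{align*}
\sum_t \mathbb{E}[X_{T,t}^2\mid\mathcal{F}_{t-1}] = \sigma^2\sum_a c_a^2\,\bar p_{T,a}/p^\star_{T,a} \inprob \sigma^2\|c\|^2,
\end{align*}
again by~\eqref{eq:ratio-conv}. For the conditional Lindeberg condition, each summand is at most $|\epsilon_t|\cdot\max_a |c_a|/\sqrt{T p^\star_{T,a}}$. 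Since $p^\star_{T,a}\ge \varepsilon(1+o(1))$, this scale is $O((T\varepsilon)^{-1/2})\to 0$, and a uniform integrability assumption on $\epsilon_t^2$ (e.g.\ identical noise distributions per arm, or a $2+\delta$ moment) then makes the Lindeberg sum vanish. The martingale CLT (Hall--Heyde) yields a limit of $\mathcal{N}(0,\sigma^2\|c\|^2)$, and Cram\'er--Wold gives joint normality with covariance $\sigma^2 I_K$.

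\textbf{Step 3: conclusion, and the main obstacle.} Slutsky with Step 1 replaces $\sqrt{T p^\star_{T,a}}$ by $\sqrt{n_{T,a}}$, and dividing by the consistent estimator $\widehat{\sigma}$ gives~\eqref{eqn:normality-direct}. I expect the main obstacle to be Step 1. The lower bound $\varepsilon$ may shrink with $T$, and $\bar p_{T,a}$ is itself random, so the relative concentration $n_{T,a}/(T\bar p_{T,a})\to 1$ must be obtained from a self-normalized (Freedman-type) martingale bound that uses only $T\varepsilon\to\infty$, rather than from a fixed-proportion argument. A secondary subtlety is the Lindeberg step, which requires the noise to have uniformly integrable squares. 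I would state this as an implicit standing assumption on $\mathcal{P}_a$, e.g.\ bounded or sub-Gaussian losses.
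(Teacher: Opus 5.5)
Your proposal is correct and follows essentially the same route as the paper. The shared steps are a Hall--Heyde martingale CLT with Cram\'er--Wold on $\sum_t 1\{A_t=a\}\epsilon_t/\sqrt{Tp^\star_{T,a}}$, where disjoint indicators kill the cross terms; Freedman's inequality to get $n_{a,T}/(T\bar p_{T,a})\inprob 1$, where the paper handles the random variance proxy by dyadic peeling over $\{2^jM<T\bar p_{T,a}\le 2^{j+1}M\}$, exactly the point you flagged; and Slutsky to swap in $\sqrt{n_{a,T}}$ and divide by $\widehat{\sigma}$. Your bounded-loss and common-conditional-variance assumptions are the ones the paper also imposes in its proof, and your observation that $p^\star_{T,a}\ge\varepsilon(1-o(1))$ follows from the hypotheses supplies a small step the paper takes for granted.
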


A few remarks regarding Lemma~\ref{lem:normality} are in order. The lower bound $\varepsilon > 0$ in the lemma is permitted to depend on the number of rounds $T$; we suppress this dependence in order to simplify the presentation. In our algorithm, we set $\varepsilon = 1/\sqrt{T}$ (see the parameter choices in~\eqref{eq:param-choices}).  
Next, we point out that condition~\eqref{eq:ratio-conv} is a simple refinement of the stability condition of Lai and Wei~\citep{laiwei82,khamaru2024inference,han2024ucb}. In particular, if the algorithm $\mathcal{A}$ is deterministic given $\mathcal{F}_{t-1}$ --- that is, $p_{t,a} = \mathbf{1}\{A_t = a\}$ --- and $n_{T, a}$ denotes the number of times arm $a$ is pulled in $T$ rounds, then $\pbara = n_{T, a}/T$ and $p^\star_{T,a} = n^\star_{T, a}/T$, so that condition~\eqref{eq:ratio-conv} reduces to $n_{T, a}/n^\star_{T, a} \inprob 1$, which is precisely the stability condition of~\citet{laiwei82}.
Condition~\eqref{eq:ratio-conv}, phrased in terms of $\bar{p}_{T}$, is better suited to our purposes than the Lai--Wei condition on the arm-pull counts $\{n_{a,T}\}_{a \in [K]}$, because the sequential mirror-descent-style algorithms that we discuss (see Section~\ref{sec:algo}) afford precise control over the arm-selection probabilities $\{p_{t}\}_{t \geq 1}$. Within our framework, we instead verify the stronger $L^1$ condition
\begin{align}
  \label{eq:L1-conv}
(L^1\texttt{-stability}) \qquad   \mathbb{E}\left|
    \frac{\pbara}{p^\star_{T,a}} - 1
  \right| \to 0
  \qquad \text{for all } a \in [K].
\end{align}
This stronger $L^{1}$ control allows us to derive a quantitative central limit theorem (Berry-Esseen bound) and matching non-asymptotic upper and lower bounds on the regret. With this set-up in place, we now turn to the bandit algorithms that we analyze.

\newpage 
\section{A Regularized Stochastic Mirror Descent Algorithm}
\label{sec:algo}
We now develop a bandit algorithm that satisfies condition~\eqref{eq:L1-conv} while maintaining near-optimal regret. Identifying the regret-minimization problem as an online optimization problem, we first explain why the classical EXP3 algorithm~\citep{vovk1990aggregating,littlestone1994weighted,Auer1995Gambling} fails to satisfy condition~\eqref{eq:L1-conv}, and we then motivate a regularized variant of EXP3 that remedies this shortcoming.

\medskip

Let $f : \Delta \to \mathbb{R}$ denote the linear objective
\begin{align}
  \label{eq:linear-obj}
  f(p) := \langle \mu, p \rangle
  \;=\; \sum_{a=1}^K \mu_a p_a,
\end{align}
whose minimum over the probability simplex $\Delta$ is $\mu^\star = \min_{a \in [K]} \mu_a$, attained at any distribution supported on the set $\opticlass$ of optimal arms. Since $p_{t,a} := \prob(A_t = a \mid \mathcal{F}_{t-1})$, the conditional expected loss at round $t$ satisfies $\mathbb{E}[\ell_t \mid \mathcal{F}_{t-1}] = f(p_t)$. Writing $\pbar := \tfrac{1}{T}\sum_{t=1}^T p_t$ for the time-averaged sampling distribution and using linearity of expectation, we have
\begin{align}
  \label{eq:regret-as-opt}
  \regret_T
  \;=\; \mathbb{E}\!\left[\sum_{t=1}^T \ell_t\right]
  - T\mu^\star
  \;=\; T\cdot\mathbb{E}\bigl[f(\pbar) - \mu^\star\bigr]
  \;=\; T\cdot\mathbb{E}\bigl[
    f(\pbar) - \min_{p \in \Delta} f(p)\bigr].
\end{align}
Regret minimization is therefore an online linear optimization problem over the simplex $\Delta$: minimizing $\regret(T)$ is equivalent to minimizing the time-averaged loss $f(\pbar)$. The EXP3 algorithm~\citep{Auer1995Gambling,auer2002nonstochastic} solves this problem via stochastic mirror descent (SMD)~\citep{nemirovski1983} on the function $f$, using the negative-entropy mirror map. The resulting algorithm achieves $\regret_T = O(\sqrt{TK\log K})$, which is sublinear in the horizon $T$~\citep{lattimore2020bandit}.

\medskip

\subsection{Instability of Exp3 and its variants}
A sublinear regret guarantee does not, however, imply that the time-averaged distribution $\pbar$ converges to any particular point in $\Delta$. The source of the difficulty is the geometry of $f$: since $f$ is linear, its set of minimizers over $\Delta$ is the face
\begin{align}
  \label{eq:optimal-face}
  \mathcal{F}^\star :=
  \bigl\{p \in \Delta : p_a = 0 \;
  \forall a \notin \opticlass\bigr\},
\end{align}
a convex set of dimension $|\opticlass| - 1$. Whenever there is more than one optimal arm, this set contains infinitely many points, so the minimizer of $f$ over $\Delta$ is not unique. As a result, the average $\bar{p}_T$ need not converge in ratio to any deterministic $p^\star_T$, and condition~\eqref{eq:L1-conv} may fail. 

We formalize this intuition in the following lemma, which shows that any bandit algorithm --- possibly randomized, as in EXP3 --- whose regret is at most $c_0\sqrt{T}$ cannot satisfy condition~\eqref{eq:L1-conv}. Here, by \emph{randomized} we mean that, given the history $\mathcal{F}_{t-1}$, the arm $A_t$ is drawn from a distribution using external randomness; in contrast, UCB-type algorithms select $A_t$ deterministically given $\mathcal{F}_{t-1}$.

\begin{lemma}\label{lemma:instability}
    Any (possibly randomized) bandit algorithm with regret upper bounded by $c_0\sqrt{T}$ cannot satisfy condition~\eqref{eq:L1-conv}. Here $c_0$ is a constant independent of $T$. 
\end{lemma}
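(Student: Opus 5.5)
The plan is to build a one-parameter family of two-armed instances. I will show that $L^1$-stability at every member, combined with the $c_0\sqrt{T}$ regret bound at the two extreme members, produces a contradiction. Since $L^1$-stability is stronger than the ratio convergence~\eqref{eq:ratio-conv}, I may work with convergence in probability throughout. Fix $K=2$, Gaussian losses with common variance $\sigma^2$, and a constant $C>0$ to be chosen. For $\theta\in[-1,1]$ let $\nu_\theta$ be the instance with $\mu_1=\theta C/\sqrt{T}$ and $\mu_2=0$. Suppose the algorithm were $L^1$-stable on every $\nu_\theta$ (we only need finitely many $\theta$), with deterministic targets $p^\star_{T}(\theta)$, and had regret at most $c_0\sqrt{T}$ on each of them. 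I will derive a contradiction.

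\textbf{Step 1 (endpoints are forced apart by regret).} On $\nu_{1}$ arm $1$ is suboptimal by $C/\sqrt{T}$, so by~\eqref{eq:regret-as-opt} we have $\regret_T = \sqrt{T}\,C\,\mathbb{E}_{\nu_1}[\bar p_{T,1}]\le c_0\sqrt{T}$. Hence $\mathbb{E}_{\nu_1}[\bar p_{T,1}]\le c_0/C$, and symmetrically $\mathbb{E}_{\nu_{-1}}[\bar p_{T,2}]\le c_0/C$. I take $C\ge 16c_0$. Markov's inequality together with ratio convergence then forces the targets to satisfy $p^\star_{T,1}(1)\le 1/4$ and $p^\star_{T,1}(-1)\ge 3/4$ for all large $T$. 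The argument is this: if, say, $p^\star_{T,1}(1)>1/4$ along a subsequence, then $\bar p_{T,1}\ge 1/8$ with probability tending to one, and so $\mathbb{E}\bar p_{T,1}\ge 1/9>c_0/C$.

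\textbf{Step 2 (nearby instances have nearby targets).} Partition $[-1,1]$ into $m$ steps $\theta_k=-1+2k/m$. The laws of the full trajectory $(A_t,\ell_t)_{t\le T}$ under $\nu_{\theta_k}$ and $\nu_{\theta_{k+1}}$ differ only through arm-$1$ losses. The divergence decomposition for bandits gives
\[
\mathrm{KL}\bigl(\prob_{\theta_k},\prob_{\theta_{k+1}}\bigr)=\mathbb{E}_{\theta_k}[n_{T,1}]\frac{(2C/(m\sqrt{T}))^2}{2\sigma^2}\le \frac{2C^2}{m^2\sigma^2}.
\]
By Pinsker's inequality, the total variation distance between adjacent laws is at most $C/(m\sigma)$, uniformly in $T$. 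Choose $m$ so large that this is below $1/4$.

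Now suppose $p^\star_{T,1}(\theta_k)$ and $p^\star_{T,1}(\theta_{k+1})$ had ratio bounded away from $1$ along a subsequence, say differing by a factor of at least $1+\eta$. Then the event $\{\bar p_{T,1}/p^\star_{T,1}(\theta_k)\in(1-\eta/3,1+\eta/3)\}$ has probability tending to $1$ under $\theta_k$ and tending to $0$ under $\theta_{k+1}$. That would force the total variation distance toward $1$, a contradiction. Hence $p^\star_{T,1}(\theta_{k+1})/p^\star_{T,1}(\theta_k)\to 1$ for each $k$. Chaining over the $m$ (fixed) steps gives $p^\star_{T,1}(1)/p^\star_{T,1}(-1)\to1$, which contradicts Step~1, where the ratio is at most $1/3$.

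\textbf{Main obstacle.} The delicate point is making Step~2 honest when the targets may vanish or when stability is only asserted for a single instance. If the lemma is read as ``fails on some instance'', the chain argument shows exactly that: some $\nu_{\theta_k}$ must violate~\eqref{eq:L1-conv}. The issue of vanishing targets is harmless because each comparison is made on the event defined by the ratio for a single arm, and Step~1 keeps $p^\star_{T,1}(-1)$ bounded below. I also need the regret hypothesis to hold uniformly over the family, with $c_0$ independent of $\theta$. This is the standard minimax reading of ``regret $\le c_0\sqrt{T}$'' and is the assumption I would state explicitly. Randomization of the algorithm enters only through the trajectory law, so the KL computation is unaffected.
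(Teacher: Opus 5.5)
Your argument is correct, and it takes a different route from the paper's. The paper's appendix proves the statement only for \emph{symmetric} algorithms in a Bernoulli environment. Symmetry gives $\mathbb{E}_0[n_{1,T}]=T/2$ on the identical-arms instance. The regret bound at a single alternative with gap $\Delta_0=3c_1/\sqrt{T}$ gives $\mathbb{E}_{\Delta_0}[n_{1,T}]\le T/3$. An explicit likelihood-ratio change of measure then transfers $\prob_0(n_{1,T}>5T/12)\ge 1/7$ to a constant lower bound under the alternative. Hence $\mathrm{Var}_{\Delta_0}(n_{1,T})\gtrsim T^2$, which contradicts the concentration that Lai--Wei stability would impose.

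You instead place two alternatives on opposite sides of the null and use the regret bound at both to push the deterministic targets apart. You then use the divergence decomposition plus Pinsker to show that stability forces the targets to agree asymptotically across statistically indistinguishable instances. Your route needs no symmetry assumption, so it actually matches the ``any algorithm'' phrasing of the statement. It also works directly with $\bar p_{T,1}$ as in \eqref{eq:ratio-conv} rather than passing through $n_{1,T}$. Finally, the KL bookkeeping absorbs randomization and adaptivity without needing a pointwise lower bound on the likelihood ratio of the full reward table. In exchange, the paper's route pinpoints the failing instance (the $\Delta_0$ alternative) and gives a quantitative non-concentration statement there, while requiring stability at only that one instance.

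Two small remarks. First, the chain is not needed: you only use that $\mathrm{TV}$ stays bounded away from $1$. Bretagnolle--Huber, $\mathrm{TV}\le 1-\tfrac12 e^{-\mathrm{KL}}$, applied directly to $\nu_{1}$ and $\nu_{-1}$ (where $\mathrm{KL}\le 2C^2/\sigma^2$) already gives the contradiction and localizes the failure to one of the two endpoints. Second, you should replace the Gaussian losses by bounded ones, e.g.\ Bernoulli with means $\tfrac12+\theta C/\sqrt{T}$ and $\tfrac12$; the per-pull KL is still $O(C^2/(m^2T))$. This matters because the $c_0\sqrt{T}$ regret hypothesis for EXP3-type methods is only available under Assumption~\ref{assump:bd-losses}.
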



The regret of EXP3 is upper bounded by $\sqrt{KT\log K}$, which is of order $\sqrt{T}$ for fixed $K$. Consequently, EXP3 cannot satisfy condition~\eqref{eq:L1-conv}. See Appendix~\ref{sec:Proof-of-Lemma-instability} for a proof. One natural remedy is to add a strictly convex perturbation to $f$, thereby collapsing the optimal face to a unique minimizer; the precise form of this regularizer is dictated by condition~\eqref{eq:L1-conv}, which we discuss in the next section.

\medskip
\subsection{The log-barrier regularizer}
\label{sec:log-barrier}
The appropriate regularization is determined by the geometry of condition~\eqref{eq:L1-conv}, which concerns the ratio $\pbara/p^\star_{T,a}$ rather than the difference $|\pbara - p^\star_{T,a}|$. A natural choice is the log-barrier regularizer $\psi(p) := -\sum_a\ln p_a$. The Bregman divergence associated with $\psi$ is the Itakura--Saito divergence~\citep{itakura1968analysis}, given by
\begin{align}
  \label{eq:IS-defn}
  IS(p, q) := \sum_{i=1}^K\!\left[
    \frac{p_i}{q_i} - \ln\frac{p_i}{q_i} - 1\right],
\end{align}
We establish later (see the proof of Theorem~\ref{thm:berry}) that $\mathbb{E}[IS(p, q)] \rightarrow 0$ implies $\sum_a \Exs |p_a/q_a - 1| \rightarrow 0$, which is precisely the control required by condition~\eqref{eq:L1-conv}. By contrast, a Euclidean penalty $R(p) = \|p\|_2^2$ or an entropic penalty $R(p) = -\sum_a p_a\log p_a$ controls only the difference $|\pbara - p^\star_{T,a}|$ and provides no control over arms for which $p^\star_{T,a}$ vanishes, which is insufficient to establish condition~\eqref{eq:L1-conv}.  In our algorithm, we add the following modified log-barrier regularizer 
\begin{align}
  \label{eqn:regularized-loss}
  R_\varepsilon(p) := -\sum_{i=1}^K \ln p_i
  + \frac{1}{\varepsilon}\sum_{i=1}^K p_i
\end{align}
to the linear loss function $f$ with regularization-weight $\lambda > 0$, and run stochastic mirror descent on the regularized loss
\begin{align}
  \label{eq:reg-obj}
  f_{\lambda,\varepsilon}(p)
  := \langle\mu,p\rangle + \lambda R_\varepsilon(p)
\end{align}
over the truncated simplex
$\DE := \{p \in \mathbb{R}^K : p_j \ge \varepsilon \;
\forall j,\; \sum_j p_j = 1\}$.
The term $\tfrac{1}{\varepsilon}\sum_i p_i$ is constant on the simplex $\Delta$ and leaves the minimizer unchanged; this modification ensures that $\nabla R_\varepsilon \ge 0$ on $\DE$ and simplifies the subsequent analysis.

\medskip
\noindent\textbf{Mirror map.}
For the Bregman projection step, we allow any mirror map from the Tsallis-entropy family~\citep{tsallis1988possible}, given by
\begin{align}
  \label{defn:mirror-map-alpha}
  \phi_\alpha(p) :=
  \begin{cases}
    -\sum_{i=1}^K
    \dfrac{p_i^{\alpha} - \alpha p_i - (1-\alpha)}
    {\alpha(1-\alpha)},
    & 0 < \alpha < 1,\\[6pt]
    -\sum_{i=1}^K (\log p_i - p_i + 1),
    & \alpha = 0,\\[6pt]
    \;\sum_{i=1}^K (p_i\log p_i - p_i + 1),
    & \alpha = 1.
  \end{cases}
\end{align}
The case $\alpha = 1$ corresponds to the negative entropy used by standard EXP3, whereas $\alpha = 0$ recovers the log-barrier. The choice of $\alpha$ affects the Bregman geometry and hence the regret bound, but not the inferential guarantees, which hold for all $\alpha \in [0,1]$.

At each round, the learner projects the dual iterate onto truncated simplex 
$\DE$ via the Bregman divergence $D_{\phi_\alpha}$, draws an arm from the resulting
$p_t$, and updates the dual iterate with a mirror-descent step of size $\eta$ using the loss estimate
\begin{align}
   \widetilde\ell_t := \widehat\ell_t
+ \lambda \nabla R_\varepsilon(p_t) \qquad \text{where} \qquad   \widehat\ell_{t,j} = \frac{\ell_t\mathbf{1}\{A_t=j\}}{p_{t,j}}
\end{align}
which is unbiased for the gradient $\nabla f_{\lambda,\varepsilon}(p_t)$ conditionally on $\mathcal{F}_{t - 1}$. The complete procedure is stated as Algorithm~\ref{alg:st-exp3}.

\begin{algorithm}
  \caption{Regularized-EXP3}\label{alg:st-exp3}
  \begin{algorithmic}[]
    \State{\textbf{Inputs}: (a) horizon $T$; (b) step size
      $\eta$; (c) regularizer weight $\lambda$;
      (d) simplex floor $\varepsilon$;
      (e) mirror map $\phi_\alpha$.\\
    \textbf{Initialize}: $z_1 = (1/K,\ldots,1/K)^\top$.}
    \For{$t = 1, \ldots, T$}
      \State Set $p_t = \argmin \limits_{p\in\DE}
        D_{\phi_\alpha}(p, z_t)$.
      \State Draw $A_t \sim p_t$. Observe $\ell_t$.
      \State Set $\widehat\ell_{t,j}
        := \ell_t\mathbf{1}\{A_t=j\}/p_{t,j}$ \;\;\;\; 
        and \;\;\;\;  $\widetilde\ell_t
        := \widehat\ell_t
        + \lambda\nabla R_\varepsilon(p_t)$.
      \State Set $z_{t+1} = \argmin_{p}\bigl\{
        \eta\langle\widetilde\ell_t, p\rangle
        + D_{\phi_\alpha}(p, p_t)\bigr\}$.
    \EndFor
    \State{\textbf{Output}: $\pbar = \frac{1}{T}\sum_{t=1}^T
      p_t$.}
  \end{algorithmic}
\end{algorithm}

\section{Theoretical guarantees}
\label{sec:theory}
In this section, we present our two main results. Section~\ref{sec:inference} establishes a quantitative central limit theorem for Algorithm~\ref{alg:st-exp3}. In Section~\ref{sec:regret}, we present a \emph{precise} non-asymptotic characterization of the regret, with matching upper and lower bounds. Throughout this section, we fix the parameter choices:

\noindent 
\begin{align}
  \label{eq:param-choices}
  \lambda := \frac{\gamma_T\log T}{\sqrt{T}},
  \qquad
  \eta := \frac{1}{\sqrt{T}},
  \qquad
  \varepsilon := \frac{h_T}{\sqrt{T}},
\end{align}
where $\gamma_T\to\infty$ and
$\gamma_T\log T/h_T\to 0$ as $T\to\infty$.
All of our guarantees are governed by a single rate $\Psi_T$, given by
\begin{align}
  \label{eqn:psi-defn}
  \Psi_T:= (2 \cdot 15^{1/3}) \left(
    \frac{3K C^\star_{K,T}(\alpha)}{\hTo  }
    + \frac{\gamma_T(\log T)^2}{h_T^2}
  \right)^{1/3}, \quad \text{where} \ C^\star_{K,T}(\alpha) := \begin{cases}
      \log K, \  \text{if} \  \alpha \in  [1/3,1] \\
      \frac {\log T} 2, \  \text{if} \  \alpha \in  [0, 1/3].     
  \end{cases}
\end{align}
As we show below, $\Psi_T$ simultaneously controls the accuracy of the Gaussian approximation and the tightness of our regret characterization. We make the following assumptions on the losses and the noise:

\begin{assumption}\label{assump:main}
\leavevmode
\begin{enumerate}[label=(A\arabic*), leftmargin=2.5em]
  \item \label{assump:noise}
    The noise
    $\epsilon_{a,t}$ satisfies
    $\mathbb{E}[\epsilon_{a,t} \mid \mathcal{F}_{t-1},
    A_t] = 0$.
  \item \label{assump:bd-losses}
    The loss distribution of each arm is supported in
    $[0,1]$.
\end{enumerate}
\end{assumption}

\noindent Our main results are naturally expressed in terms of the solution to the following regularized problem discussed in Section~\ref{sec:log-barrier}: 
\begin{subequations}
\begin{align}
\label{eq:pop-loss-soln}
    p^\star_{T} = \argmin_{p \in \DE} \;  \left\lbrace f_{\epsilon, \lambda}(p) := 
     \langle\mu,p\rangle + \lambda R_\varepsilon(p)
    \right\rbrace
\end{align}
The solution of the last problem has the following closed form solution 
\begin{align}
  \label{eq:pstar-KKT}
  p^\star_{T,a} = \max\!\left(\varepsilon,\;
    \frac{\lambda}{\Delta_a + \mu^\star - \nu^\star}\right),
  \quad \text{where~$\nu^\star$~is the unique solution of} \quad
  \sum_{a=1}^K \max\!\left(\varepsilon,\;
    \frac{\lambda}{\Delta_a + \mu^\star - \nu}\right) = 1,
\end{align}
\end{subequations}
\noindent where $\Delta_a = \mu_a - \mu^\star$. With this set-up in place, we are now ready to state our main results.

\subsection{A quantitative central limit theorem}
\label{sec:inference}
Our first result verifies that Algorithm~\ref{alg:st-exp3}
satisfies the iterate convergence
condition~\eqref{eq:L1-conv} of Lemma~\ref{lem:normality},
and strengthens the asymptotic conclusion of that lemma to
a non-asymptotic Berry--Esseen bound.
 
\begin{theorem}\label{thm:berry}
Under Assumption~\ref{assump:main} and the parameter
choices~\eqref{eq:param-choices},
Algorithm~\ref{alg:st-exp3} satisfies:
\begin{enumerate}[label=(\roman*)]
  \item \label{thm:berry-i}
  $\displaystyle\;\mathbb{E}\left|
  \frac{\overline p_{T,a}}{p^\star_{T,a}}-1\right|
  \;\leq\;  \Psi_T, \;\;  \text{for all arms} \;\; a \in [K]$.
  \item \label{thm:berry-ii}
  $ \max \limits_{a \in [K]} \displaystyle\;d_{\mathrm{KS}}\!\left(
  \frac{\sqrt{n_{a,T}}}{\sigma} \cdot \,(\widehat{\mu}_{a,T}-\mu_a),\,Z
  \right)\;\leq\;C \left(\Psi_T^{1/3} + \frac{1}{T^{1/5}\hTto^{4/5}} \right),$
\end{enumerate}
where $Z\sim\mathcal{N}(0,1)$ and $C$ is a universal
constant independent of $T$ and $K$.
\end{theorem}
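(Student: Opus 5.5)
Part (i) will come from an optimization argument in three steps. First, a standard stochastic mirror descent regret bound for the regularized objective $f_{\lambda,\varepsilon}$ over $\DE$. Second, convert the resulting suboptimality gap into an Itakura--Saito bound using strong convexity of the log-barrier relative to its own Bregman divergence. Third, pass from IS to the $L^1$ ratio control.

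\textbf{Step 1: suboptimality gap.} Since $\widetilde\ell_t$ is conditionally unbiased for $\nabla f_{\lambda,\varepsilon}(p_t)$, the usual SMD analysis with mirror map $\phi_\alpha$ gives, for the comparator $p^\star_T$,
\begin{align*}
\E\Bigl[\tfrac1T\textstyle\sum_t \bigl(f_{\lambda,\varepsilon}(p_t)-f_{\lambda,\varepsilon}(p^\star_T)\bigr)\Bigr]
\le \frac{D_{\phi_\alpha}(p^\star_T,z_1)}{\eta T}+\frac{\eta}{T}\sum_t \E\,\|\widetilde\ell_t\|^2_{\ast,t}.
\end{align*}
\begin{itemize}
\item The diameter term is at most $\log K$ for $\alpha\ge 1/3$ and of order $K\log T$ for small $\alpha$ (using $p\ge\varepsilon$). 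This produces $C^\star_{K,T}(\alpha)$.
\item For the local dual norm, the importance-weighted part contributes $O(K)$ in expectation.
\item The regularization part $\lambda\nabla R_\varepsilon(p_t)$ has entries at most $\lambda/\varepsilon = \gamma_T\log T/h_T$. Its squared local norm is therefore controlled by $(\gamma_T\log T/h_T)^2$ times bounded weights. Here $\nabla R_\varepsilon\ge 0$ is used to keep signs favorable in the local-norm step for the Tsallis family.
\end{itemize}
By convexity, $f_{\lambda,\varepsilon}(\pbar)$ is at most the average of $f_{\lambda,\varepsilon}(p_t)$, so the bound above also bounds $\E[f_{\lambda,\varepsilon}(\pbar)-f_{\lambda,\varepsilon}(p^\star_T)]$. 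With $\eta=1/\sqrt T$, this gap is $O\bigl((KC^\star+\text{lower order})/\sqrt T\bigr)$.

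\textbf{Step 2: from gap to Itakura--Saito.} The first-order optimality condition at $p^\star_T$ over $\DE$ says $\langle\nabla f_{\lambda,\varepsilon}(p^\star_T),p-p^\star_T\rangle\ge 0$. Since $f$ is linear, $f_{\lambda,\varepsilon}(p)-f_{\lambda,\varepsilon}(p^\star_T)$ then dominates $\lambda D_\psi(p,p^\star_T)=\lambda\,IS(p,p^\star_T)$. Hence
\begin{align*}
\E\,IS(\pbar,p^\star_T)\lesssim \frac{KC^\star}{\lambda\sqrt T}+\dots=\frac{KC^\star}{\gamma_T\log T}+\frac{\gamma_T(\log T)^2}{h_T^2}\cdot(\cdot),
\end{align*}
which matches the bracket in $\Psi_T$ (I expect $\log T$ factors to rearrange into exactly the stated form).

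\textbf{Step 3: from IS to $L^1$ ratio control.} Each summand $g(x)=x-\log x-1$ with $x=\pbara/p^\star_{T,a}$ satisfies $g(x)\ge c\min(|x-1|^2,|x-1|)$. Splitting on the event $\{|x-1|\le\delta\}$ and applying Markov's inequality gives
\begin{align*}
\E|x-1|\le\delta+O\bigl(\E g(x)/\delta\bigr)+O\bigl(\E g(x)/\delta^2\bigr).
\end{align*}
Optimizing with $\delta=(\E\,IS)^{1/3}$ produces the cube root and the constant $2\cdot15^{1/3}$ in $\Psi_T$.

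\textbf{Part (ii).} Apply a martingale Berry--Esseen bound in the style of Bolthausen or Mourrat to $S_T=\sum_t \mathbf 1\{A_t=a\}\epsilon_t$, whose conditional variance is $\sigma^2\sum_t p_{t,a}=\sigma^2 T\pbara$. Normalize by the deterministic quantity $\sigma^2T p^\star_{T,a}$. Mourrat's bound is governed by two quantities:
\begin{itemize}
\item the term $\|V_T/s_T^2-1\|_1^{1/3}$, which part (i) bounds by $\Psi_T^{1/3}$;
\item a Lyapunov term involving $\sum_t\E[p_{t,a}]/(Tp^\star_{T,a})^{3/2}$, of order $(T\varepsilon)^{-1/2}$ up to powers. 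Since $T\varepsilon=\sqrt T h_T$, this yields the $T^{-1/5}h_T^{-4/5}$ term after balancing exponents.
\end{itemize}
Finally, replace the deterministic normalization by the random $\sqrt{n_{a,T}}$ and divide by $n_{a,T}$. This costs $\dist$ perturbations controlled through
\begin{itemize}
\item $|n_{a,T}/(Tp^\star_{T,a})-1|\le|\pbara/p^\star_{T,a}-1|+\bigl|\sum_t(\mathbf 1\{A_t=a\}-p_{t,a})\bigr|/(Tp^\star_{T,a})$,
\item the martingale deviation in the second term, which is $O((T\varepsilon)^{-1/2})$,
\item a standard anti-concentration step: $\dist(X+Y,Z)\le\dist(X,Z)+\delta+\prob(|Y|>\delta)$.
\end{itemize}

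\textbf{Main obstacle.} The hardest step is Step 1 with the correct local-norm bound. It must hold uniformly over $\alpha\in[0,1]$ on the truncated simplex. The $\lambda\nabla R_\varepsilon$ term is large, up to $\gamma_T\log T/h_T$ per coordinate, and must be shown to contribute only the $\gamma_T(\log T)^2/h_T^2$ piece rather than something that blows up. I would handle it first by a $p$-weighted local-norm bound $\eta\sum_i p_{t,i}^{2-\alpha}(\widetilde\ell_{t,i})^2$, valid once $\eta\,\widetilde\ell_{t,i}\,p_{t,i}^{1-\alpha}\le 1$. That condition holds because $\lambda/\varepsilon\cdot\eta\to0$.
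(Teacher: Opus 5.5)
Your plan is correct and has the same architecture as the paper's proof. Both use the SMD inequality with local norms $\sum_i v_i^2 p_{t,i}^{2-\alpha}$, then first-order optimality to get $f_{\lambda,\varepsilon}(p)-f_{\lambda,\varepsilon}(p^\star_T)\ge\lambda\,IS(p,p^\star_T)$, then Mourrat's bound with $p=1$ followed by a sandwich/anti-concentration step. In the local-norm step, nonnegativity of $\widetilde\ell_t$ (this is where $\nabla R_\varepsilon\ge 0$ enters) already gives $\nabla^2\phi_\alpha^\ast(\omega_t)\preceq\mathrm{diag}(p_{t,i}^{2-\alpha})$. Your extra condition $\eta\widetilde\ell_{t,i}p_{t,i}^{1-\alpha}\le 1$ is therefore unnecessary.

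The one genuine difference is Step 3. The paper writes $|x-1|\le g(x)+|\log x|$, bounds $|\log x|\le\log(1/\varepsilon)$ off $\{|x-1|<\delta\}$, and uses $\mathbb{P}(|x-1|>\delta)\le(4/\delta^2+10)\,\mathbb{E}g(x)$. That truncation is exactly where three features of $\Psi_T$ come from: the factor $\log(1/\varepsilon)\approx\frac12\log T$ multiplying $\mathbb{E}\,IS(\overline p_T,p^\star_T)$, the cube root, and the constant $2\cdot 15^{1/3}$. Your inequality $g(x)\ge c\min\{(x-1)^2,|x-1|\}$ needs no truncation. Pointwise $|x-1|\le\sqrt{g(x)/c}+g(x)/c$, so $\mathbb{E}|x-1|\lesssim\sqrt{\mathbb{E}\,IS}+\mathbb{E}\,IS$ with $\mathbb{E}\,IS\lesssim\frac{KC^\star_{K,T}(\alpha)+K}{\gamma_T\log T}+\frac{\gamma_T\log T}{h_T^2}$. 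So the $\log T$'s will not rearrange into $\Psi_T$, and you will not recover $2\cdot15^{1/3}$. You get one fewer $\log T$ inside and a square root instead of a cube root, which is of smaller order than $\Psi_T$ and so implies (i) for large $T$.

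In part (ii) there are two small improvements over the paper. Normalizing by $\sigma\sqrt{Tp^\star_{T,a}}$ rather than $s_T=\sqrt{\mathbb{E}[T\overline p_{T,a}]}$ lets you use (i) directly, bypassing the paper's comparison $\mathbb{E}|\overline p_{T,a}/\mathbb{E}\overline p_{T,a}-1|\le 2\,\mathbb{E}|\overline p_{T,a}/p^\star_{T,a}-1|$. Your split of $n_{a,T}/(Tp^\star_{T,a})-1$ into $(n_{a,T}-T\overline p_{T,a})/(Tp^\star_{T,a})$ plus $\overline p_{T,a}/p^\star_{T,a}-1$ also handles a fluctuation that the paper's control of $\mathcal{D}_T$ skips by effectively identifying $n_{a,T}$ with $T\overline p_{T,a}$.

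Two side claims need correcting, though neither breaks the argument.

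\emph{Diameter term.} It is $\log K$ only at $\alpha=1$. For $\alpha\in(0,1)$, $D_{\phi_\alpha}(y,z_1)=\bigl(K^{1-\alpha}-\sum_i y_i^\alpha\bigr)/\bigl(\alpha(1-\alpha)\bigr)$, which is of order $\sqrt K$ at $\alpha=1/2$ when $y$ is near a vertex. Use the paper's bound $3K\log K$, which the factor $3KC^\star_{K,T}(\alpha)$ in $\Psi_T$ absorbs; your Step 2 display already carries $KC^\star$.

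\emph{Lyapunov term.} No balancing of exponents turns a power of $T\varepsilon=\sqrt{T}h_T$ into $T^{-1/5}h_T^{-4/5}$, since $(T\varepsilon)^{-c}=T^{-c/2}h_T^{-c}$. In the paper that term arises from writing $s_T^{2q}$ as $(\mathbb{E}[T\overline p_{T,a}])^{2q}$ instead of $(\mathbb{E}[T\overline p_{T,a}])^{q}$. Done correctly with $q=2$ and $\mathbb{E}Z_t^4\le\mathbb{E}p_{t,a}$, the Lyapunov term is of order $(T\varepsilon)^{-1/5}$. The fluctuation $n_{a,T}-T\overline p_{T,a}$ costs at most about $(T\varepsilon)^{-1/6}$ after the sandwich step. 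To close the argument, note that any $(T\varepsilon)^{-c}$ with $c\ge 1/9$ is $O(\Psi_T^{1/3})$. This holds because $\Psi_T^{1/3}\gtrsim\bigl(\gamma_T(\log T)^2/h_T^2\bigr)^{1/9}$ and $h_T\le\sqrt{T}/K$, the latter from nonemptiness of the truncated simplex. So the stated bound holds, and the $T^{-1/5}h_T^{-4/5}$ term is simply redundant.
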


The two parts of Theorem~\ref{thm:berry} play distinct roles. Part~(i) is the \emph{algorithmic} result: it verifies the iterate
convergence condition~\eqref{eq:L1-conv} with an explicit
rate; the proof rests on the strong convexity of the
regularized objective with respect to the Itakura--Saito
divergence. Part~(ii) is the \emph{statistical}
consequence: a non-asymptotic Berry--Esseen bound,
obtained by combining part~(i) with the martingale
central limit theorem of~\cite{mourrat2013rate}.
In particular, part~(ii) sharpens
Lemma~\ref{lem:normality} from an asymptotic statement
to a quantitative one.

An immediate consequence of part~(i), together with Lemma~\ref{lem:normality}, is the asymptotic validity of Wald-type confidence intervals for any linear functional $u^\top\mu$ --- for instance, individual arm means ($u = e_a$) or pairwise differences ($u = e_a - e_b$). In particular, for any fixed nonzero vector $u\in\mathbb{R}^K$ and any target level $\alpha_0\in(0,1)$,
\begin{align}
  \label{eq:CI-coverage}
  \lim_{T\to\infty}\prob\left(
    u^\top\mu \in
    \left[\,u^\top\widehat{\mu}_T \;\pm\;
    z_{1-\alpha_0/2}\sqrt{\textstyle\sum_{a=1}^K
    \frac{u_a^2\,\widehat{\sigma}^2}{n_{a,T}}}
    \,\right]\right) = 1-\alpha_0.
\end{align}

\subsection{A Precise Regret Analysis}
\label{sec:regret}
While Algorithm~\ref{alg:st-exp3} guarantees inferential validity, a natural question concerns the cost in regret. Theorem~\ref{thm:berry} shows that the CLT rate $\Psi_T$ improves as $\gamma_T$ and $h_T$ grow; conversely, larger values of $\gamma_T$ and $h_T$ strengthen the regularization and raise the exploration floor, thereby inflating the regret. Our next result quantifies this trade-off \emph{exactly}: rather than an upper bound alone, we characterize the regret with matching upper and lower bounds, relative to the following idealized benchmark:

\begin{align}
\label{eqn:ideal-reg-defno}
  \regret^\star(T) := \sum_{a\in[K]}
  \Delta_a\cdot T\,p^\star_{T,a},
\end{align}
where $p^\star_T = (p^\star_{T,1},\ldots,p^\star_{T,K})$
is defined in~\eqref{eq:pstar-KKT}. In words, $\regret^\star(T)$ is the regret the algorithm would incur if it sampled arms from the probability vector $p^\star_T$ at every round.  

\begin{theorem}
\label{thm:prec-reg}
Under the conditions of Theorem~\ref{thm:berry}, the regret of Algorithm~\ref{alg:st-exp3} satisfies 
\begin{align*}
  \left|\frac{\regret(T)}{\regret^\star(T)}-1\right|
  \;\leq\; C \left(\Psi_T + \sqrt{\frac{\log T}{\sqrt{T}}}  + \frac{2 \sqrt{T}}{h_T T^{\hTto}}  \right),
\end{align*}
where $\regret^\star(T)$ is defined in equation~\eqref{eqn:ideal-reg-defno}, and $C$ is a universal constant independent of $T$ and $K$.  
\end{theorem}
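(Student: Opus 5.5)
The proof reduces to Theorem~\ref{thm:berry}\ref{thm:berry-i}. By \eqref{eq:regret-as-opt}, the regret is an expectation of a sum weighted by the gaps,
\begin{align*}
\regret(T) = T\,\mathbb{E}\bigl[f(\pbar)-\mu^\star\bigr] = \sum_{a\in[K]} \Delta_a\, T\,\mathbb{E}[\pbara].
\end{align*}
The benchmark \eqref{eqn:ideal-reg-defno} is the same sum with $\pbara$ replaced by $p^\star_{T,a}$. Subtracting the two expressions gives
\begin{align*}
\bigl|\regret(T)-\regret^\star(T)\bigr| \le \sum_{a}\Delta_a T p^\star_{T,a}\,\mathbb{E}\Bigl|\frac{\pbara}{p^\star_{T,a}}-1\Bigr| \le \Psi_T\,\regret^\star(T).
\end{align*}
The second inequality applies part~(i) arm by arm; all weights $\Delta_a T p^\star_{T,a}$ are nonnegative. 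Dividing by $\regret^\star(T)$ already yields the $\Psi_T$ term.

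The additional terms $\sqrt{\log T/\sqrt{T}}$ and $2\sqrt{T}/(h_T T^{h_T})$ should then come from the gap between the quantity part~(i) actually controls and the regret as defined in \eqref{eq:regret-defn}. I expect two sources.

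\emph{Randomness of the realized rewards.} The identity $\mathbb{E}[\ell_t\mid\mathcal{F}_{t-1}]=f(p_t)$ is exact. If one instead works with a high-probability version, for example conditioning on a good event on which the iterates stay well behaved, the martingale $\sum_t(\ell_t-f(p_t))$ has Azuma fluctuations of order $\sqrt{T\log T}$. Relative to $\regret^\star(T)$, which is at least of order $\lambda T\gtrsim\gamma_T\sqrt{T}\log T$, such fluctuations should produce a correction no larger than $\sqrt{\log T/\sqrt T}$.

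\emph{The bad event.} The proof of part~(i) likely controls the Itakura--Saito divergence only on an event of probability $1-T^{-h_T}$ or similar. The complement contributes at most $T\cdot T^{-h_T}$ to the regret. Divided by a lower bound on $\regret^\star(T)$ of order $h_T\sqrt{T}$ (from the floor $\varepsilon$ on suboptimal arms), this gives $\sqrt T/(h_T T^{h_T})$.

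The key intermediate step is therefore a lower bound on $\regret^\star(T)$. I would get it from the KKT form \eqref{eq:pstar-KKT}: every suboptimal arm has $p^\star_{T,a}\ge\max(\varepsilon,\lambda/(\Delta_a+\mu^\star-\nu^\star))$. Hence
\begin{align*}
\regret^\star(T)\ge T\sum_{a\notin\opticlass}\Delta_a\varepsilon=\sqrt{T}\,h_T\sum_{a\notin\opticlass}\Delta_a.
\end{align*}
If all arms are optimal, both regrets vanish and the statement is trivial.

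The main obstacle is making sure no gap-dependent constants leak in, since $C$ must be universal. The first display is clean: it uses part~(i) with exactly the weights that define $\regret^\star$, so no constants enter there. For the secondary terms, I would express each error as a fraction of $\sum_{a\notin\opticlass}\Delta_a T p^\star_{T,a}$ rather than bounding it in absolute terms. When an error has the form $\sum_a\Delta_a\cdot(\text{error}_a)$, the $\Delta_a$ factors cancel against the same factors in $\regret^\star(T)$. If an error term cannot be written in this weighted form, that is where I would expect the argument to require care.
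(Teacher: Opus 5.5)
Your first display is already a complete proof, and it gives a sharper bound than the paper's. Because $\regret(T)=\sum_a \Delta_a T\,\E[\pbara]$ holds exactly, Jensen's inequality together with Theorem~\ref{thm:berry}\ref{thm:berry-i}, applied arm by arm, gives $|\regret(T)/\regret^\star(T)-1|\le\Psi_T$. This implies the statement with $C=1$, since the other two terms are nonnegative.

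The paper takes a longer route. It writes the ratio via $\E[n_{a,T}]$ and then bounds it by $\max_a\E|n_{a,T}/n^\star_{a,T}-1|$. That is, it moves the absolute value inside the expectation while the random count $n_{a,T}$ is still present. This forces the factorization $n_{a,T}/(Tp^\star_{T,a})=\bigl(n_{a,T}/(T\pbara)\bigr)\cdot\bigl(\pbara/p^\star_{T,a}\bigr)$ of Lemma~\ref{lemma:jojto-upp-bdd}:
\begin{itemize}
\item the first factor is controlled by the Freedman bound of Lemma~\ref{lemma:napt-concent}, choosing $(\sqrt{1+\delta_{2,T}}-1)^2=\log T/\sqrt T$ and bounding the bad event crudely by $1/\varepsilon$;
\item the second factor is controlled by part~(i).
\end{itemize}
The terms $\sqrt{\log T/\sqrt T}$ and $2\sqrt T/(h_T T^{h_T})$ are artifacts of that concentration step.

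You use $\E[n_{a,T}]=T\,\E[\pbara]$ before taking absolute values. So you need no concentration of pull counts, and you avoid the cross term $\E|\mathcal{Z}_{1,T}\mathcal{Z}_{2,T}|$, which the paper only dismisses heuristically. What the paper's route buys is the stronger intermediate fact $\E|n_{a,T}/n^\star_{a,T}-1|\to0$ for the realized counts, which the regret ratio does not require.

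You should drop the second half of the proposal. The extra terms do not come from reward fluctuations. They also do not come from a bad event inside part~(i), which is an unconditional $L^1$ bound.

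Your heuristics there would also not be gap-free:
\begin{itemize}
\item $\regret^\star(T)\gtrsim\lambda T$ fails for small gaps, since $\regret^\star(T)\le T\max_a\Delta_a$.
\item Dividing an unweighted error $T\cdot T^{-h_T}$ by $h_T\sqrt T\sum_{a\notin\opticlass}\Delta_a$ leaves exactly the $1/\sum_a\Delta_a$ dependence you were trying to avoid.
\end{itemize}
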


We highlight several features of Theorem~\ref{thm:prec-reg}. First, it is entirely \emph{non-asymptotic}: the bound holds for every finite $T$. Second, it imposes no condition on the suboptimality gaps: the gaps $\Delta_a$ may be arbitrarily small and may even vanish with $T$. Third, the number of arms $K$ is permitted to grow with the horizon $T$. To the best of our knowledge, this is the first non-asymptotic characterization of the regret of a stochastic-mirror-descent-style algorithm by matching upper and lower bounds with identical leading constants.

Finally, the benchmark $\regret^\star(T)$ is defined implicitly through the dual variable $\nu^\star$ and does not admit a simple closed-form expression for finite $T$. The following lemma bounds it explicitly and, when combined with Theorem~\ref{thm:prec-reg}, yields a worst-case regret guarantee that can be compared directly with existing algorithms.

\begin{lemma}
   \label{lemma:actual regret bd}
Suppose Assumption \ref{assump:bd-losses} are in force. Then, Algorithm \ref{alg:st-exp3} with $\phi_\alpha(\cdot)$ as the mirror map satisfies the following regret bound
\begin{align*}
    \regret(T)\le \begin{cases}
        \left(4\log T \sqrt{K} +2 \gamma_T^2\log^2 T \sqrt{K} + 2\hTto \sqrt{K} \right)\cdot \sqrt{KT} ,&\alpha\in\left[0,\frac{1}{3}\right)\\
        \\
        \left(4\log K \sqrt{K} + 2\sqrt{K}\gamma^2_T\log^2 T +2 \hTto \sqrt{K} \right)\cdot \sqrt{KT} ,&\alpha\in\left[\frac{1}{3},1\right]
    \end{cases}
\end{align*}
$\forall\; \alpha \in [0,1]$. 
\end{lemma}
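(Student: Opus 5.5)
We bound the regret directly through the standard mirror-descent analysis applied to the regularized objective, rather than through Theorem~\ref{thm:prec-reg}.

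\textbf{Step 1: regret decomposition.} By \eqref{eq:regret-as-opt}, $\regret(T)=\mathbb{E}\sum_t\langle\mu,p_t-e_{a^\star}\rangle$. We introduce the comparator $q:=(1-K\varepsilon)e_{a^\star}+\varepsilon\mathbf 1\in\DE$ and split
\begin{align*}
\regret(T)=\mathbb{E}\sum_t\langle\mu,p_t-q\rangle+T\langle\mu,q-e_{a^\star}\rangle .
\end{align*}
The second term is at most $TK\varepsilon=K h_T\sqrt T$, which fits inside the $2h_T\sqrt K\cdot\sqrt{KT}$ term.

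\textbf{Step 2: the linear part.} Since $\widetilde\ell_t=\widehat\ell_t+\lambda\nabla R_\varepsilon(p_t)$, we write
\begin{align*}
\langle\mu,p_t-q\rangle=\mathbb{E}_{t-1}\langle\widetilde\ell_t,p_t-q\rangle-\lambda\langle\nabla R_\varepsilon(p_t),p_t-q\rangle .
\end{align*}
The regularizer term is the harder one. Expanding,
\begin{align*}
-\langle\nabla R_\varepsilon(p_t),p_t-q\rangle=\sum_a(1/p_{t,a})(p_{t,a}-q_a)-\tfrac1\varepsilon\sum_a(p_{t,a}-q_a).
\end{align*}
The second sum vanishes because both vectors lie on the simplex. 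The first equals $K-\sum_a q_a/p_{t,a}\le K$. This contributes at most $\lambda KT=\gamma_T\log T\,K\sqrt T$, which is bounded by $2\sqrt K\gamma_T^2\log^2T\cdot\sqrt{KT}$ once $\gamma_T\log T\ge1$.

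\textbf{Step 3: the OMD bound.} For the term $\mathbb{E}\sum_t\langle\widetilde\ell_t,p_t-q\rangle$, the textbook OMD inequality with Bregman projection onto the convex set $\DE$ gives
\begin{align*}
\sum_t\langle\widetilde\ell_t,p_t-q\rangle\le\frac{D_{\phi_\alpha}(q,z_1)}{\eta}+\sum_t\bigl[\langle\widetilde\ell_t,p_t-z_{t+1}\rangle-\tfrac1\eta D_{\phi_\alpha}(z_{t+1},p_t)\bigr].
\end{align*}
Each summand is at most $\tfrac\eta2\|\widetilde\ell_t\|^2_{(\nabla^2\phi_\alpha(\xi_t))^{-1}}$ for some $\xi_t$ between $p_t$ and $z_{t+1}$. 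We have $(\nabla^2\phi_\alpha)^{-1}=\mathrm{diag}(p^{2-\alpha})$.

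\textbf{Main obstacle: controlling $\xi_t$ and the variance.} The key point is that $\widetilde\ell_t\ge0$, because $\nabla R_\varepsilon\ge0$ on $\DE$. Nonnegativity of the loss estimate makes $z_{t+1}\le p_t$ coordinatewise, so $\xi_{t,a}\le p_{t,a}$. Hence
\begin{align*}
\tfrac\eta2\,\mathbb{E}_{t-1}\sum_a p_{t,a}^{2-\alpha}\widetilde\ell_{t,a}^2\le\eta\,\mathbb{E}_{t-1}\sum_a p_{t,a}^{2-\alpha}\bigl(\widehat\ell_{t,a}^2+\lambda^2\varepsilon^{-2}\bigr),
\end{align*}
using $0\le\nabla_a R_\varepsilon\le1/\varepsilon$. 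The first piece is $\sum_a p_{t,a}^{1-\alpha}\le K^\alpha\le K$. The second is at most $K\lambda^2/\varepsilon^2=K\gamma_T^2\log^2T/h_T^2\le K\gamma_T^2\log^2T$. Summing over $t$ with $\eta=1/\sqrt T$ gives $\sqrt T\,K(1+\gamma_T^2\log^2T)$, which matches the middle term.

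\textbf{Diameter term.} Here $\mathbf{1/\eta}=\sqrt T$ multiplies $D_{\phi_\alpha}(q,\mathbf 1/K)$. For $\alpha$ near $1$ this is of order $\log K$. For small $\alpha$ the log-barrier part yields $K\log(1/\varepsilon)\lesssim K\log T$, since $q_a\ge\varepsilon$. The threshold $1/3$ only fixes which of the constants $\log K$ or $\log T$ appears: we bound $\frac{q^\alpha-\alpha q-(1-\alpha)}{\alpha(1-\alpha)}$ with the care that for $\alpha\ge1/3$ the Tsallis diameter is $O(K^{1-\alpha}/\alpha)$, up to constants absorbed as $4\sqrt K\log K\sqrt{KT}$. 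Collecting the terms gives the stated bound.
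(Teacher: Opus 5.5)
Your proof is correct. Its core matches the paper's: the same three-point/projection argument for the mirror-descent step, the same local norm $\mathrm{diag}(p_t^{2-\alpha})$ justified by $\widetilde\ell_t\ge 0$ (the paper phrases this through the dual Hessian at a point below $\nabla\phi_\alpha(p_t)$), and the same diameter computation.

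You genuinely depart from the paper in how the regularizer is removed. The paper bounds the suboptimality of the regularized objective, $\E[f_{\lambda,\varepsilon}(\overline p_T)-f_{\lambda,\varepsilon}(y)]$, via convexity and Jensen over the iterates. This is the same master inequality \eqref{eqn:ffunc-upp-bdd} that drives the Itakura--Saito bound in Theorem~\ref{thm:berry}. The paper then converts to the linear loss by paying $\lambda\,\E[R_\varepsilon(y)-R_\varepsilon(\overline p_T)]\le\lambda K\log(1/\varepsilon)$, and finally shifts the comparator from $p_\varepsilon\in\Delta_\varepsilon$ to the unregularized optimum.

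You instead linearize round by round, treat $\lambda\nabla R_\varepsilon(p_t)$ as a known bias, and compute it exactly: $-\lambda\langle\nabla R_\varepsilon(p_t),p_t-q\rangle=\lambda\bigl(K-\sum_a q_a/p_{t,a}\bigr)\le\lambda K$. This bound does not depend on $\varepsilon$; it is the barrier-parameter property of $-\sum_a\log p_a$. It avoids the factor $\log(1/\varepsilon)\asymp\log T$ that the paper loses in its conversion step. Your argument therefore actually yields $\regret(T)\le\sqrt T\bigl(C_\alpha(K,\varepsilon)+K+K\gamma_T\log T+(\gamma_T\log T/h_T)^2+Kh_T\bigr)$. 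In that bound the $\lambda^2/\varepsilon^2$ contribution is even $o(1)$ under the standing assumption $\gamma_T\log T/h_T\to0$, so the stated lemma is a loose corollary. The paper's route is more economical inside the paper, since one inequality serves both inference and regret; yours is self-contained, avoids Jensen on $\overline p_T$, and is sharper.

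One step in your diameter paragraph needs repair. For $\alpha\in[1/3,1)$, the claim $D_{\phi_\alpha}(q,\mathbf 1/K)=O(K^{1-\alpha}/\alpha)$ is not uniform in $\alpha$. As $\alpha\to1$ the divergence tends to roughly $\mathrm{KL}(q\,\|\,\mathbf 1/K)\approx\log K$, while $K^{1-\alpha}/\alpha\to1$. Instead, use $\sum_a q_a^\alpha\ge 1$ and the mean value theorem to get $D_{\phi_\alpha}(q,\mathbf 1/K)\le\frac{K^{1-\alpha}-1}{\alpha(1-\alpha)}\le 3K^{1-\alpha}\log K\le 3K\log K$. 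For $\alpha\in(0,1/3)$, use $q_a\ge\varepsilon$ and the Cauchy mean value theorem to get $D_{\phi_\alpha}(q,\mathbf 1/K)\le K\,\frac{K^{-\alpha}-\varepsilon^\alpha}{\alpha(1-\alpha)}\le 3K\log(1/\varepsilon)$, exactly as the paper does. With these two bounds your bookkeeping closes.
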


\noindent Compared with vanilla EXP3, which achieves regret $O(\sqrt{KT\log K})$, the regret of Algorithm~\ref{alg:st-exp3} is larger by a factor of at most $O\left(\max\left\{\gamma^2_T\log^2 T,h_T\right\} \right)$. The parameter $\gamma_T$ may grow arbitrarily slowly; taking $\gamma_T = \log\log T$ and $h_T = (\log T)^2$, for instance, we conclude that the cost of inferential validity is at most polylogarithmic in $T$. This quantifies one of the central messages of the paper: \emph{the same regularization that enables valid inference inflates the regret by only logarithmic factors.}

\noindent

\section{Inference with corrupted samples}
\label{sec:corr-tolerance}
Thus far, we have discussed bandit inference with uncorrupted data, and we have argued that regularized EXP3 belongs to a family of algorithms --- including UCB-type methods~\citep{auer2002finite} and Thompson sampling~\citep{thompson1933likelihood} --- that allow for efficient inference while maintaining near-optimal regret guarantees. One problem with standard UCB-type algorithms is that they are intrinsically brittle to adversarial corruption. Once the cumulative corruption exceeds a logarithmic order in the horizon $T$, an adversary can systematically bias the empirical means so that the learner continues to favor suboptimal arms, thereby incurring regret linear in $T$. This failure mechanism is demonstrated by \citet{jun2018adversarialattacksstochasticbandits}, who show that several canonical stochastic bandit algorithms break down under moderate, persistent corruptions. In response, \citet{lykouris2018stochasticbanditsrobustadversarial} develop corruption-aware procedures, and \citet{gupta2019better} sharpen the attainable rates and matching lower bounds, thereby formalizing the inadequacy of naive optimism in corrupted environments.

More broadly, in online learning, the feedback may be corrupted before it is revealed to the learner. Such contamination invalidates the arguments that underlie both regret and statistical guarantees, and therefore necessitates algorithms whose performance degrades gracefully with the magnitude of corruption (see, for example, \citet{gupta2019better, lykouris2018stochasticbanditsrobustadversarial}).

We argue next that the regularized mirror-descent perspective introduced in the preceding sections can be applied not only to corruption-robust learning, but also to \emph{valid inference} under corrupted feedback. The key step is to establish condition~\eqref{eq:L1-conv} under a corruption budget permitted to grow with $T$. Formally, at each round $t$, there is an underlying loss vector $\widetilde{\ell_t} \in [0,1]^K$. After observing $\widetilde{\ell_t}$, an adversary produces a corrupted vector $\widetilde{\ell_t}^{\mathrm{c}} \in [0,1]^K$. We quantify the overall level of corruption by
\begin{align}
\label{defn:lvl-corruption}
C_T \;:=\; \sum_{t=1}^{T}\mathbb{E}\!\left[\left\lVert \widetilde{\ell_t}^{\mathrm{c}} - \widetilde{\ell_t} \right\rVert_{\infty}\right],
\end{align}
where the expectation is taken over the randomness of the losses and any internal randomization of the algorithm (and, if applicable, the adversary).

The following theorem shows that if $C_T = o(\sqrt{T})$, then, with an appropriate choice of parameters, Algorithm~\ref{alg:st-exp3} is stable, and consequently the empirical arm means are asymptotically normal.

\begin{theorem}
    \label{thm:normality-corruption}
     Suppose that $C_T \leq K T^\beta$ for some $\beta \in (0,\tfrac{1}{2})$, and set $\setpz=\tfrac{1}{\sqrt{T}}$ and $\varepsilon=\pen=\tfrac{1}{\sqrt{K}}\, T^{-(\frac{1}{2}-\beta)}$. Then, for any mirror map $\phi_\alpha$ from~\eqref{defn:mirror-map-alpha}, Algorithm~\ref{alg:st-exp3} satisfies condition~\eqref{eq:L1-conv}. Moreover, let $\muhat_{a,T}$ and $\sigmahat^2_{a,T}$ denote, respectively, the sample mean and sample variance of the arm-$a$ rewards observed up to time $T$. Then
    \begin{align*}
        \frac{\sqrt{n_{a,T} }}{\sigmahat_{a,T}} \cdot \left( \muhat_{a,T} - \mu_a \right) \indist  \Ncal(0, 1)
    \end{align*}
     for all $a \in [K]$.
\end{theorem}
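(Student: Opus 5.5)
The plan is to reuse the stability argument behind Theorem~\ref{thm:berry}\ref{thm:berry-i}, treating corruption as an additive perturbation of the stochastic gradient, and then to invoke Lemma~\ref{lem:normality}. Let $p^\star_T$ be the minimizer in~\eqref{eq:pop-loss-soln} for the new parameters $\varepsilon=\lambda=K^{-1/2}T^{-(1/2-\beta)}$.

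\emph{Step 1 (Itakura--Saito strong convexity).} On $\DE$, the objective $f_{\lambda,\varepsilon}$ is $\lambda$-strongly convex relative to the log-barrier. Combined with first-order optimality of $p^\star_T$, this gives for any $p\in\DE$
\[
f_{\lambda,\varepsilon}(p)-f_{\lambda,\varepsilon}(p^\star_T)\ge \lambda\, IS(p^\star_T,p).
\]
By convexity of $f_{\lambda,\varepsilon}$ it suffices to bound $\E\bigl[f_{\lambda,\varepsilon}(\pbar)-f_{\lambda,\varepsilon}(p^\star_T)\bigr]\le \tfrac1T\sum_t \E\bigl[f_{\lambda,\varepsilon}(p_t)-f_{\lambda,\varepsilon}(p^\star_T)\bigr]$. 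We then pass from $\E[IS]$ to $\sum_a\E|\pbara/p^\star_{T,a}-1|$, as in the proof of Theorem~\ref{thm:berry}. The passage uses $x-\ln x-1\gtrsim \min\{(x-1)^2,|x-1|\}$ and Jensen's inequality.

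\emph{Step 2 (SMD regret with corruption).} The algorithm now feeds $\widehat\ell^{\,c}_t$, the importance-weighted estimate built from corrupted losses, plus $\lambda\nabla R_\varepsilon(p_t)$. The standard mirror-descent regret inequality against the comparator $p^\star_T$ bounds $\sum_t\langle \widetilde\ell^{\,c}_t,p_t-p^\star_T\rangle$ by two terms. The first is $D_{\phi_\alpha}(p^\star_T,z_1)/\eta\lesssim \sqrt T\,C^\star_{K,T}(\alpha)$. The second is $\eta\sum_t\|\widetilde\ell_t^{\,c}\|^2_{\ast}$, which is of order $\eta T K$ up to logarithmic factors; this uses $p_{t,a}\ge\varepsilon$ and $\lambda\nabla R_\varepsilon\le \lambda/\varepsilon=1$. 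The estimate $\widehat\ell^{\,c}_t$ is conditionally unbiased for the corrupted mean vector, so replacing it by the true gradient costs
\[
\E\sum_t\langle \E[\ell^c_t-\ell_t\mid\F_{t-1}],\,p_t-p^\star_T\rangle\le 2C_T .
\]
Together these give
\[
\lambda\,\E\, IS(p^\star_T,\pbar)\lesssim \frac{\sqrt T\log(KT)+C_T}{T}.
\]

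\emph{Step 3 (rate check).} Dividing by $\lambda$ yields
\[
\E\, IS\lesssim \frac{\sqrt K\,T^{1/2-\beta}\bigl(\sqrt T\log(KT)+KT^\beta\bigr)}{T}.
\]
This is the step I expect to be the main obstacle. The corruption part, $K^{3/2}T^{-1/2}$, vanishes. The stochastic part, $T^{-\beta}\log T$, is the borderline one. It vanishes because $\beta>0$, but only if the variance term is genuinely $O(\sqrt T)$ rather than $\sqrt{T}/\varepsilon$, so I would control the local norm of $\widehat\ell_t$ by $\sum_a p_{t,a}\cdot p_{t,a}^{-1}=K$ rather than by $1/\varepsilon$. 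If the mirror-map-dependent bound is worse, for example for $\alpha=0$, I would instead exploit the $IS$-strong convexity through a self-bounding argument, comparing the variance term with $\lambda\, IS(p^\star_T,p_t)$.

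\emph{Step 4 (normality).} Step 3 establishes~\eqref{eq:L1-conv}, hence~\eqref{eq:ratio-conv}. The hypotheses of Lemma~\ref{lem:normality} hold because $\min_a\bar p_{T,a}\ge\varepsilon$ and $T\varepsilon=K^{-1/2}T^{1/2+\beta}\to\infty$. Lemma~\ref{lem:normality} is stated for clean losses, so I would apply it after a decomposition:
\[
\sqrt{n_{a,T}}\,(\muhat_{a,T}-\mu_a)=\text{(clean martingale term)}+\frac{\sum_t \mathbf 1\{A_t=a\}\bigl(\ell^c_t-\ell_t\bigr)}{\sqrt{n_{a,T}}}.
\]
The first term is asymptotically normal by the lemma. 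For the second, its numerator is bounded in $L^1$ by $C_T\le KT^\beta$, while $n_{a,T}\gtrsim T\varepsilon$ in probability. The second term is therefore $O_P\bigl(KT^\beta/\sqrt{T^{1/2+\beta}}\bigr)=O_P\bigl(T^{(\beta-1/2)/2}\bigr)\to0$ for fixed $K$. Finally, the sample variance $\sigmahat^2_{a,T}$ is consistent by the same argument, since the corruption contributes $o_P(1)$. Slutsky's lemma then concludes the proof.
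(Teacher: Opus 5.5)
Your proposal is correct. Step 4 is the paper's proof: the clean martingale term is handled by Lemma~\ref{lem:normality}, and the corruption term is $O_P(C_T/\sqrt{T\varepsilon})$ by Markov's inequality and $n_{a,T}\gtrsim T\varepsilon$. Consistency of $\widehat\sigma_{a,T}$ and Slutsky then finish.

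The difference is the stability claim. The paper's proof only asserts, without derivation, that the clean rewards satisfy the CLT ``for our choice of parameters.'' Your Steps 1--3 actually derive \eqref{eq:L1-conv} from ingredients the paper has elsewhere: the corrupted master inequality \eqref{eq:master-eqn-corr}, with its $2C_T/T$ term, and the Itakura--Saito lower bound \eqref{eqn:f-to-IS}.

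Your passage from $\E[IS]$ to $L^1$ is also different, and sharper. The paper writes $\E|Z-1|\le \E g(Z)+\E|\log Z|$ with $g(x)=x-\log x-1$, bounds $|\log Z|\le\log(1/\varepsilon)$, and applies Lemma~\ref{lemma:ZT-concet}, ending at $(\log(1/\varepsilon)\,\E[IS])^{1/3}$. Your route uses $g(x)\ge\tfrac14\min\{(x-1)^2,|x-1|\}$ and gives $\E|Z-1|\lesssim\sqrt{\E[IS]}+\E[IS]$, with no logarithmic loss and a better exponent.

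Three small repairs:
\begin{itemize}
\item \textbf{Orientation of the divergence.} Strong convexity at the minimizer gives $f_{\lambda,\varepsilon}(p)-f_{\lambda,\varepsilon}(\pstar)\ge\lambda\,IS(p,\pstar)=\lambda\sum_a g(p_a/p^\star_{T,a})$. The inequality you wrote with $IS(\pstar,p)$ is false. For $K=2$, equal means and $p=(\varepsilon,1-\varepsilon)$, the left side is $\lambda\log\frac{1}{4\varepsilon(1-\varepsilon)}$, while $\lambda\,IS(\pstar,p)\approx\lambda/(2\varepsilon)$. The orientation $IS(\pbar,\pstar)$ is also the one your $L^1$ step actually uses.
\item \textbf{Jensen step.} The map $u\mapsto\min\{u^2,u\}$ is not convex, so Jensen cannot be applied to it directly. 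Use the pointwise bound $u\le\sqrt{h(u)}+h(u)$ with $h(u)=\min\{u^2,u\}$, then concavity of the square root.
\item \textbf{The $\alpha=0$ hedge in Step 3 is unnecessary.} For $\alpha=0$ the local-norm weight is $p_{t,i}^2$, so $\|\widehat\ell_t\|^2_{x_t,0}=\sum_i\mathbf 1\{A_t=i\}\ell_t^2\le1$. The only $\alpha$-dependence is $D_{\phi_\alpha}(\pstar,z_1)\le 3K\log(1/\varepsilon)$. The factors of $K$ you dropped there are harmless for fixed $K$.
\end{itemize}
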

\noindent The proof is given in Section~\ref{proof:normality-corr} of the Appendix. The next lemma establishes a regret bound for regularized EXP3 under corruption.
\begin{lemma}
\label{regret : corrupted}
Under the conditions stated in Theorem~\ref{thm:normality-corruption}, the regret of Algorithm~\ref{alg:st-exp3} in the corrupted samples setup is upper bounded by
\begin{align*}
    \regret(T)\le \begin{cases}
        \left(4\log T  +\frac{\hTo^2\log^2 T}{T^{2\beta}}+2\; T^\beta\right)\cdot K\sqrt{T},&\alpha\in\left[0,\frac{1}{3}\right)\\[8pt]
        \left(4\log K  +\frac{\hTo^2\log^2 T}{T^{2\beta}}+2\; T^\beta\right)\cdot K\sqrt{T},&\alpha\in\left[\frac{1}{3},1\right]
    \end{cases}
\end{align*}
\end{lemma}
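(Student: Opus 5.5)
We prove Lemma~\ref{regret : corrupted} through the decomposition~\eqref{eq:regret-as-opt}. The regret equals $T\,\mathbb{E}[f(\pbar)-\mu^\star]$ under the true means, so we split it into two parts: the excess of the SMD iterates over the regularized optimum, and the gap between the regularized optimum and the unregularized one. Let $q$ be the point of $\DE$ that puts mass $1-(K-1)\varepsilon$ on an optimal arm and $\varepsilon$ on every other arm. For each round,
\begin{align*}
\langle \mu, p_t\rangle - \mu^\star = \bigl[f_{\lambda,\varepsilon}(p_t) - f_{\lambda,\varepsilon}(q)\bigr] + \lambda\bigl[R_\varepsilon(q) - R_\varepsilon(p_t)\bigr] + \bigl[\langle\mu,q\rangle - \mu^\star\bigr].
\end{align*}
The last bracket is at most $K\varepsilon$. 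Since $\varepsilon = \lambda = T^{-(1/2-\beta)}/\sqrt{K}$, multiplying by $T$ gives a term of order $\sqrt{K}\,T^{1/2+\beta}\le K\sqrt{T}\cdot T^\beta$. This is the $2T^\beta$ contribution.

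For the regularizer bracket we use only the one-sided term $\lambda(R_\varepsilon(q)-R_\varepsilon(p_t))$. On $\DE$ the linear part of $R_\varepsilon$ is constant, so the bracket reduces to $\lambda\sum_a \ln(p_{t,a}/q_a)$. Because $q_a\ge\varepsilon$ and $p_{t,a}\le 1$, this is at most $\lambda K\ln(1/\varepsilon)\lesssim \lambda K\log T$. Multiplying by $T$ gives roughly $\sqrt{K}\,T^{1/2+\beta}\log T$. To avoid the extra $\log T$ factor, we instead compare against $p^\star_T$ through the optimality of $p^\star_T$ for $f_{\lambda,\varepsilon}$. This produces the $\hTo^2\log^2T/T^{2\beta}$-type term, which is the leftover of the Section~\ref{sec:regret} analysis (Lemma~\ref{lemma:actual regret bd}) after the parameter substitution.

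The main term is the stochastic mirror descent regret on $f_{\lambda,\varepsilon}$ with corrupted gradient estimates. We rerun the standard SMD/FTRL argument for Algorithm~\ref{alg:st-exp3} with the Tsallis map $\phi_\alpha$, exactly as in the proof of Lemma~\ref{lemma:actual regret bd}. The learner now receives $\widetilde\ell^{\,c}_t$, so its importance-weighted estimate is unbiased for the corrupted mean. We write $\widehat\ell^{\,c}_t=\widehat\ell_t+(\widehat\ell^{\,c}_t-\widehat\ell_t)$ and handle the pieces as follows.
\begin{itemize}
\item The uncorrupted part gives the usual bound: the initial divergence over $\eta$, which is $\sqrt{T}\,C^\star_{K,T}(\alpha)$ and hence $\log K$ or $\log T$ depending on $\alpha$, plus the variance term $\eta\sum_t\mathbb{E}\|\widetilde\ell_t\|^2_{\nabla^2\phi_\alpha(p_t)^{-1}}$. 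The variance term is $O(K\sqrt T)$ for $\alpha\ge 1/3$. For $\alpha<1/3$ we use the floor $p\ge\varepsilon$, as in Lemma~\ref{lemma:actual regret bd}. Together these give the $4\log K$ or $4\log T$ times $K\sqrt T$ term.
\item The corruption part has conditional expectation $\langle \widetilde\ell^{\,c}_t-\widetilde\ell_t,\,p_t-q\rangle\le 2\|\widetilde\ell^{\,c}_t-\widetilde\ell_t\|_\infty$. Summing over $t$ and taking expectations gives at most $2C_T\le 2KT^\beta\le 2KT^\beta\sqrt T$, which is absorbed into the $2T^\beta K\sqrt T$ term.
\item The regularizer gradient $\lambda\nabla R_\varepsilon(p_t)$ lies in $[0,\lambda/\varepsilon]$ coordinatewise. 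With $\lambda=\varepsilon$ it is bounded by $1$, so its contribution to the variance term is $O(\eta T K)=O(K\sqrt T)$.
\end{itemize}
Summing the three contributions and multiplying by $T$ where needed gives the stated bound.

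The main obstacle is the local-norm variance term for $\alpha<1/3$. There $(\nabla^2\phi_\alpha)^{-1}=p^{2-\alpha}$, and the bound depends on how the floor $\varepsilon$ interacts with the importance-weighted losses. We would borrow this computation verbatim from the proof of Lemma~\ref{lemma:actual regret bd}, checking that the reduced exploration floor $\varepsilon=T^{-(1/2-\beta)}/\sqrt K$ still yields only an $O(K\log T)$ factor.
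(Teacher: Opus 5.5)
Your route is the paper's. You run the mirror-descent master inequality on the corrupted estimates and bound the extra term $\langle \widetilde\ell^{\,c}_t-\widetilde\ell_t,\,p_t-q\rangle$ by $2\|\widetilde\ell^{\,c}_t-\widetilde\ell_t\|_\infty$, so it sums to $2C_T$. You then pay $O(K\varepsilon T)=O(\sqrt{K}\,T^{1/2+\beta})$ for moving the comparator into $\DE$.

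The gap is the regularizer bracket $\lambda\sum_t[R_\varepsilon(q)-R_\varepsilon(p_t)]$. You correctly flag that it costs $\lambda TK\ln(1/\varepsilon)\asymp\sqrt{K}\,T^{1/2+\beta}\log T$, which is not dominated by $2T^\beta K\sqrt{T}$ for fixed $K$. But your remedy does not work:
\begin{itemize}
\item First-order optimality of $p^\star_T$ gives $f_{\lambda,\varepsilon}(p_t)\ge f_{\lambda,\varepsilon}(p^\star_T)$, which is the wrong direction for an upper bound.
\item Swapping $q$ for $p^\star_T$ leaves $\lambda\sum_a\ln(p_{t,a}/p^\star_{T,a})$, which again admits only the bound $\lambda K\ln(1/\varepsilon)$.
\item The $\gamma_T^2\log^2T/T^{2\beta}$ term does not come from this bracket. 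It is the $\eta\lambda^2/\varepsilon^2$ variance term of \eqref{eq:master-eqn-corr}, i.e.\ the contribution of $\lambda\nabla R_\varepsilon$ that your own third bullet already bounds by $O(K\sqrt{T})$.
\end{itemize}
The paper's written proof passes from \eqref{eq:master-eqn-corr} to its bound on $\E\langle\mu,\pbar-y\rangle$ without the $\lambda K\log(1/\varepsilon)$ term that the uncorrupted analysis of Lemma~\ref{lemma:actual regret bd} carries. So there is nothing to borrow there.

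The fix is to stop at the linearized bound rather than invoking convexity. The master inequality actually controls $\sum_t\E\langle\nabla f_{\lambda,\varepsilon}(p_t),\,p_t-q\rangle$, and since $\sum_a(p_{t,a}-q_a)=0$,
\begin{align*}
\langle\nabla R_\varepsilon(p_t),\,p_t-q\rangle=\sum_{a=1}^K\Bigl(\frac{q_a}{p_{t,a}}-1\Bigr)\ge -K,
\qquad\text{hence}\qquad
\langle\mu,\,p_t-q\rangle\le\langle\nabla f_{\lambda,\varepsilon}(p_t),\,p_t-q\rangle+\lambda K .
\end{align*}
The regularization then costs $T\lambda K=\sqrt{K}\,T^{1/2+\beta}\le K\sqrt{T}\,T^{\beta}$, with no logarithm, and the rest of your accounting goes through.

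Separately, your ``main obstacle'' is misplaced. Since $p_{t,a}^{2-\alpha}\le p_{t,a}$, you have $\E\sum_a(\widehat\ell^{\,c}_{t,a})^2p_{t,a}^{2-\alpha}\le\E\sum_a\mathbf{1}\{A_t=a\}/p_{t,a}=K$ for every $\alpha\in[0,1]$, regardless of the floor. The $\log K$ versus $\log T$ dichotomy comes instead from the initial divergence $D_{\phi_\alpha}(q,p_1)\le C_\alpha(K,\varepsilon)$. With the new floor, $\log(1/\varepsilon)=\tfrac12\log K+(\tfrac12-\beta)\log T\le\log T$ whenever $K\le T$.
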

We defer the proof to Section~\ref{proof:regret-corr}.
\section{Illustrative simulations}
\label{simulation}
In this section, we present the numerical simulations for Algorithm~\ref{alg:st-exp3} for a multi-armed Bernoulli Bandit problem. We choose $\setpz=\frac{1}{\sqrt{T}}$, $\pen=\frac{\log^2 T}{\sqrt{KT}}$ and $\vars=\frac{\log(T)^3}{\sqrt{T}}$.

First, let us consider the scenario where we have a unique optimal arm: we set $\mu=(0.9,0.3,0.1)^\top$ and $T=10^5$ averaged over $1000$ independent experiments. We choose $\alpha=1$, that is, the mirror map is negative entropy.
\begin{figure}[htbp]
    \centering
    \hfill
    \begin{minipage}[t]{0.3\linewidth}
        \centering
        \includegraphics[trim=0 30 0 50, clip, width=\linewidth]{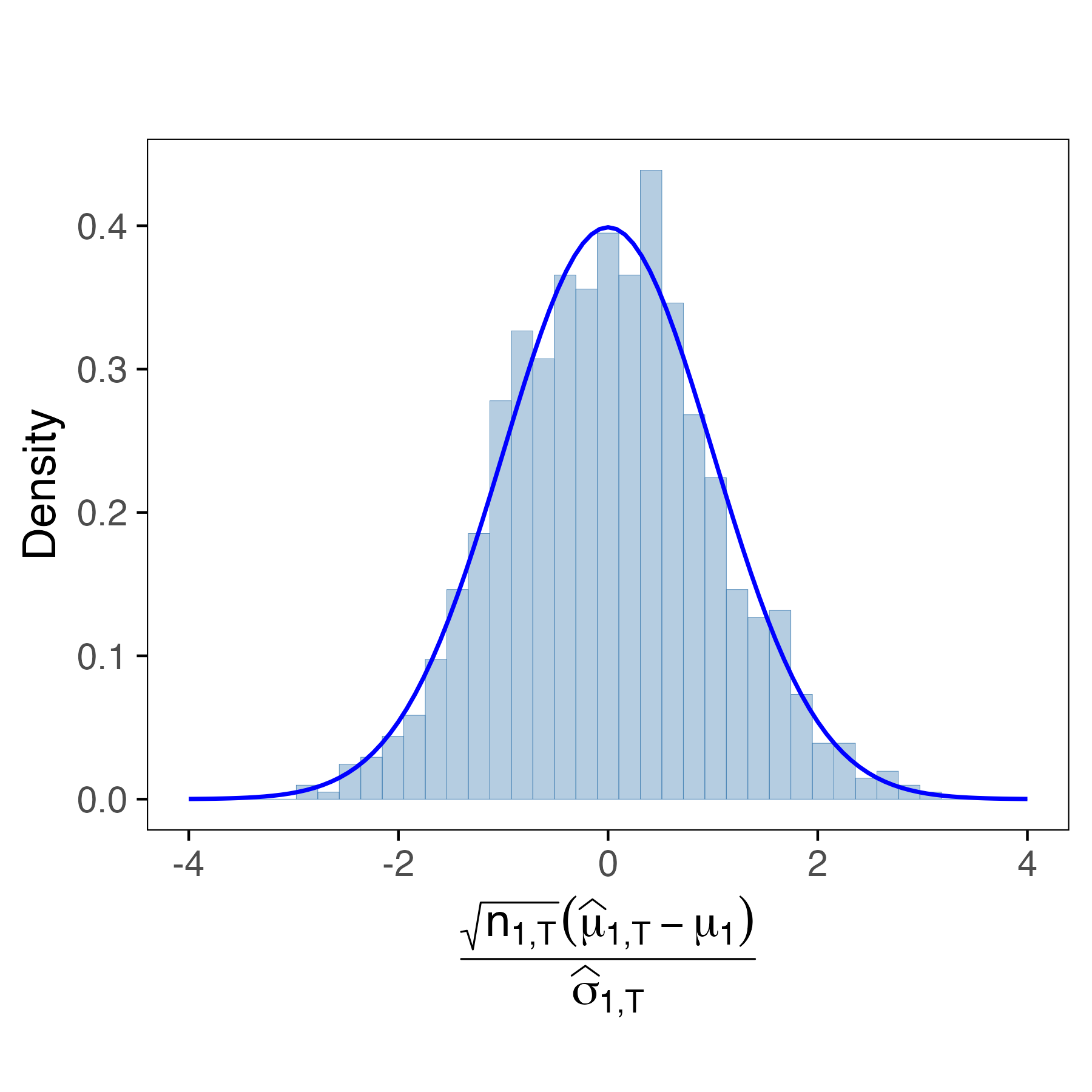}
    \end{minipage}\hfill
    \begin{minipage}[t]{0.3\linewidth}
        \centering
        \includegraphics[trim=0 30 0 50, clip, width=\linewidth]{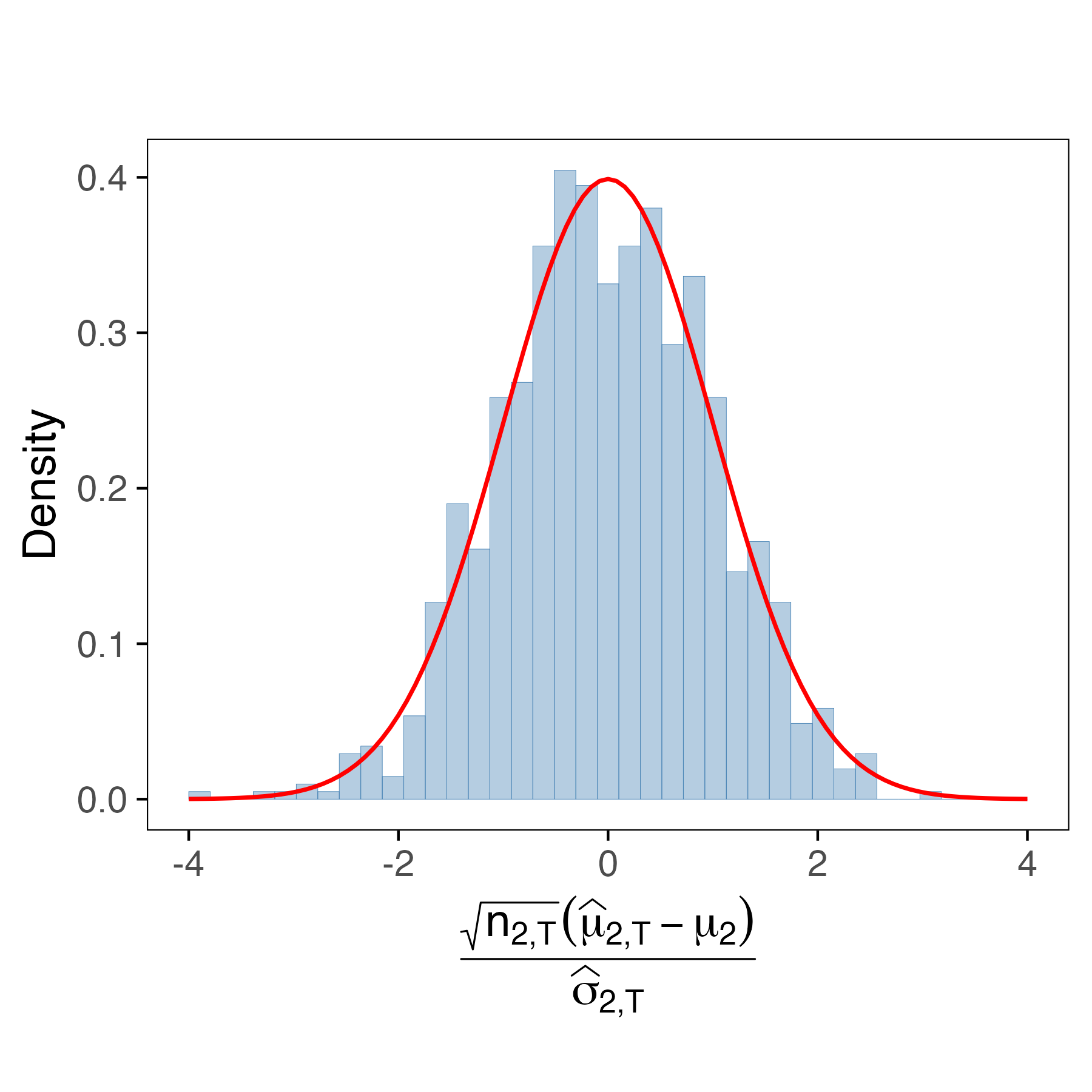}
    \end{minipage}\hfill
    \begin{minipage}[t]{0.3\linewidth}
        \centering
        \includegraphics[trim=0 30 0 50, clip, width=\linewidth]{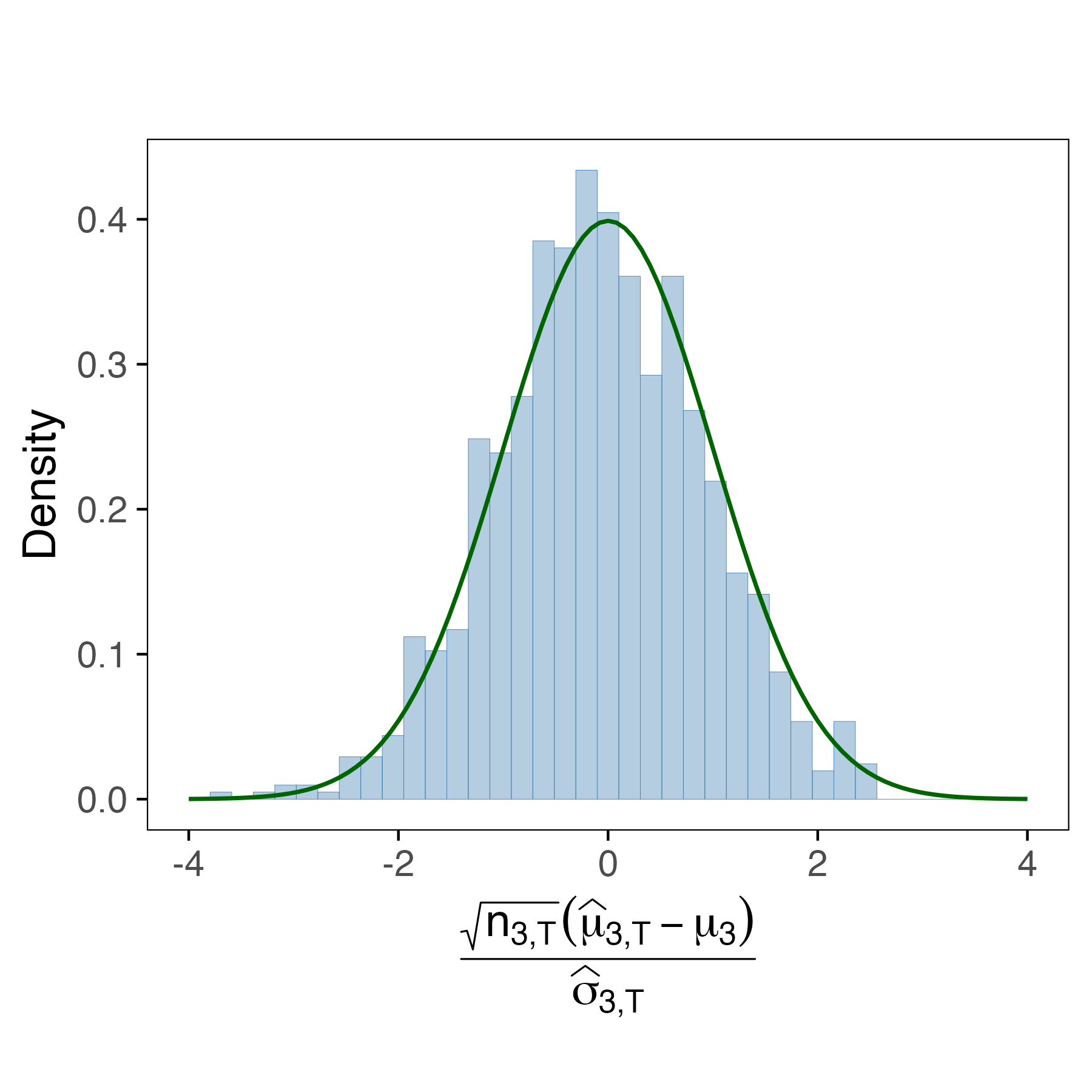}
    \end{minipage}
    \hfill
    \caption{Empirical behavior for Algorithm~\ref{alg:st-exp3} for Bernoulli bandit with $\mu=(0.9,0.3,0.1)^\top$ and $\alpha=1$ : standardized estimation errors \(\sqrt{n_{a,T}}(\widehat{\mu}_{a,T} - \mu_a)/\widehat{\sigma}_{a,T}\) are approximately standard normal.}
    \label{fig:uneq_means_std}
\end{figure}

\noindent Figure~\ref{fig:uneq_means_std} shows the standardized estimation errors against the standard Gaussian density, confirming the asymptotic normality of the sample means when we have a unique optimal arm. We also look at the empirical coverage plots for a sequence of nominal confidence levels $(1-\alpha_0)$ ranging from $0.75$ to $0.99$ --- in Figure~\ref{fig:uneq_means_cov}, the solid blue, red and green curves show the empirical coverage probabilities for the three arms, with shaded bands indicating the standard errors, while the black dashed line represents the target nominal coverage level. Throughout the entire range, the empirical coverage closely tracks the target level for each arm, providing strong evidence for the asymptotic validity of the confidence intervals.

\begin{figure}[htbp]
    \centering
    \hfill
    \begin{minipage}[t]{0.3\linewidth}
        \centering
        \includegraphics[trim=0 0 0 0, clip, width=\linewidth]{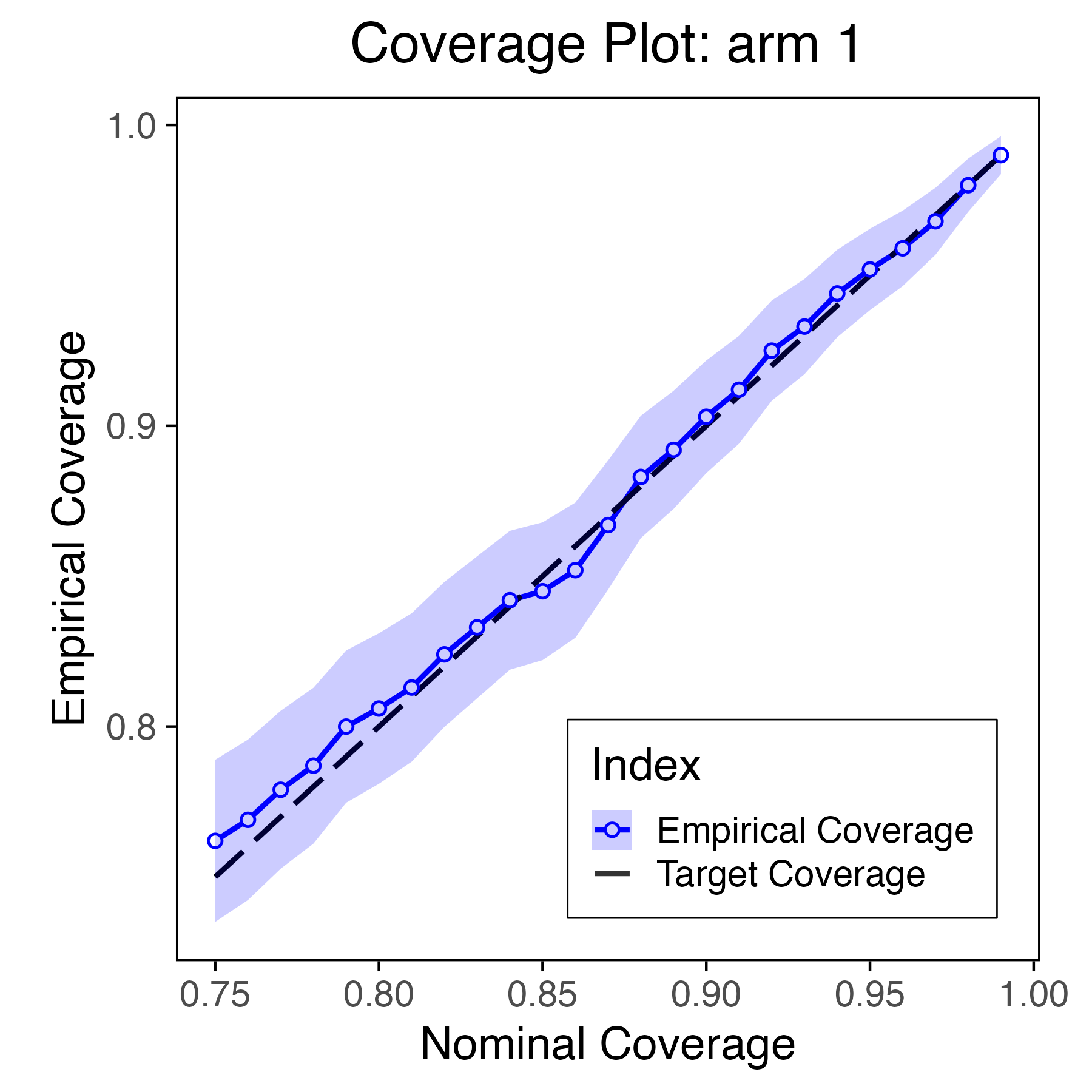}
    \end{minipage}\hfill
    \begin{minipage}[t]{0.3\linewidth}
        \centering
        \includegraphics[trim=0 0 0 0, clip, width=\linewidth]{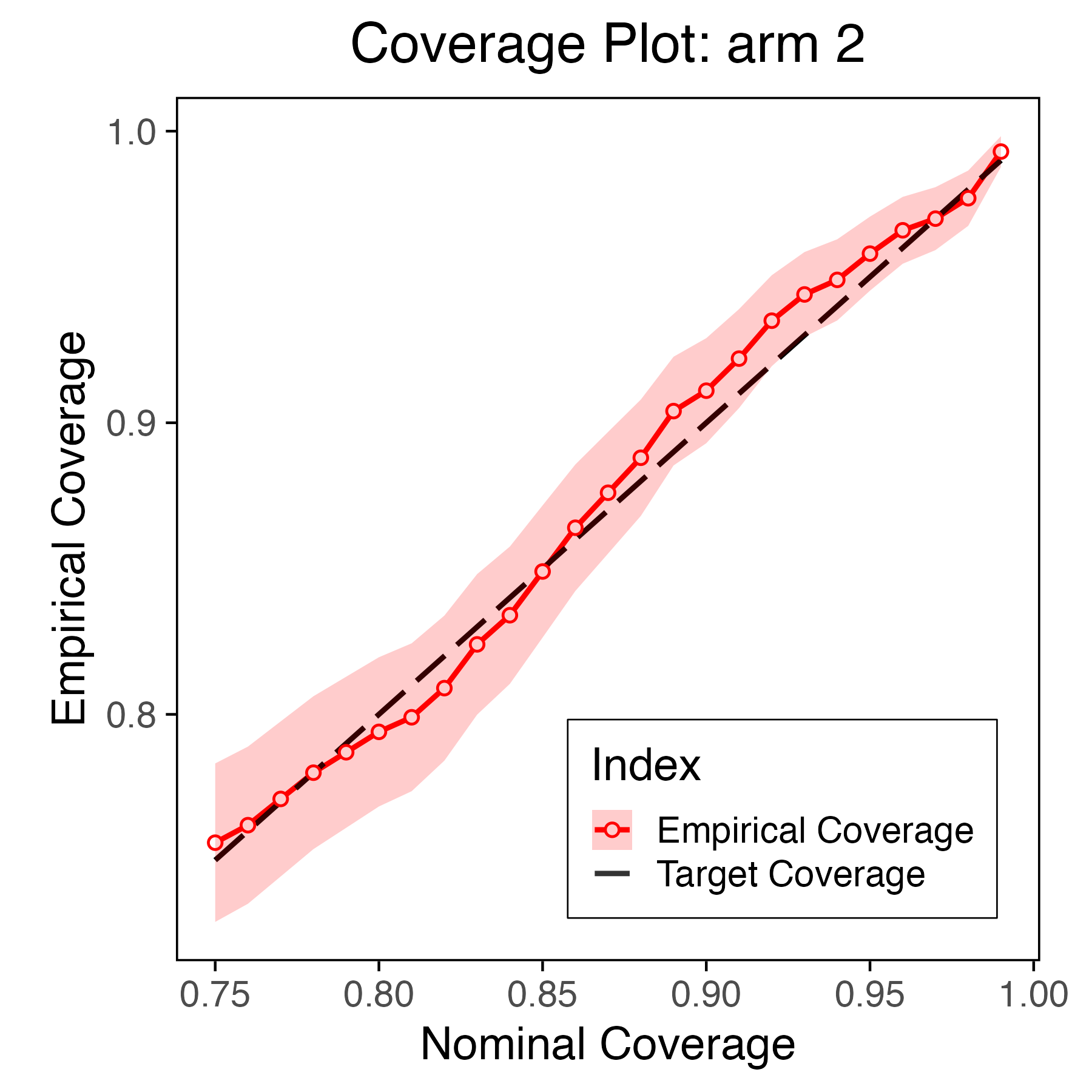}
    \end{minipage}\hfill
    \begin{minipage}[t]{0.3\linewidth}
        \centering
        \includegraphics[trim=0 0 0 0, clip, width=\linewidth]{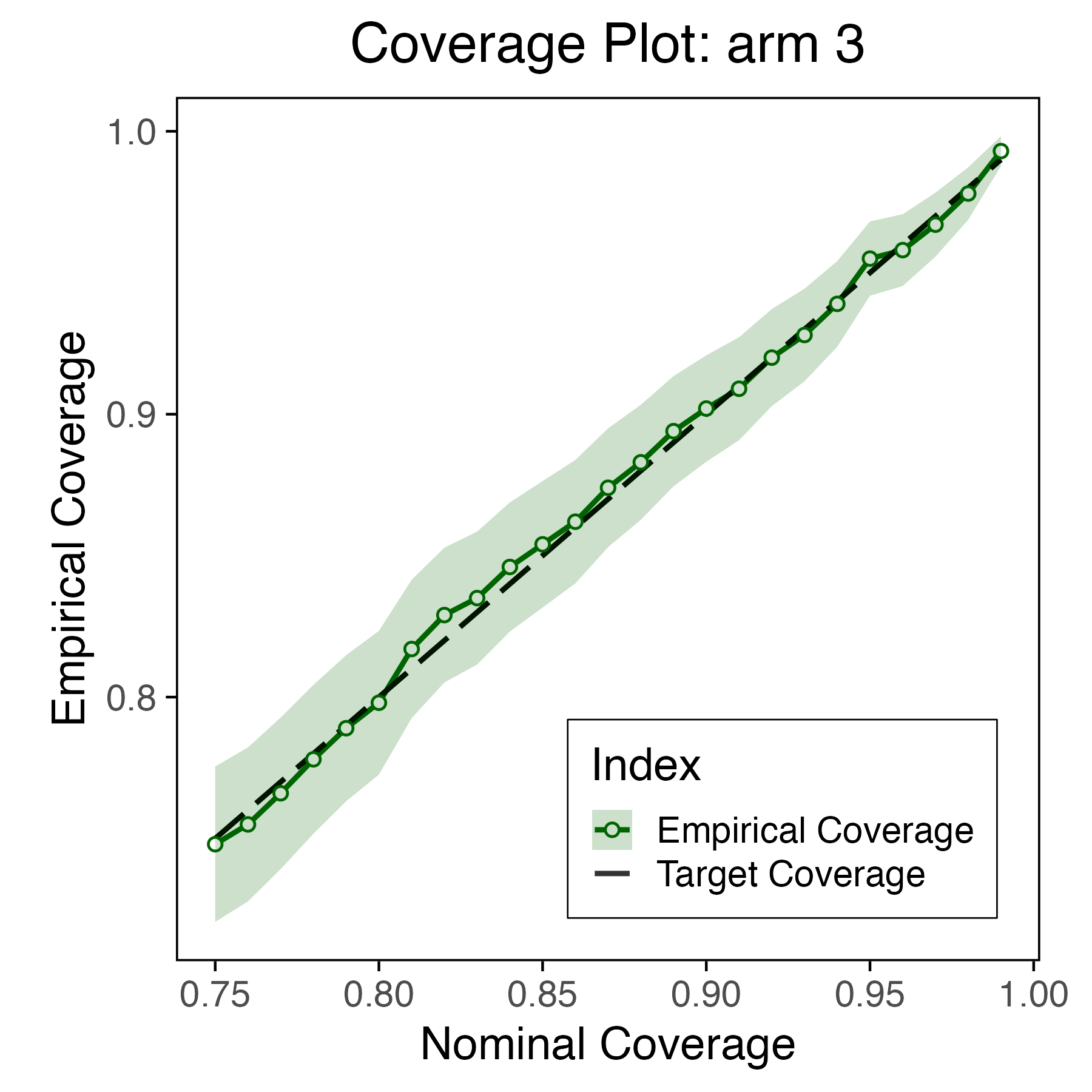}
    \end{minipage}
    \hfill
    \caption{Empirical behavior for Algorithm~\ref{alg:st-exp3} for Bernoulli bandit with $\mu=(0.9,0.3,0.1)^\top$: empirical coverage probabilities nearly aligned with diagonal.}
    \label{fig:uneq_means_cov}
\end{figure}

\noindent Next, we turn to the setting with identical arms -- consider each arm having success probability $0.7$ in a three-armed Bernoulli bandit setup. We take $T=10^5$ and $1000$ independent trials. We consider the mirror map $\phi_\alpha(\cdot)$ from \eqref{defn:mirror-map-alpha} with $\alpha=\frac{1}{2}$.

\begin{figure}[htbp]
    \centering
    \hfill
    \begin{minipage}[t]{0.3\linewidth}
        \centering
        \includegraphics[trim=0 50 0 30, clip, width=\linewidth]{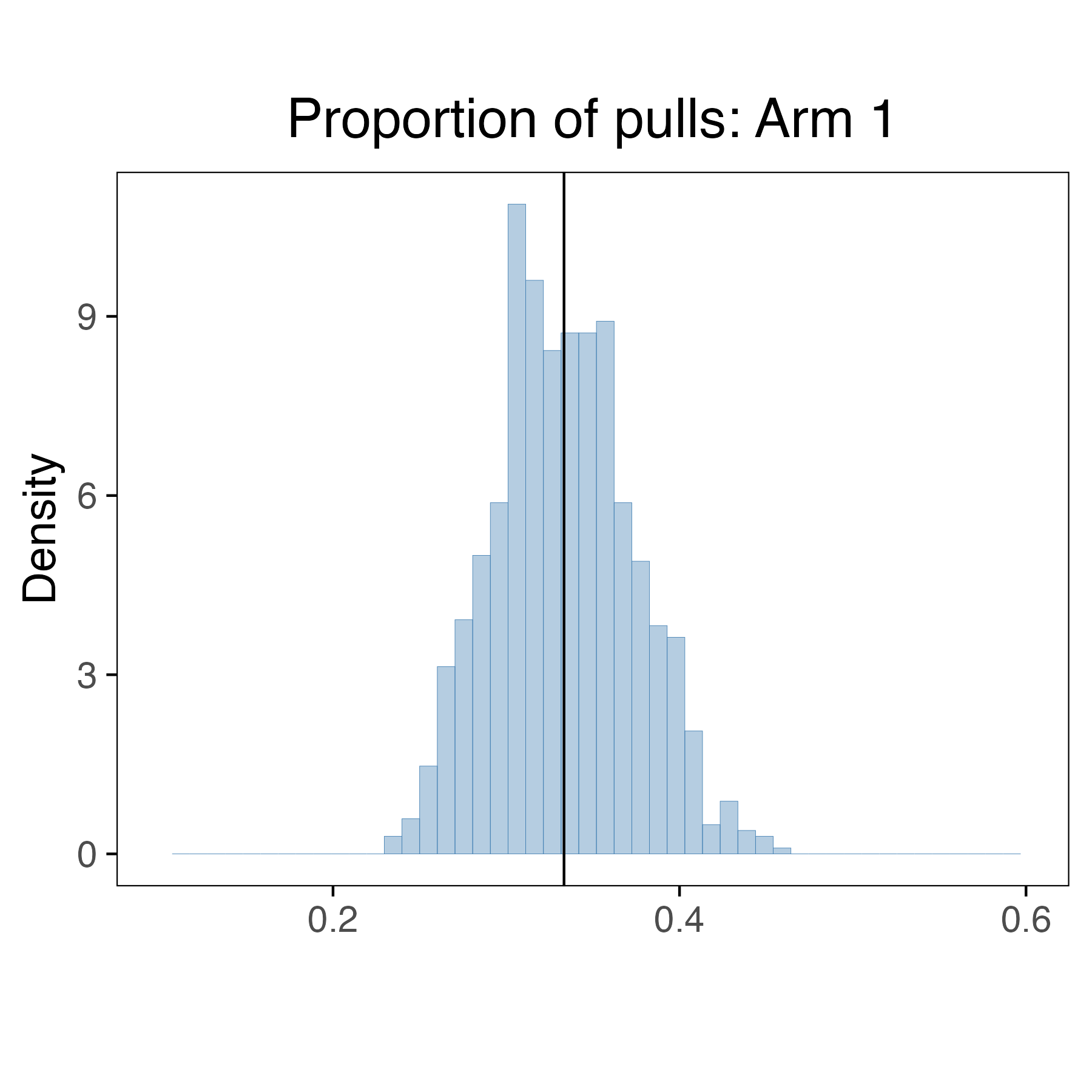}
    \end{minipage}\hfill
    \begin{minipage}[t]{0.3\linewidth}
        \centering
        \includegraphics[trim=0 50 0 30, clip, width=\linewidth]{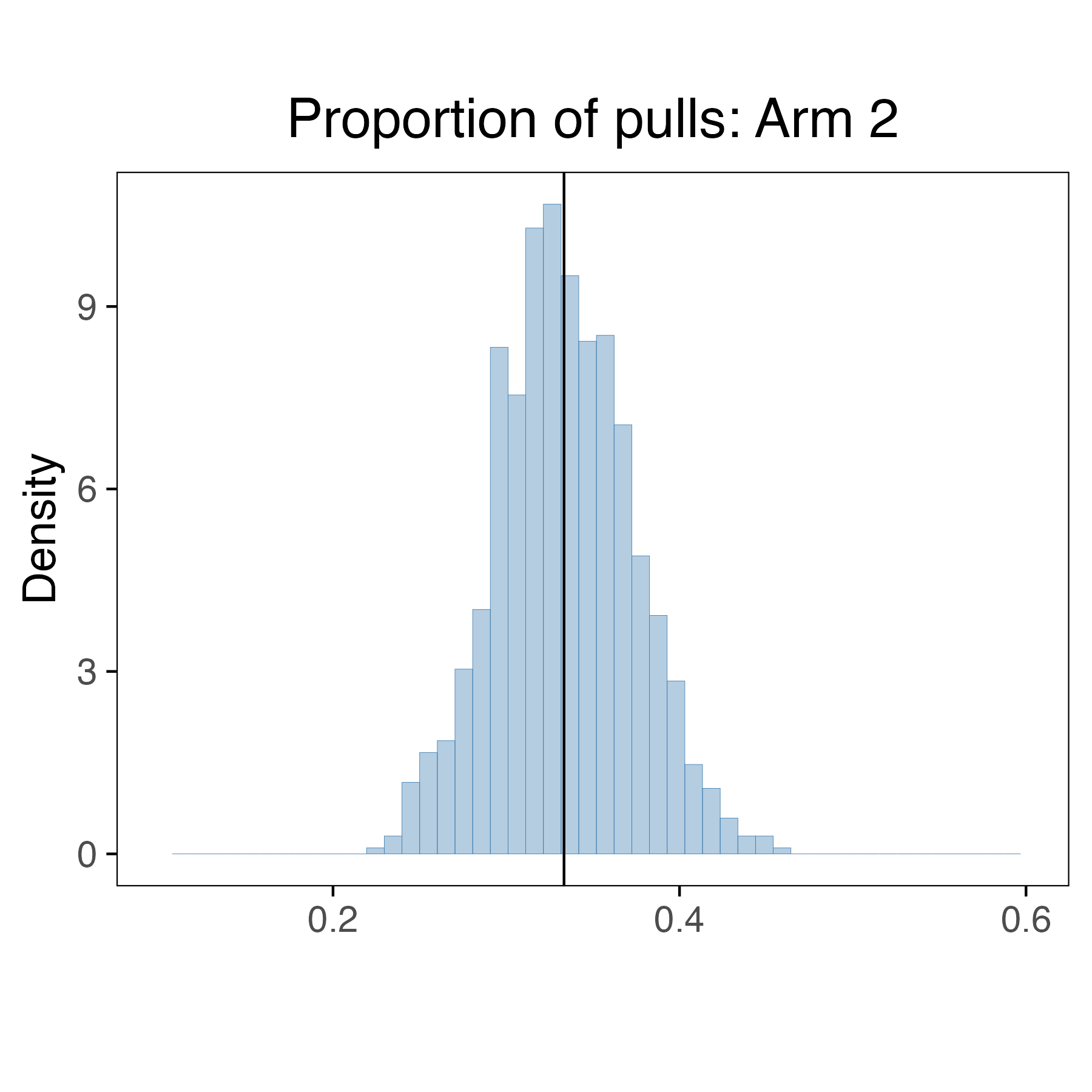}
    \end{minipage}\hfill
    \begin{minipage}[t]{0.3\linewidth}
        \centering
        \includegraphics[trim=0 50 0 30, clip, width=\linewidth]{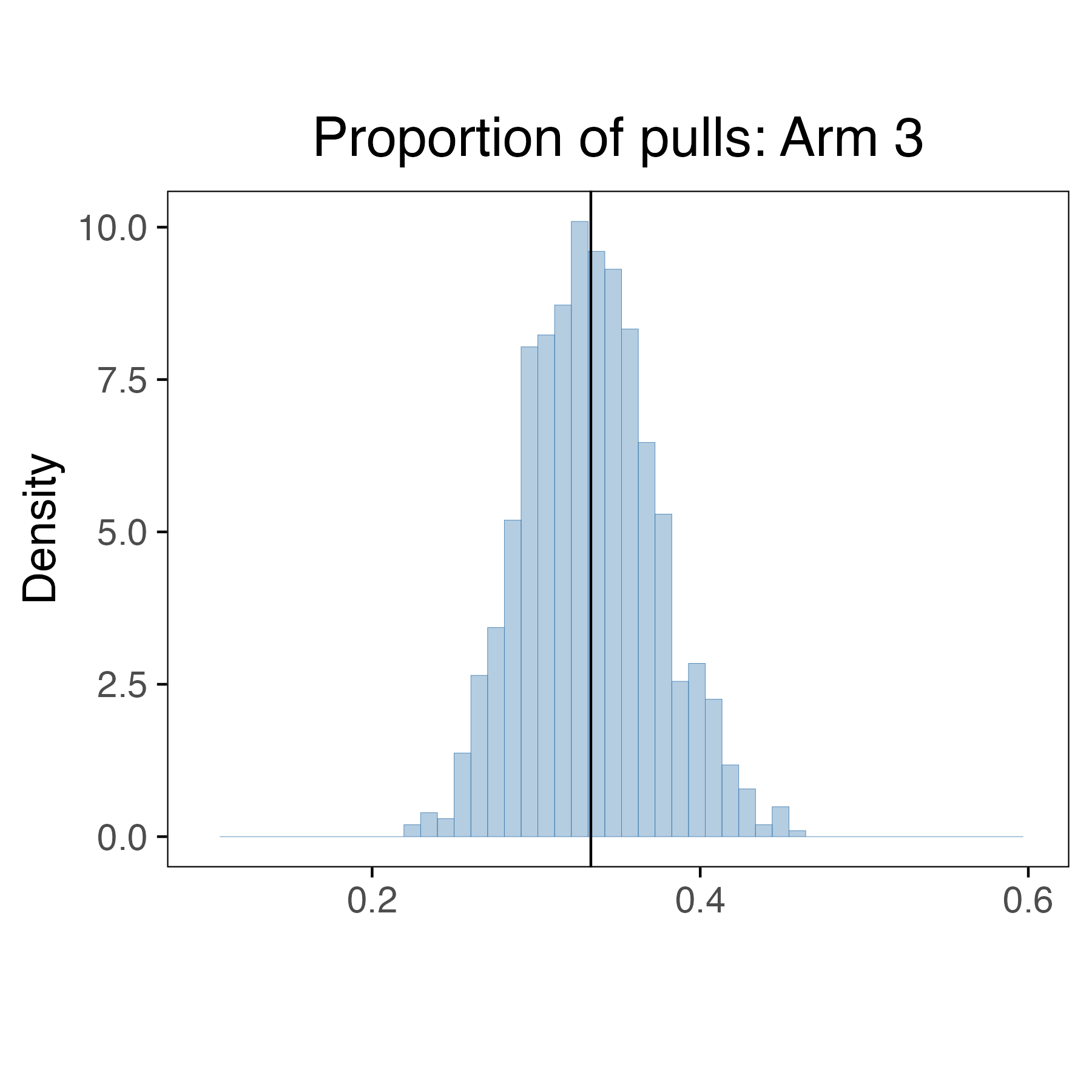}
    \end{minipage}
    \hfill
    \caption{Empirical behavior for Algorithm~\ref{alg:st-exp3} for Bernoulli bandit with $\mu=(0.7,0.7,0.7)^\top$ and $\alpha=1/2$: the proportion of pulls concentrate around $1/3$ for each arm.}
    \label{fig:prop_pulls_eq}
\end{figure}

\begin{figure}[htbp]
    \centering
    \hfill
    \begin{minipage}[t]{0.3\linewidth}
        \centering
        \includegraphics[trim=0 30 0 50, clip, width=\linewidth]{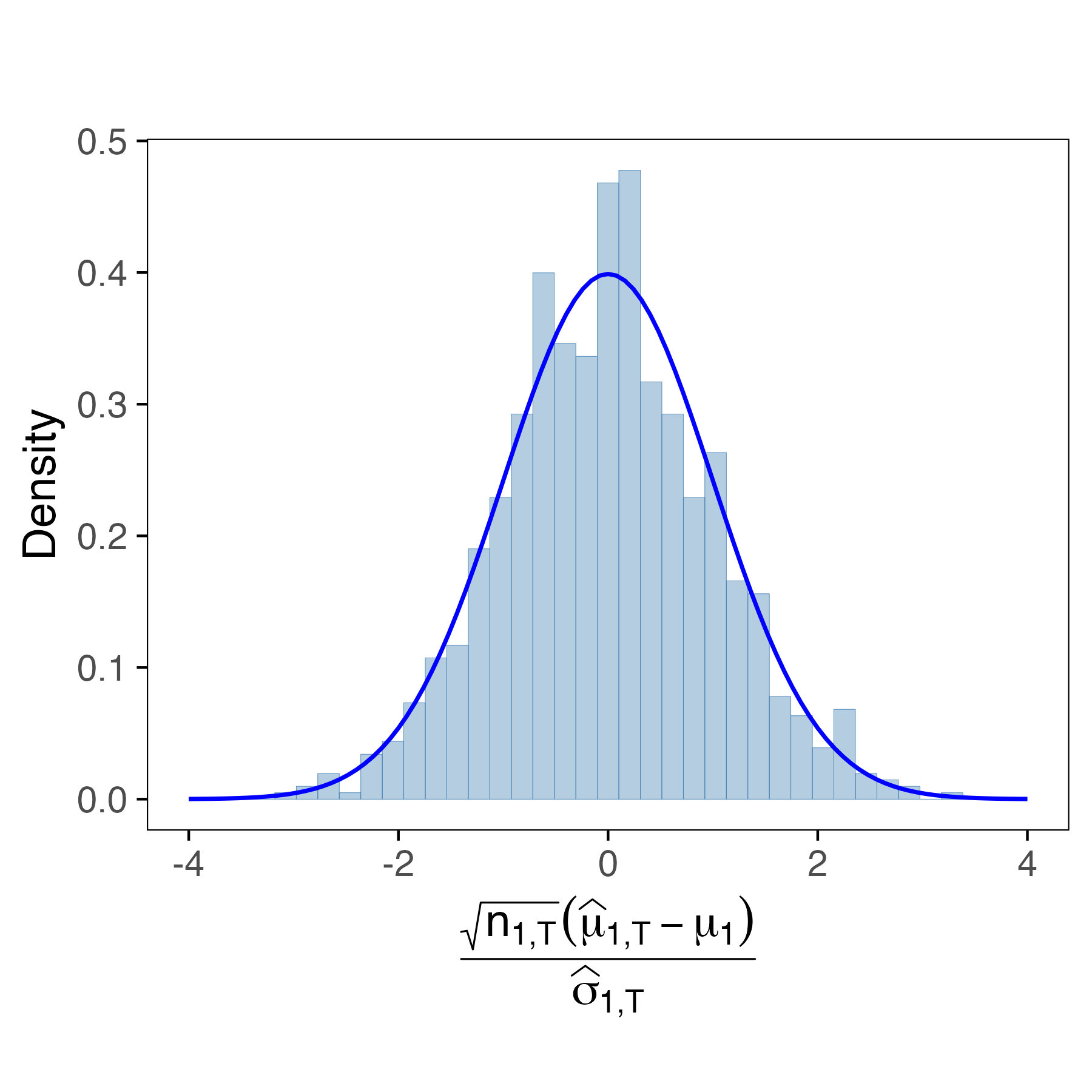}
    \end{minipage}\hfill
    \begin{minipage}[t]{0.3\linewidth}
        \centering
        \includegraphics[trim=0 30 0 50, clip, width=\linewidth]{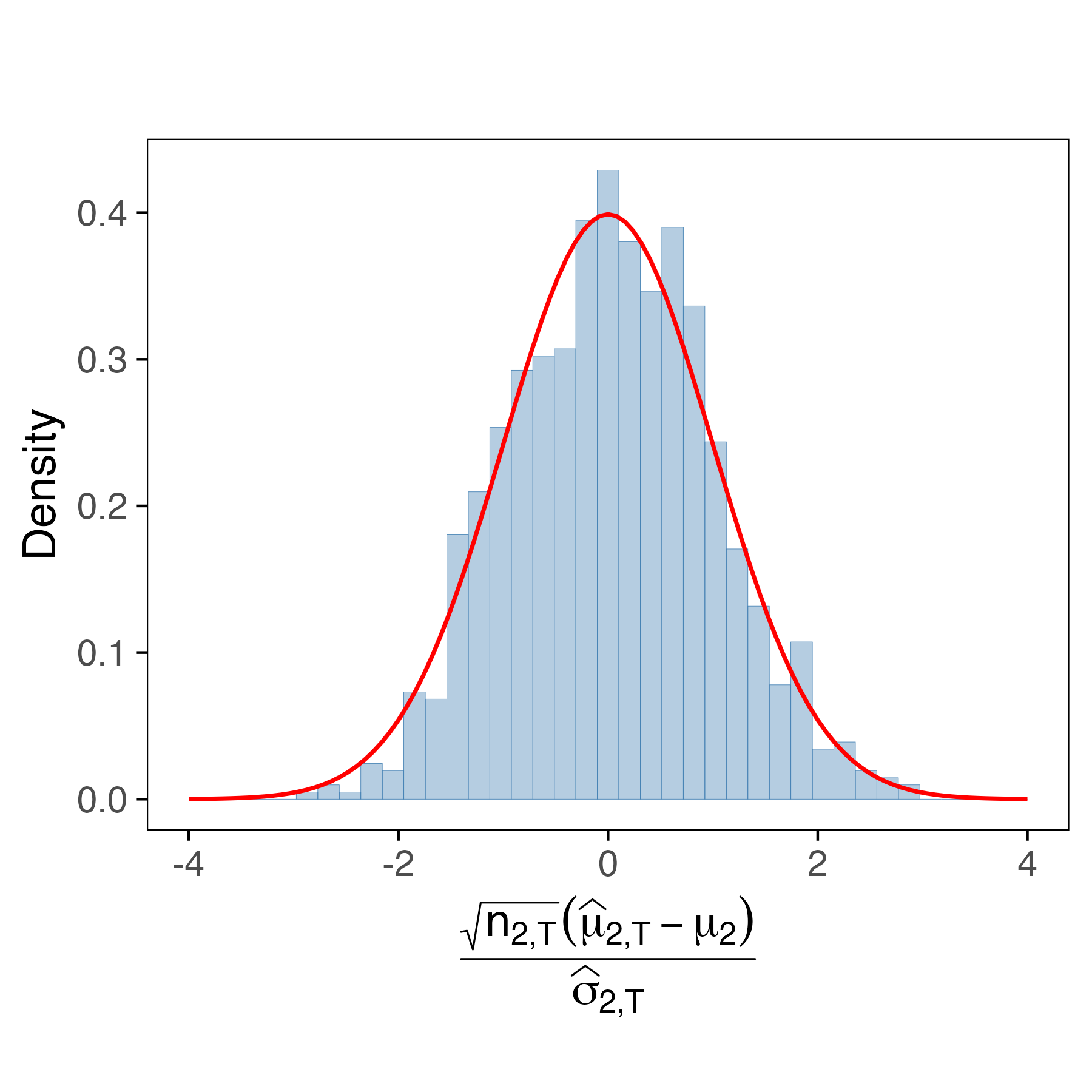}
    \end{minipage}\hfill
    \begin{minipage}[t]{0.3\linewidth}
        \centering
        \includegraphics[trim=0 30 0 50, clip, width=\linewidth]{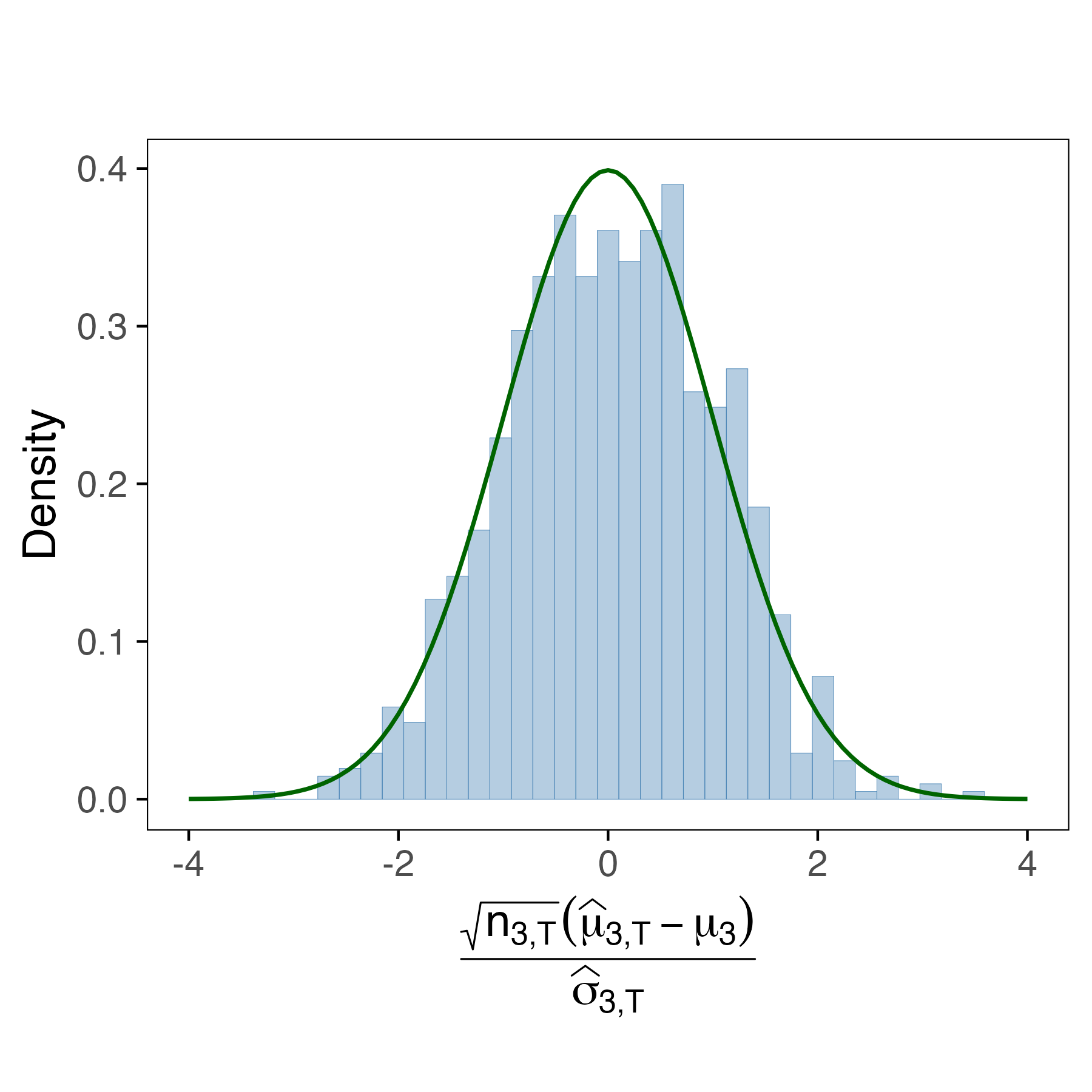}
    \end{minipage}
    \hfill
    \caption{Empirical behavior for Algorithm~\ref{alg:st-exp3} for Bernoulli bandit with $\mu=(0.7,0.7,0.7)^\top$ and $\alpha=1/2$ : standardized estimation errors \(\sqrt{n_{a,T}}(\widehat{\mu}_{a,T} - \mu_a)/\widehat{\sigma}_{a,T}\) are approximately standard normal.}
    \label{fig:eq_means_std}
\end{figure}

\noindent From Figure~\ref{fig:prop_pulls_eq}, it is evident that stability holds for each arm with $n_{a,T}^\star=\frac{T}{3}$ confirming our result in Theorem~\ref{thm:berry}. On the other hand, in Figure~\ref{fig:eq_means_std}, the density of the standardized error matches the standard gaussian density as per our predicted theory.

\section{Proof of Theorem~\ref{thm:berry}}

\noindent We prove the theorem in two parts. First we prove that the proposed Algorithm~\ref{alg:st-exp3} is stable, which is then followed by the proof of the Berry-Esseen theorem.

\subsection*{Part (i): Proof of $L_1$ stability~\eqref{eq:L1-conv} : }

\noindent Let $ f_{\lmep}(x):=\langle \mu,x\rangle+\lambda R_\epsilon(x)$ and define
\begin{align}
    \xopt := \arg \min_{x \in \Delta_{\vars}} f_{\lmep}(x)
\end{align}

\noindent  Consider the following decomposition.
\begin{align*}
    \mathbb E\left[\left|\frac{\overline p_{T,a}}{\xtas}-1\right|\right] \le\mathbb E\left[\frac{\overline p_{T,a}}{\xtas}-\log\left(\frac{\overline p_{T,a}}{\xtas}\right)-1\right]+\mathbb E\left[\left|\log\left(\frac{\overline p_{T,a}}{\xtas}\right)\right|\right].
\end{align*}

\noindent Now observe that,
\begin{align*}
&\mathbb E
\left[
\left|
\log \left(
\frac{\overline p_{T,a}}{\xtas}
\right)
\right|
\right] \\[8pt]
&=
\mathbb E
\left[
\left|
\log \left(
\frac{\overline p_{T,a}}{\xtas}
\right)
\right|
\mathbf 1
\left\{
\frac{\overline p_{T,a}}{\xtas}
\in (1-\delta_T,1+\delta_T)
\right\}
\right]
\nonumber \\[8pt]
&+
\mathbb E
\left[
\left|
\log \left(
\frac{\overline p_{T,a}}{\xtas}
\right)
\right|
\mathbf 1
\left\{
\frac{\overline p_{T,a}}{\xtas}
\notin (1-\delta_T,1+\delta_T)
\right\}
\right]
\\[8pt]
& \overset{(i)}{\le}
|\log(1-\delta_T)|
\vee
\log(1+\delta_T)
\nonumber
+
\log\left(
\frac1{\vars}
\right)
\mathbb P
\left(
\frac{\overline p_{T,a}}{p_{T,a}^{*}}
\notin
(1-\delta_T,1+\delta_T)
\right)
\\[8pt]
&
\overset{}{=}
|\log(1-\delta_T)|
\vee
\log(1+\delta_T)
\nonumber
+
\log\left(
\frac1{\vars}
\right)
\,
\mathbb P
\left(
\left|
\frac{\overline p_{T,a}}{\xtas}-1
\right|
>\delta_T
\right)
\\[8pt]
&\overset{(ii)}{\le}
|\log(1-\delta_T)|
\vee
\log(1+\delta_T)
\nonumber
+
\log\left(
\frac1{\vars}
\right)
\,
\mathbb E
\left[
\frac{\overline p_{T,a}}{\xtas}
-
\log\left(
\frac{\overline p_{T,a}}{\xtas}
\right)
-1
\right]
\left(
\frac4{\delta^2_T}+10
\right)
\\[8pt]
&\le
|\log(1-\delta_T)|
\vee
\log(1+\delta_T)
\nonumber
+
\log\left(
\frac1{\vars}
\right)
\left\{
\E\left[IS(\xT,p^\star_{\lmep})\right]
\right\}
\left(
\frac4{\delta^2_T}+10
\right).
\end{align*}

\noindent
In the above chain of inequalities, $(i)$ holds because both the the weights  $\xtas, \overline{p}_{T,a} \in \Delta_{\vars}$ and $(ii)$ follows from the lemma stated below.
\begin{lemma}\label{lemma:ZT-concet}
    Let $\{Z_T \}$ be a sequence of random variables and define $g(x):=x-\log x-1$. Then for any $\delta>0$ we have
    \begin{align*}
        \mathbb P(|Z_T - 1|>\delta) \leq \mathbb E [g(Z_T)] \cdot \left( \frac{4}{\delta^2} + 10 \right).
    \end{align*}
\end{lemma}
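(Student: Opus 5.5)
The plan is a Markov-type argument applied to the nonnegative convex function $g$. Throughout, $Z_T>0$, which is the case where the lemma is used, since $Z_T=\overline p_{T,a}/p^\star_{T,a}$ with both quantities in $\DE$. First record the elementary facts about $g(x)=x-\log x-1$ on $(0,\infty)$:
\begin{itemize}
\item $g\ge 0$ and $g(1)=0$;
\item $g'(x)=1-1/x$, so $g$ is decreasing on $(0,1]$ and increasing on $[1,\infty)$.
\end{itemize}
Consequently, on the event $\{|Z_T-1|>\delta\}$ we have
\[
g(Z_T)\ \ge\ m_\delta:=\min\{g(1+\delta),\,g(1-\delta)\},
\]
where the term $g(1-\delta)$ is dropped (or read as $+\infty$) when $\delta\ge 1$, because then $Z_T<1-\delta$ is impossible for positive $Z_T$. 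Markov's inequality applied to $g(Z_T)\ge 0$ gives
\[
\mathbb P(|Z_T-1|>\delta)\le \mathbb P\bigl(g(Z_T)\ge m_\delta\bigr)\le \frac{\mathbb E[g(Z_T)]}{m_\delta}.
\]
It therefore suffices to show $1/m_\delta\le 4/\delta^2+10$, that is, $m_\delta\ge \delta^2/(4+10\delta^2)$.

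Next, bound each side separately.
\begin{itemize}
\item Right side. Use the standard inequality $\log(1+\delta)\le \delta-\frac{\delta^2}{2(1+\delta)}$ for $\delta\ge 0$. It follows by differentiating $h(\delta)=\delta-\log(1+\delta)-\frac{\delta^2}{2(1+\delta)}$: one finds $h(0)=0$ and $h'(\delta)=\frac{\delta^2}{2(1+\delta)^2}\ge 0$. This yields $g(1+\delta)\ge \frac{\delta^2}{2(1+\delta)}$.
\item Left side, when $0<\delta<1$. From the series $-\log(1-\delta)=\sum_{k\ge1}\delta^k/k\ge \delta+\delta^2/2$ we get $g(1-\delta)\ge \delta^2/2$.
\end{itemize}

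Finally, compare the two lower bounds with the target.
\begin{itemize}
\item Since $4+10\delta^2\ge 2$, we have $\delta^2/2\ge \delta^2/(4+10\delta^2)$.
\item The inequality $\frac{\delta^2}{2(1+\delta)}\ge \frac{\delta^2}{4+10\delta^2}$ is equivalent to $4+10\delta^2\ge 2+2\delta$, i.e.\ $10\delta^2-2\delta+2>0$. This holds for all real $\delta$, since the discriminant $4-80$ is negative.
\end{itemize}
Hence $m_\delta\ge \delta^2/(4+10\delta^2)$ in every case, which proves the claim.

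I do not expect a real obstacle here. The only point needing care is the right-tail lower bound on $g$: it grows only linearly for large $\delta$, which is exactly why the constant term $10$ appears alongside $4/\delta^2$. The $\delta\ge1$ case is covered by the positivity of $Z_T$.
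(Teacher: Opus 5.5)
Your proof is correct, but it is organized differently from the paper's. The paper splits on whether $Z_T\in[1/2,2]$. On that window it uses the quadratic lower bound $g(z)\ge (z-1)^2/4$, and off it uses the floor $g(Z_T)\ge\min\{g(1/2),g(2)\}>1/10$. It applies Markov's inequality to each piece, so the additive form $4/\delta^2+10$ comes straight out of the union bound. You instead use the unimodality of $g$ to get the single exact threshold $m_\delta=\min\{g(1+\delta),g(1-\delta)\}$ and apply Markov once. You then need explicit logarithmic inequalities and a final algebraic comparison to reach the stated constant. Your route is slightly more computational, but it actually gives the sharper bound $\mathbb{P}(|Z_T-1|>\delta)\le \mathbb{E}[g(Z_T)]\cdot 2(1+\delta)/\delta^2$. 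This is better both as $\delta\to 0$ (leading constant $2$ instead of $4$) and for large $\delta$ (it tends to $0$ rather than $10$). The paper's split is cruder but more modular. It needs only a local quadratic lower bound and a numerical floor outside a fixed window, so it transfers to other divergences that are only locally quadratic. You also handle explicitly the implicit positivity of $Z_T$ and the case $\delta\ge 1$, which the paper leaves tacit.
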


\noindent Therefore,
\begin{align}
\notag
&\mathbb E
\left[
\left|
\frac{\overline p_{T,a}}{\xtas}-1
\right|
\right] \\[8pt] \notag
&\le
\E\left[IS(\xT,p^\star_{\lmep})\right]
\ + \ 
|\log(1-\delta_T)|
\vee
\log(1+\delta_T)
+
\log\left(
\frac1{\vars}
\right)
\left\{
\E\left[IS(\xT,p^\star_{\lmep})\right]
\right\}
\left(
\frac4{\delta^2_T}+10
\right) \\[8pt] 
&
\leq |\log(1-\delta_T)|
\vee
\log(1+\delta_T) 
 \ + \  15 \cdot \log \left(
\frac1{\vars}
\right)
\left\{
\E\left[IS(\xT,p^\star_{\lmep})\right]
\right\} \frac{1}{\delta^2_T}
\end{align}

\noindent The above inequality  holds for any $\delta_T$ converging to zero. We wish to find a choice which provides a tight upper bound. Note that by Taylor expansion we have $\log (1+ \delta_T) = \delta_T \ + \ o(\delta_T)$ and consequently, 
\begin{align}
   |\log(1-\delta_T)|
\vee
\log(1+\delta_T)  
= \delta_T + o(\delta_T)
\end{align}

\noindent Since $o(\delta_T)$ converges to $0$ faster than $\delta_T$ we ignore this additional term. Hence, the upper bound takes the form $\delta_T + x /\delta^2_T$. We choose $\delta_T$ such that this upper bound gets minimized. Define $f(\delta) = \delta + x/\delta^2$ and observe that $f'(\delta) = 0$ when $\delta = (2x)^{1/3}$ and $f''(\delta) > 0$. These observations show that 
\begin{align} \label{eqn:Ltwo-bdto}
\mathbb E
\left[
\left|
\frac{\overline p_{T,a}}{\xtas}-1
\right|
\right] 
\lesssim  \left( 30^{1/3} + \left(\frac{15}{2}\right)^{1/3} \right)  \left( \cdot \log\left(
\frac1{\vars}
\right)
\left\{
\E\left[IS(\xT,p^\star_{\lmep})\right]
\right\}  \right)^{1/3}
\end{align}


\noindent To quantify this upper bound more explicitly, recall that $\pen$ is the regularizer, $\setpz$ is the step size and $\vars$ is the lower bound on the probability simplex $\Delta$.  Now consider Lemma~\ref{lemma:IS-bound} stated below.
\begin{lemma}\label{lemma:IS-bound}
    Let $\pen$ be the regularizer, $\setpz$ be the step size and $\vars$ be the lower bound on the probability simplex. Then the following holds
    \begin{align*}
        \E\left[IS(\xT,p^\star_{\lmep})\right]\le \frac{C_\alpha(K,\varepsilon)}{\setpz\pen T}+\frac{\setpz K}{\pen}+\frac{\setpz \pen}{\vars^2}
    \end{align*}
    where,
\begin{align*}
    C_\alpha(K, \vars) = \begin{cases}
3K \log(\frac{1}{\vars}), & \text{if } \alpha \in \left[ 0, \frac{1}{3}\right), \\[6pt]
3 K \log K, & \text{if } \alpha \in \left[ \frac{1}{3} , 1\right].
\end{cases}
\end{align*}
\end{lemma}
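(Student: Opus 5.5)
We follow the standard route for mirror descent on a strongly convex objective: first convert the Itakura--Saito divergence into a suboptimality gap, then bound the averaged gap with an online-mirror-descent regret argument. Write $f_T(p) = \langle \mu, p\rangle + \lambda R_\varepsilon(p)$.

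\emph{Step 1: strong convexity in the Bregman sense.} Since $\langle\mu,p\rangle$ and $\tfrac1\varepsilon\sum_i p_i$ are linear, the Bregman divergence of $f_T$ equals $\lambda$ times that of $-\sum_i\ln p_i$, which is exactly $\lambda\, IS(p,q)$. Hence
\begin{align*}
f_T(p) - f_T(p^\star_T) = \langle \nabla f_T(p^\star_T), p - p^\star_T\rangle + \lambda\, IS(p, p^\star_T).
\end{align*}
By first-order optimality of $p^\star_T$ over the convex set $\Delta_\varepsilon$, the inner product is nonnegative for every $p\in\Delta_\varepsilon$. Therefore $\lambda\, IS(p,p^\star_T)\le f_T(p)-f_T(p^\star_T)$. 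Applying this with $p=\overline p_T$ and using convexity of $f_T$ (Jensen), we get
\begin{align*}
\lambda\,\mathbb{E}[IS(\overline p_T,p^\star_T)] \le \frac1T\,\mathbb{E}\sum_{t=1}^T \bigl(f_T(p_t)-f_T(p^\star_T)\bigr).
\end{align*}
The lemma therefore reduces to showing that the averaged optimization error is at most $\frac{C_\alpha}{\eta T}+\eta K+\frac{\eta\lambda^2}{\varepsilon^2}$.

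\emph{Step 2: the OMD regret bound.} Convexity gives $f_T(p_t)-f_T(p^\star_T)\le\langle\nabla f_T(p_t),p_t-p^\star_T\rangle$, and $\mathbb{E}[\widetilde\ell_t\mid\mathcal F_{t-1}]=\nabla f_T(p_t)$. We then apply the standard mirror-descent inequality with Bregman projection onto $\Delta_\varepsilon$, which gives
\begin{align*}
\sum_t\langle\widetilde\ell_t,p_t-p^\star_T\rangle\le \frac{D_{\phi_\alpha}(p^\star_T,z_1)}{\eta}+\sum_t\frac{\eta}{2}\|\widetilde\ell_t\|^2_{(\nabla^2\phi_\alpha(\xi_t))^{-1}}.
\end{align*}
This holds in the form valid for nonnegative losses, using the local-norm argument for $\phi_\alpha$ at a point $\xi_t$ between $p_t$ and $z_{t+1}$. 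Nonnegativity of the losses is ensured because $\nabla R_\varepsilon = -1/p + 1/\varepsilon \ge 0$ on $\Delta_\varepsilon$. For the initial divergence, with $z_1$ uniform, a direct computation gives $D_{\phi_\alpha}(p^\star_T,z_1)\lesssim K\log K$ for $\alpha\ge 1/3$, and $\lesssim K\log(1/\varepsilon)$ for small $\alpha$ (dominated by the log-barrier-like growth near $p_a=\varepsilon$). After multiplying by the constant-3 slack, this produces $C_\alpha(K,\varepsilon)/\eta$. The $\alpha$-threshold $1/3$ should come from comparing which of the two expressions is larger at $p_a\approx\varepsilon$.

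\emph{Step 3: variance terms.} Since $\nabla^2\phi_\alpha(p)_{ii}=p_i^{\alpha-2}$ and nonnegative losses keep $\xi_{t,i}\le p_{t,i}$, the local norm is at most $\sum_i p_{t,i}^{2-\alpha}\widetilde\ell_{t,i}^2$. We split $\widetilde\ell_t=\widehat\ell_t+\lambda\nabla R_\varepsilon(p_t)$ using $(a+b)^2\le 2a^2+2b^2$ and bound the two pieces separately:
\begin{align*}
\mathbb{E}\sum_i p_{t,i}^{2-\alpha}\widehat\ell_{t,i}^2 &\le \sum_i p_{t,i}^{1-\alpha}\le K,\\
\lambda^2\sum_i p_{t,i}^{2-\alpha}\bigl(1/\varepsilon-1/p_{t,i}\bigr)^2 &\le \lambda^2\sum_i p_{t,i}^{2-\alpha}/\varepsilon^2\le \lambda^2/\varepsilon^2.
\end{align*}
Summing over $t$, dividing by $T$, and combining with Step 2 and the $\lambda$ from Step 1 gives the three terms of the lemma. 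The main technical obstacle is Step 2: establishing the local-norm inequality with the Bregman projection onto the truncated simplex, and making sure the intermediate point $\xi_t$ is controlled by $p_t$. This is where the nonnegativity of $\widetilde\ell_t$, secured by the $\tfrac1\varepsilon\sum_ip_i$ term in $R_\varepsilon$, is essential.
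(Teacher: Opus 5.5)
Your proposal is correct and is essentially the paper's proof. Both use the identity $f_T(p)-f_T(p^\star_T)=\langle\nabla f_T(p^\star_T),p-p^\star_T\rangle+\lambda\,IS(p,p^\star_T)$ with first-order optimality, then the OMD inequality with Bregman projection onto $\Delta_\varepsilon$ (nonnegativity of $\widetilde\ell_t$ from the $\tfrac1\varepsilon\sum_i p_i$ shift places the intermediate point below $p_t$, so the local norm is at most $\sum_i p_{t,i}^{2-\alpha}\widetilde\ell_{t,i}^2$), and the same variance bounds $K$ and $\lambda^2/\varepsilon^2$. The initial divergence is the one piece you leave vague, but it is a short computation, $D_{\phi_\alpha}(y,z_1)=\bigl(K^{1-\alpha}-\sum_i y_i^\alpha\bigr)/(\alpha(1-\alpha))$. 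The threshold $1/3$ does not come from comparing the two expressions near $p_a\approx\varepsilon$: for $\alpha\ge 1/3$ it gives $1/\alpha\le 3$, which with $\sum_i y_i^\alpha\ge 1$ yields $3K\log K$; for $\alpha<1/3$ it keeps $1-\alpha$ away from zero, which with $y_i\ge\varepsilon$ yields $3K\log(1/\varepsilon)$.
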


\noindent Therefore, it follows that
\begin{align*}
  & \log \left(\frac{1}{\vars}\right) \left[ \frac{C_\alpha(K, \vars)}{\setpz \pen T} + K\frac{\setpz}{\pen}  + \frac{\setpz \pen}{\varepsilon^2_T}  \right]\\[8pt]
   &\leq \frac{3K\log(1/\vars) C_\alpha(K, \vars)}{\setpz \pen T}
   + K\frac{\setpz \log(1/\vars)}{\pen}  + \frac{\setpz \pen \log(1/\vars)}{\varepsilon^2_T}
\end{align*}

\noindent Now, let us choose $\pen := \hTo \log T/\sqrt{T},$
 $\setpz = 1/\sqrt{T}$ and $\vars = \hTto/\sqrt{T}$. Then it follows that,
 \begin{align*}
    & \frac{3 K \log (\sqrt{T}/\hTto) C_\alpha(K, \vars)}{\hTo \log T }
   + K\frac{ \log(\sqrt{T}/\hTto)}{\hTo \log T}  + \frac{\hTo \log T \log(\sqrt{T}/\hTto)}{ \hTto^2 } \\[8pt]
   &\leq 
   \frac{3K C_\alpha(K, \vars)}{2\hTo  }
   + K\frac{ 1}{\hTo}  + \frac{\hTo (\log T)^2 }{ 2\hTto^2 }
 \end{align*}

\noindent Therefore,
\begin{align} \label{eqn:Reg-bddo}
   \log\left(
\frac1{\vars}
\right)
\left\{
\E\left[IS(\xT,p^\star_{\lmep})\right]
\right\}
\lesssim 
 \frac{3K C_\alpha(K, \vars)}{2\hTo  }
    + \frac{\hTo (\log T)^2 }{2 \hTto^2 }
\end{align}

\noindent Motivated from these calculations  we define
\begin{align*}
    \rate := (2 \cdot 15^{1/3}) \left(
    \frac{3K C^\star_{K,T}(\alpha)}{\hTo  }
    + \frac{\gamma_T(\log T)^2}{h_T^2}
  \right)^{1/3}, \quad \text{where} \ C^\star_{K,T}(\alpha) := \begin{cases}
      \log K, \  \text{if} \  \alpha \in  [1/3,1] \\
      \frac {\log T} 2, \  \text{if} \  \alpha \in  [0, 1/3]     
  \end{cases}
\end{align*}

\noindent Therefore, by combining equations~\eqref{eqn:Ltwo-bdto} and~\eqref{eqn:Reg-bddo} and defining $p^\star = \xtas$ we have our desired result:
\begin{align*}
    \Exs \left[\bigg|\frac{\xbta}{\xtas} - 1 \bigg| \right] \leq \rate
\end{align*}

\subsection*{Part (ii): Proof of quantitative central limit theorem: }

\noindent The proof is as follows. Observe the following decomposition:
\begin{align*}
    \sqrt{n_{a,T}} (\widehat{\mu}_{a,T} - \mu_a)
    &= \frac{1}{\sqrt{n_{a,T}}} \sum^T_{t =1} \varepsilon_{a,t} \indi_{\left\{ A_t = a\right\} } \\[8pt]
    &= \underbrace{\frac{1}{\sqrt{\Exs[T \xbta ]}} \sum^T_{t =1} \varepsilon_{a,t} \indi_{\left\{ A_t = a\right\} }}_{\mathcal{Q}_{T}} \ + \ \underbrace{\left(\sqrt{\frac{\Exs \left[ T \xbta\right]}{ n_{a,T}}} - 1 \right) \frac{1}{\sqrt{\Exs[T \xbta ]}} 
     \sum^T_{t =1} \varepsilon_{a,t} \indi_{\left\{ A_t = a\right\} }}_{\mathcal{D}_{T}}
\end{align*}

\noindent Define $Z_t := \varepsilon_{a,t} \indi_{\left\{ A_t = a\right\} }$. Since the noise $\varepsilon_{a,t}$ is independent of $(A_t, \Fil_{t-1})$ it follows that $(Z_t,\fil_t)$ is a MDS as,
\begin{align*}
    \Exs[Z_t \mid \Fil_{t-1}] 
    = \Exs[ \Exs[\varepsilon_{a,t} \indi_{\left\{ A_t = a\right\} } \mid A_t \ , \Fil_{t-1}] \mid \Fil_{t-1}] 
    = 0.
\end{align*}

\noindent The high-level approach of the proof is as follows. As $\mathcal{Q}_T$ is a sum of martingale difference sequences, in step $1$ we use standard Berry-Esseen results to quantify non-asymptotic error in its Gaussian approximation. In step $2$ we show that $\mathcal{D}_T$ converges to zero and quantify it's rate of convergence. By combining these two results, we obtain the quantitative central limit theorem for $\sqrt{n_{a,T}} (\widehat{\mu}_{a,T} - \mu_a)$. Consider the following lemma.
\begin{lemma} \label{lemma:berry-qt}
  Let $\pen := \hTo \log T/\sqrt{T},$
 $\setpz = 1/\sqrt{T}$ and $\vars = \hTto/\sqrt{T}$ such that $\beta < 1/2$. Then it follows that,
    \begin{align} \label{eqn:berry-qt}
    \dist \left(\mathcal{Q}_T,Z \right)
    \leq C \left(\Psi_T^{1/3} + \frac{1}{T^{1/5}\hTto^{4/5}} \right) 
    \quad \text{where,} 
\end{align}
\begin{align*}
   \rate := (2 \cdot 15^{1/3}) \left(
    \frac{3K C^\star_{K,T}(\alpha)}{4\hTo  }
    + \frac{\gamma_T(\log T)^2}{h_T^2}
  \right)^{1/3}, \quad \text{where} \ C^\star_{K,T}(\alpha) := \begin{cases}
      \log K, \  \text{if} \  \alpha \in  [1/3,1] \\
      \frac{\log T}{2}, \  \text{if} \  \alpha \in  [0, 1/3]     
  \end{cases}
\end{align*}
\end{lemma}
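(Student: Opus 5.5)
We need a Berry--Esseen bound for $\mathcal{Q}_T=\sum_t Z_t/\sqrt{\Exs[T\pbara]}$ (normalized by $\sigma$). We use the martingale Berry--Esseen theorem of \citet{mourrat2013rate}, which covers square-integrable martingale difference sequences with bounded increments. For $s_T^2:=\sigma^2\Exs[T\pbara]$ and conditional variance $V_T:=\sum_t \Exs[Z_t^2\mid\Fil_{t-1}]=\sigma^2\sum_t p_{t,a}=\sigma^2 T\pbara$, that theorem gives
\begin{align*}
\dist\left(\frac{1}{s_T}\sum_{t=1}^T Z_t, Z\right)\le C\left( \left(\frac{\|V_T/s_T^2-1\|_p^p + s_T^{-2p}\sum_t \Exs|Z_t|^{2p}}{1}\right)^{1/(2p+1)}\right).
\end{align*}
The plan is to bound these two ingredients separately.

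\textbf{Conditional-variance term.} We have $V_T/s_T^2-1=\pbara/\Exs[\pbara]-1$. Taking $p=1$, this is an $L^1$ quantity. We write
\begin{align*}
\frac{\pbara}{\Exs\pbara}-1=\frac{p^\star_{T,a}}{\Exs\pbara}\left(\frac{\pbara}{p^\star_{T,a}}-1\right)-\frac{p^\star_{T,a}}{\Exs\pbara}\left(\frac{\Exs\pbara}{p^\star_{T,a}}-1\right).
\end{align*}
Both pieces are controlled by Part (i). In particular, $|\Exs\pbara/p^\star_{T,a}-1|\le\Psi_T$, so the prefactor is at most $(1-\Psi_T)^{-1}\le 2$ for large $T$. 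Hence $\Exs|V_T/s_T^2-1|\le 4\Psi_T$. (The constant $3K/4\hTo$ in the stated $\Psi_T$ differs from Part (i) only in the constant, which only helps.)

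\textbf{Higher-moment term.} Since losses lie in $[0,1]$, we have $|Z_t|\le 1$ and $\Exs|Z_t|^{2}\le\Exs p_{t,a}$. Therefore
\begin{align*}
s_T^{-2p}\sum_t\Exs|Z_t|^{2p}\lesssim (T\Exs\pbara)^{1-p}.
\end{align*}
Because every iterate lies in $\DE$, $\Exs\pbara\ge\varepsilon=\hTto/\sqrt T$, which gives $T\Exs\pbara\ge \sqrt T\hTto$. Choosing $p=2$ makes this term at most $(\sqrt T\hTto)^{-1}$. Raising to the power $1/(2p+1)=1/5$ gives $T^{-1/10}\hTto^{-1/5}$. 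This is weaker than the claimed $T^{-1/5}\hTto^{-4/5}$, so the exponents need to be balanced by mixing choices of $p$ for the two terms.

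The main obstacle is reconciling the single exponent $1/(2p+1)$ in Mourrat's bound with the two different rates: $p=1$ gives $\Psi_T^{1/3}$ from the variance term, while a larger effective $p$ is needed for the moment term. The remedy is to use the form of Mourrat's theorem that separates the $L^p$ norm of the variance ratio from the Lyapunov term. Alternatively, we can truncate: on the event $\{|V_T/s_T^2-1|\le\delta\}$, compare with a martingale whose bracket is deterministic, and control the complementary event with Markov's inequality applied to the Part (i) bound. Optimizing over $\delta$ then yields $\Psi_T^{1/3}$ plus a Lyapunov remainder of order $(T\varepsilon)^{-\cdot}$. I would carry out this truncation explicitly, with the careful exponent bookkeeping giving the $T^{-1/5}\hTto^{-4/5}$ term; that bookkeeping is the delicate step. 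Combining the two bounds gives \eqref{eqn:berry-qt}.
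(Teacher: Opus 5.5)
\textbf{There is a genuine gap: the final step is deferred, and as planned it cannot be completed.}

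Your conditional-variance bound is fine and parallels the paper's. The paper gets $\|V_T^2-1\|_1\le 2\Psi_T$ directly from $\Exs\bigl|\pbara/\Exs\pbara-\pbara/p^\star_{T,a}\bigr|=\bigl|\Exs\pbara/p^\star_{T,a}-1\bigr|$, which avoids your need for $\Psi_T\le 1/2$. One small correction: the lemma's $\Psi_T$, with $3K/(4\gamma_T)$, is \emph{smaller} than the one in Part (i). So the change costs a factor $4^{1/3}$ rather than helping, which is harmless.

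The problem is the last step, which you leave to ``exponent bookkeeping.''
\begin{itemize}
\item The paper does no truncation. Its Lemma~\ref{lemma:mart-clt} is stated with independent exponents, and it is applied with $p=1$ for the bracket term and $q=2$ for the Lyapunov term. The tension you flag is real: the single-exponent form you wrote ties the two terms together and is vacuous at $p=1$. The paper settles it simply by taking the split form as given.
\item Splitting exponents does nothing to the Lyapunov term. At $q=2$ it is exactly what you computed, $\bigl(s_T^{-4}\sum_t\Exs Z_t^4\bigr)^{1/5}\lesssim (T\Exs\pbara)^{-1/5}\le(\sqrt{T}h_T)^{-1/5}$, and no choice of $p$ changes it.
\item The paper reaches $T^{-1/5}h_T^{-4/5}$ only by writing $s_T^{2q}=\Exs[T\pbara]^{2q}$. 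Since $s_T^2=\Exs[T\pbara]$, the correct value is $s_T^{2q}=\Exs[T\pbara]^{q}$.
\end{itemize}
Your value is the correct one, so trying to reproduce the stated second term is a dead end.

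The missing idea is that this term is absorbed into $\Psi_T^{1/3}$. By AM--GM,
\[
\frac{3KC^\star_{K,T}(\alpha)}{4\gamma_T}+\frac{\gamma_T(\log T)^2}{h_T^2}\;\ge\;\sqrt{3KC^\star_{K,T}(\alpha)}\,\frac{\log T}{h_T}\;\gtrsim\;\frac{\log T}{h_T},
\]
so $\Psi_T^{1/3}\gtrsim(\log T/h_T)^{1/9}$ for $K\ge 2$. The inequality $(\sqrt{T}h_T)^{-1/5}\le C(\log T/h_T)^{1/9}$ is equivalent to $h_T^{-4/45}\le C\,T^{1/10}(\log T)^{1/9}$. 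This holds whenever $T\varepsilon=\sqrt{T}h_T\ge 1$, since then $h_T^{-4/45}\le T^{2/45}$. Hence $\dist(\mathcal{Q}_T,Z)\le C'\Psi_T^{1/3}$, which implies \eqref{eqn:berry-qt}.

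With this observation, applying Lemma~\ref{lemma:mart-clt} with $p=1$, $q=2$ completes your argument. Without it, the proposal does not reach the stated bound.
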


\noindent Now, fix a positive sequence $\kapo$  whose explicit choice will be made later in the proof. Define event  $ \Eone := \left\{ |\resiT| \leq \kapo \right\}$ and note that on event $\Eone$ we have,
\begin{align}\label{eqn:ratto-sand}
    \cuty - \kapo \leq \sqrt{n_{a,T}} (\widehat{\mu}_{a,T} - \mu_a) \leq \cuty + \kapo
\end{align}

\noindent  We use  equation~\eqref{eqn:ratto-sand} to derive lower and upper bounds on $\Prob(\stao \leq x)-\Phi(x)$, which are free of $x$. 
\begin{align*}
    \Prob(\sqrt{n_{a,T}} (\widehat{\mu}_{a,T} - \mu_a) \leq x) - \Phi(x)
    &\geq  \Prob(\sqrt{n_{a,T}} (\widehat{\mu}_{a,T} - \mu_a) \leq x, \Eone) - \Phi(x)\\[8pt]
    &\geq  \Prob(\cuty + \kapo \leq x,\  \Eone) - \Phi(x)\\[8pt]
    &\geq  \Prob(\cuty \leq x -\kapo) - \Phi(x) - \Prob(\Eone^c )
\end{align*}

\noindent From the last inequality we have:
\begin{align*}
    &\Prob(\sqrt{n_{a,T}} (\widehat{\mu}_{a,T} - \mu_a) \leq x) - \Phi(x)\\[8pt]
    & \geq \bigg[ \underbrace{\Prob(\cuty \leq x-\kapo) - \Phi(x-\kapo)}_{\opticlass_1} \bigg]
    + \bigg[ \underbrace{\Phi(x-\kapo) - \Phi(x)}_{\opticlass_2} \bigg]
    - \Prob(\Eone^c)
\end{align*}

\noindent From Lemma~\ref{lemma:berry-qt} we know that $\dist \left(\cuty, Z \right) \lesssim  \Psi^{1/3}_T + \frac{1}{T^{1/5}\hTto^{4/5}}$. Then we have $\opticlass_1 \gtrsim  - \Psi^{1/3}_T - \frac{1}{T^{1/5}\hTto^{4/5}}$. Furthermore by Lagrange's mean value theorem we note that
\begin{align*}
   |\Phi(x) - \Phi(x - \kapo)| = \kapo |\phi(x_0)| \quad \text{where,} \quad 
   x_0 \ \text{lies between $x$ and $x-\kapo$}.
\end{align*}
Therefore, as $\sup_x \phi(x) = \phi(0)<1$  we have $\opticlass_2 \gtrsim -\kapo$. Hence, for all $x\in \real$,
\begin{align}
  \Prob(\sqrt{n_{a,T}} (\widehat{\mu}_{a,T} - \mu_a) \leq x) - \Phi(x) \gtrsim - \bigg[\rate^{1/3} + \frac{1}{T^{1/5}\hTto^{4/5}} +\kapo + \Prob(\Eone^c) \bigg]  
\end{align}

\noindent Now for the upper bound, observe that
\begin{align*}
    &\Prob(\sqrt{n_{a,T}} (\widehat{\mu}_{a,T} - \mu_a) \leq x) - \Phi(x)\\[8pt]
    &=\Prob(\sqrt{n_{a,T}} (\widehat{\mu}_{a,T} - \mu_a) \leq x, \ \Eone) + \Prob(\sqrt{n_{a,T}} (\widehat{\mu}_{a,T} - \mu_a) \leq x, \ \Eone^c) - \Phi(x)\\[8pt]
    &\leq  \Prob(\cuty - \kapo \leq x, \ \Eone) + \Prob(\Eone^c ) - \Phi(x)\\[8pt]
    &\leq  \Prob(\cuty \leq x+ \kapo) - \Phi(x) + \Prob(\Eone^c )
\end{align*}

\noindent The last inequality leads to,
\begin{align*}
    &\Prob(\sqrt{n_{a,T}} (\widehat{\mu}_{a,T} - \mu_a) \leq x)  - \Phi(x)\\[8pt]
    & \leq \bigg[ \underbrace{\Prob(\cuty \leq x+\kapo) - \Phi(x+\kapo)}_{\opticlass_3} \bigg]
    + \bigg[ \underbrace{\Phi(x+\kapo) - \Phi(x)}_{\opticlass_4} \bigg]
    + \Prob(\Eone^c )
\end{align*}

\noindent Analogous calculations yield :
\begin{align}
  \Prob(\sqrt{n_{a,T}} (\widehat{\mu}_{a,T} - \mu_a) \leq x)  - \Phi(x) \lesssim \  \Psi^{1/3}_T + \frac{1}{T^{1/5}\hTto^{4/5}}+\kapo + \Prob(\Eone^c )
\end{align}
\noindent Therefore,
\begin{align*}
    \sup_{x\in \real} |\Prob(\sqrt{n_{a,T}} (\widehat{\mu}_{a,T} - \mu_a)  \leq x) - \Phi(x) |
   \ \lesssim  \ \Psi^{1/3}_T +\frac{1}{T^{1/5}\hTto^{4/5}}+\kapo + \Prob(\Eone^c )
\end{align*}

\noindent The following lemma bounds $\Prob(\Eone^c )$ which we use to obtain our final bound.
\begin{lemma} \label{lemma:Eone-comp}
    Let $m_T$ be any sequence of positive real numbers diverging to infinity. Then we have,
    \begin{align} \label{eqn:Eone-comp}
       \Prob(\Eone^c)
     \leq  \frac{2m_T \ \rate}{\kappa_{1,T} } \ + \ \frac{2 \phi(m_T)}{m_T} \ + \  4 \cdot \rate + \ 2 \cdot \rate^{1/3}
    \end{align}
\end{lemma}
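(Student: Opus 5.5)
The plan is to bound $\Prob(\Eone^c)=\Prob(|\resiT|>\kapo)$ through the product structure $\resiT = (\sqrt{\Exs[T\pbara]/n_{a,T}}-1)\,\cuty$. We separate the two factors by intersecting with the event $\{|\cuty|\le m_T\}$:
\begin{align*}
\Prob(|\resiT|>\kapo) \le \Prob\Bigl(\Bigl|\sqrt{\tfrac{\Exs[T\pbara]}{n_{a,T}}}-1\Bigr|>\tfrac{\kapo}{m_T}\Bigr) + \Prob(|\cuty|>m_T).
\end{align*}
The second term will produce the $2\phi(m_T)/m_T$ and $2\rate^{1/3}$ contributions. The first term will produce the $2m_T\rate/\kapo$ and $4\rate$ contributions.

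For the Gaussian-tail term, we use Lemma~\ref{lemma:berry-qt}. It gives $\Prob(|\cuty|>m_T)\le \Prob(|Z|>m_T)+2\,\dist(\cuty,Z)$. Mills' ratio then gives $\Prob(|Z|>m_T)\le 2\phi(m_T)/m_T$. The Kolmogorov distance is at most a constant times $\rate^{1/3}$ plus the $T^{-1/5}\hTto^{-4/5}$ remainder; under the parameter choices~\eqref{eq:param-choices} we absorb that remainder into the $\rate^{1/3}$ term up to constants, which yields $2\rate^{1/3}$.

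For the ratio term, we first use the elementary inequality $|\sqrt{x}-1|\le|x-1|$. We also pass from $\Exs[T\pbara]/n_{a,T}$ to $n_{a,T}/\Exs[T\pbara]$ on the event where the latter ratio lies in $[1/2,2]$; off that event we pay at most a constant multiple of the deviation probability. This is where the $4\rate$ term should arise. It then suffices to control $\Exs|n_{a,T}/\Exs[T\pbara]-1|$, which we split into three pieces by the triangle inequality:
\begin{enumerate}[label=(\alph*)]
\item $(n_{a,T}-T\pbara)/\Exs[T\pbara]$, a martingale $\sum_t(\indi\{A_t=a\}-p_{t,a})$ with conditional variance at most $T\pbara$. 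Its $L^1$ norm is at most $1/\sqrt{\Exs[T\pbara]}\le 1/\sqrt{T\vars}=(\sqrt{T}\hTto)^{-1/2}$, which is of lower order than $\rate$.
\item $T(\pbara-\xtas)/\Exs[T\pbara]$, bounded by a constant times $\Exs|\pbara/\xtas-1|\le\rate$ via Theorem~\ref{thm:berry}\ref{thm:berry-i}.
\item $(\xtas-\Exs\pbara)/\Exs\pbara$, again at most a constant times $\rate$ by Jensen and part~(i), using $\Exs\pbara\ge \xtas(1-\rate)$.
\end{enumerate}
Markov's inequality at level $\kapo/m_T$ then gives roughly $2m_T\rate/\kapo$. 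The bad event $\{n_{a,T}/\Exs[T\pbara]\notin[1/2,2]\}$ has probability $O(\rate)$ by the same $L^1$ bound, which gives the $4\rate$.

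The main obstacle is the inversion step: $\Exs[T\pbara]/n_{a,T}$ can be large when $n_{a,T}$ is small, so Markov cannot be applied to it directly. We handle this by the truncation to $\{n_{a,T}\ge \Exs[T\pbara]/2\}$ described above. A secondary bookkeeping issue is absorbing the martingale fluctuation term $(\sqrt{T}\hTto)^{-1/2}$ and the Berry--Esseen remainder into multiples of $\rate$ and $\rate^{1/3}$, so that the constants match the stated $2,2,4,2$.
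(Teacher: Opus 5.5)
Your proposal is correct and has the same structure as the paper's proof. Both split on $\{|\mathcal{Q}_T|\le m_T\}$, control the tail by Lemma~\ref{lemma:berry-qt} plus Mills' ratio, invert the ratio on a truncation event, and finish with Markov's inequality and the $L^1$ bound \eqref{eqn:Lto-bd}. The paper's truncation event is $\{\overline{p}_{T,a}/\mathbb{E}[\overline{p}_{T,a}]\in(1/2,3/2)\}$.

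\textbf{Term (a) fills a gap in the paper.} The paper's first display passes from $|\sqrt{\mathbb{E}[T\overline{p}_{T,a}]/n_{a,T}}-1|$ to $|\mathbb{E}[\overline{p}_{T,a}]/\overline{p}_{T,a}-1|$. In doing so it silently replaces $n_{a,T}$ by $T\overline{p}_{T,a}$ and never accounts for the fluctuation $n_{a,T}-T\overline{p}_{T,a}$. Your $L^2$ martingale bound $(\mathbb{E}[T\overline{p}_{T,a}])^{-1/2}\le(\sqrt{T}h_T)^{-1/2}$ closes exactly that gap. This bound is indeed $o(\Psi_T)$: AM--GM on the two terms of $\Psi_T$, together with $h_T\le\sqrt{T}/K$ (needed for $\Delta_\varepsilon\neq\emptyset$), gives $\Psi_T\gtrsim(\log T/\sqrt{T})^{1/3}$.

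\textbf{Terms (b) and (c).} Together they are just $\mathbb{E}|\overline{p}_{T,a}/\mathbb{E}[\overline{p}_{T,a}]-1|\le 2\Psi_T$. The paper proves this directly, which avoids your $(1-\Psi_T)^{-1}$ factors.

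\textbf{Constants.} Neither version actually delivers the coefficient $2$ on $m_T\Psi_T/\kappa_{1,T}$. Inverting on the truncation event costs a factor $2$, and the $L^1$ bound costs another $2$, so an honest count gives $4m_T\Psi_T/\kappa_{1,T}$. The paper's derivation drops the factor $2$ of \eqref{eqn:Lto-bd} in its Markov step. It also drops the constant $C$ and the $T^{-1/5}h_T^{-4/5}$ remainder when writing $2\Psi_T^{1/3}$. None of this matters, because the lemma is only used up to absolute constants in the proof of Theorem~\ref{thm:berry}\ref{thm:berry-ii}.
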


\noindent We shall choose $m_T$ such that we obtain the optimal rate. By combining all these previous results we have
\begin{align*}
    \dist \left(\sqrt{n_{a,T}} (\widehat{\mu}_{a,T} - \mu_a),Z \right) 
   \  \leq \ 5 \cdot \Psi^{1/3}_T \ + \ \kappa_{1,T} \ + \   \frac{2m_T \ \rate}{\kappa_{1,T} } \ + \ \frac{2 \phi(m_T)}{m_T} \ + \  4 \cdot \rate 
\end{align*}

\noindent By optimizing over $\kappa_{1,T}$ via AM-GM inequality and ignoring the absolute constants and lower order terms we have
\begin{align*}
    \dist \left(\sqrt{n_{a,T}} (\widehat{\mu}_{a,T} - \mu_a),Z \right) 
   \  \leq \  5 \cdot\Psi^{1/3}_T  \ + \   \sqrt{m_T \ \rate}  \ + \ \frac{ \phi(m_T)}{m_T}  
\end{align*}

\noindent  Observe that since we already have $\Psi^{1/3}_T$ in the upper bound, the rate cannot be improved further. By choosing $m_T = \rate^{-1/3}$ implies that $\sqrt{m_T \rate} = \rate^{1/3}$ and $\frac{ \phi(m_T)}{m_T} = \rate^{1/3}/\exp \{ m_T^2\}$ which is uniformly dominated by $\rate^{1/3}$ and hence we ignore this lower order term. Therefore we conclude that,
\begin{align*}
   \dist \left(\sqrt{n_{a,T}} (\widehat{\mu}_{a,T} - \mu_a),Z \right)  \ \lesssim \ 
   \Psi^{1/3}_T+ \frac{1}{T^{1/5}\hTto^{4/5}}.
\end{align*}

\subsection*{Proof of Lemma~\ref{lemma:berry-qt}}

\noindent Let us first quantify the rate of convergence of $\mathcal{Q}_{T}$. We shall apply following quantitative central limit theorem for martingales (\citet{mourrat2013rate}).

\begin{lemma}\label{lemma:mart-clt}
Let $(Z_t,\mathcal{F}_t)$ be a square-integrable martingale difference sequence. Let $s^2_T:= \sum^T_{t=1} \Exs[Z^2_t]$ and $V^2_T := \dfrac{1}{s^2_T} \sum \Exs[Z^2_t|\mathcal{F}_{t-1}]$. Then for any pair $(p,q) \in [1,\infty)$, there exists some constant $C = C(p,q)>0$ such that,
\begin{align}
    \dist \left(\dfrac{1}{s_T}\sum^T_{t=1}Z_t,Z \right)
    \leq C \left[ \|V^2_T-1 \|^{\frac{p}{2p+1}}_p + \left(\dfrac{1}{s^{2q}_T}\sum^T_{t=1} \| Z_t\|^{2q}_{2q} \right)^{\frac{1}{2q+1}}  \right]
\end{align}
where $Z \sim N(0,1)$. 
\end{lemma}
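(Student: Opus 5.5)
This is the quantitative martingale central limit theorem of \citet{mourrat2013rate}, which sharpens Haeusler's bound. Since we only use it as a black box, the plan is to reconstruct the standard argument rather than give a new one. It has three stages:
\begin{itemize}
\item reduce to the case where the normalized conditional variance $V_T^2$ equals $1$ exactly;
\item prove a Berry--Esseen bound in that normalized case by a Lindeberg-type smoothing argument;
\item pay for the reduction with the term $\|V_T^2-1\|_p^{p/(2p+1)}$.
\end{itemize}
Throughout we rescale so that $s_T=1$, writing $X_t := Z_t/s_T$ and $\sigma_t^2 := \Exs[X_t^2\mid\mathcal{F}_{t-1}]$, so that $V_T^2=\sum_t\sigma_t^2$.

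\textbf{Normalized case.} Suppose first that $\sum_t \sigma_t^2 = 1$ almost surely. Smooth the indicator $\mathbf 1\{\cdot\le x\}$ by convolving it with a Gaussian of width $h$, giving a function $f_h$ with $\|f_h^{(3)}\|_\infty\lesssim h^{-3}$. Then use Lindeberg replacement along the martingale, in the form of Bolthausen's conditional Gaussian interpolation. At step $t$, compare $\Exs f_h(S_{t-1}+X_t+G_t)$ with $\Exs f_h(S_{t-1}+\sigma_t N+G_t)$, where $G_t$ is an independent Gaussian carrying the remaining conditional variance $\sum_{s>t}\sigma_s^2$. This remaining variance is $\mathcal{F}_{t-1}$-measurable precisely because $V_T^2\equiv 1$. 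Taylor expansion to third order cancels the first two conditional moments. The remainder is controlled by $h^{-3}\sum_t\Exs|X_t|^3$. To reach general $q$, truncate at level $\tau$: the part of each increment below $\tau$ gives a third-moment term, and the part above gives a $2q$-moment term through Markov's inequality. Finally, the standard smoothing inequality for the Kolmogorov distance costs an additional $O(h)$. Optimizing $h$ and $\tau$ against $L_{2q}:=\sum_t\|X_t\|_{2q}^{2q}$ gives the rate $L_{2q}^{1/(2q+1)}$.

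\textbf{General case.} Here $V_T^2\neq 1$, and I would use Haeusler's extension trick. Stop the martingale at the first time $\tau$ at which the accumulated conditional variance would exceed $1$. Then append additional independent increments, for instance Rademacher or Gaussian variables of small size, so that the extended martingale has conditional variance exactly $1$. The normalized case applies to this extended sum. It remains to compare the extended sum with the original $S_T$. The two differ by a martingale whose total conditional variance is $O(|V_T^2-1|)$, up to the size of the single overshooting increment, which is absorbed into the $L_{2q}$ term.

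\textbf{Main obstacle: the comparison step.} The difference between the extended and original sums is not small in $L^\infty$, so it must be controlled in probability. For a threshold $\delta$, bound the Kolmogorov cost by $\delta$, through the Gaussian density, plus the probability that the difference exceeds $\delta$. That probability is at most $\Exs|V_T^2-1|^p/\delta^{2p}$, obtained from Burkholder's inequality combined with Markov's inequality. Balancing $\delta$ against $\|V_T^2-1\|_p^p\,\delta^{-2p}$ gives $\delta = \|V_T^2-1\|_p^{p/(2p+1)}$, which is exactly the first term of the bound. Getting this exponent right, rather than Haeusler's weaker one, is the delicate part of Mourrat's argument. For that step I would follow his proof directly. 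The constant $C=C(p,q)$ then collects the Burkholder and smoothing constants.
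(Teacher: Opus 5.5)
The paper gives no proof of this lemma; it imports it from \citet{mourrat2013rate}. Your reconstruction does not establish the statement as written, and the step you defer to Mourrat is not where the difficulty lies. The exponent $p/(2p+1)$ on $\|V_T^2-1\|_p$ is already in Haeusler's (1988) bound; Mourrat's paper shows that exponent is optimal. Haeusler's bound uses a single exponent: $C_p\bigl(\|V_T^2-1\|_p^p+s_T^{-2p}\sum_t\|Z_t\|_{2p}^{2p}\bigr)^{1/(2p+1)}$. So for $p=q$ the lemma is Haeusler's theorem verbatim, but the decoupled $(p,q)$ form is something you must prove yourself. Your balancing of $\delta$ against $\|V_T^2-1\|_p^p\delta^{-2p}$ is the routine part, not the delicate one.

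Two steps fail as written.

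\emph{Normalized case.} You use the uniform bound $\|f_h'''\|_\infty\lesssim h^{-3}$ plus truncation. This gives at best $h+\sum_t\Exs\min\bigl(h^{-3}|X_t|^3,\,h^{-2}X_t^2\bigr)$. The small-increment part $h^{-3}\sum_t\Exs\bigl[|X_t|^3\mathbf 1\{|X_t|\le h\}\bigr]$ is dominated by $h^{-2q}L_{2q}$ only when $q\le 3/2$. Take $T=\gamma^{-2}$ increments $\pm\gamma$: your bound is $h+\gamma h^{-3}$, i.e.\ $\gamma^{1/4}$. But $L_{2q}^{1/(2q+1)}=\gamma^{(2q-2)/(2q+1)}$, which for $q=2$ (the case the paper uses) is $\gamma^{2/5}\ll\gamma^{1/4}$. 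The sharp rate needs the extra smoothing from the remaining variance $1-V_t^2$, so the third derivative at step $t$ is of order $(h^2+1-V_t^2)^{-3/2}$, together with Haeusler's finer summation. Alternatively, invoke Haeusler's theorem directly for the extended martingale, which has $V^2\equiv1$. Split the appended variance into many small pieces so they do not reintroduce $\Exs(1-V_\tau^2)^q$ into $L_{2q}$.

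\emph{Comparison step.} For $p>1$, Burkholder's inequality in terms of the predictable variation carries a jump term: $\Exs|D|^{2p}\lesssim\Exs\langle D\rangle^p+\sum_t\Exs|\Delta D_t|^{2p}$. The second term is $L_{2p}$, not $L_{2q}$, and it may be infinite when $p>q$. So this route only recovers the coupled bound.

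Decoupling does work cleanly for $p=1$:
\begin{itemize}
\item On $\{V_T^2\le1\}$, the difference is just the appended increments, with second moment $\Exs(1-V_T^2)_+$.
\item On $\{V_T^2>1\}$, split it into $\sum_{t>\tau+1}X_t$ and an overshoot piece. The first sum is a martingale transform with second moment at most $\Exs(V_T^2-1)_+$ and no jump term.
\item The overshoot piece consists of $X_{\tau+1}$ and the appended increments, with total variance at most $\max_t\sigma_t^2$. Control it with $2q$-th moments, using $\Exs\max_t\sigma_t^{2q}\le\sum_t\Exs|X_t|^{2q}$.
\end{itemize}
This gives the $p=1$, general-$q$ version, which is all the paper actually uses.
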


\noindent Define $Z_t := \varepsilon_{a,t} \indi_{\left\{ A_t = a\right\} }$. Since the noise $\varepsilon_{a,t}$ is independent of $(A_t, \Fil_{t-1})$ it follows that $Z_t$ is a MDS as,
\begin{align*}
    \Exs[Z_t \mid \Fil_{t-1}] 
    = \Exs[ \Exs[\varepsilon_{a,t} \indi_{\left\{ A_t = a\right\} } \mid A_t \ , \Fil_{t-1}] \mid \Fil_{t-1}] 
    = 0.
\end{align*}

\noindent Observe that,
\begin{align*}
  \Exs[Z^2_t \mid \Fil_{t-1}] 
  &= \Exs[\varepsilon^2_{a,t} \indi_{\left\{ A_t = a\right\} } \mid \Fil_{t-1}] \\[8pt]
  &= \Exs[ \ \indi_{\left\{ A_t = a\right\} } \ \Exs[\varepsilon^2_{a,t}  \mid A_t, \Fil_{t-1}] \mid \Fil_{t-1}] \\[8pt]
  &= \Exs[ \indi_{\left\{ A_t = a\right\} } \mid \Fil_{t-1}] \\[8pt]
  &= p_{T,a}
\end{align*}

\noindent Therefore, it follows that $\sum^T_{t=1} \Exs[Z^2_t \mid \Fil_{t-1}] = T \xbta$, and consequently, $s^2_T = \Exs[T \xbta]$. Therefore, we conclude that
\begin{align}
   \dfrac{1}{s_T}\sum^T_{t=1}Z_t = \frac{1}{\sqrt{\Exs[T \xbta ]}} \sum^T_{t =1} \varepsilon_{a,t} \indi_{\left\{ A_t = a\right\} } 
\end{align}

\noindent Hence, we can apply Lemma~~\ref{lemma:mart-clt} can be applied to derive quantitative CLT for $\mathcal{Q}_T$. Now, observe that,
\begin{align*}
    \|V^2_T-1 \|_1 =  \Exs \left[\bigg|\frac{\xbta}{\Exs[\xbta]} - 1 \bigg| \right]
\end{align*}

\noindent We claim that,
\begin{align}\label{eqn:Lto-bd}
    \Exs \left[ \bigg|\frac{\xbta}{\Exs[\xbta]} - 1 \bigg| \right] \leq 2 \cdot \Exs \left[\bigg|\frac{\xbta}{\xtas} - 1 \bigg| \right]
\end{align}

\noindent The justification for the above lemma is as follows:
\begin{align*}
    \Exs \left[\bigg|\frac{\xbta}{\Exs[\xbta]} - 1 \bigg| \right]
     \leq \Exs \left[\bigg|\frac{\xbta}{\xtas} - 1 \bigg| \right] 
     \ + \ \Exs \left[\bigg|\frac{\xbta}{\Exs[\xbta]} - \frac{\xbta}{\xtas} \bigg| \right]
\end{align*}

\noindent Observe the following:
\begin{align*}
   \Exs \left[\bigg|\frac{\xbta}{\Exs[\xbta]} - \frac{\xbta}{\xtas} \bigg| \right]
    &= \Exs \left[ \xbta\bigg|\frac{\xtas-\Exs[\xbta]}{\Exs[\xbta] \xtas}  \bigg| \right] \\[8pt]
   & = \Exs \left[ \frac{\xbta}{\Exs[\xbta]}  \right]\bigg|\frac{\xtas-\Exs[\xbta]}{ \xtas}  \bigg| \\[8pt]
   & = \frac{\Exs[\xbta]}{\Exs[\xbta]} \cdot \bigg|\Exs \left[\frac{\xbta}{\xtas} - 1   \right]\bigg| \\[8pt]
   & \leq \Exs \left[\bigg|\frac{\xbta}{\xtas} - 1 \bigg| \right]
\end{align*}

\noindent Hence by combining equation~\eqref{eqn:Lto-bd} along with Theorem~\ref{thm:berry}~\ref{thm:berry-i} we conclude that,
\begin{align*}
    \|V^2_T-1 \|^{1/3}_1 \ \leq \ (2 \cdot\rate)^{1/3}
\end{align*}

\noindent Now to quantify the second term, observe that
\begin{align*}
    \dfrac{1}{T}\sum^T_{t=1} \| Z_t\|^{2q}_{2q}
    \ = \dfrac{1}{T}\sum^T_{t=1} \Exs[Z^{2q}_t]
    \ = \dfrac{1}{T}\sum^T_{t=1}\Exs[\varepsilon^{2q}_{a,t} \indi_{\left\{ A_t = a\right\} } ] 
    \ \leq \dfrac{1}{T}\sum^T_{t=1}\Exs[\varepsilon^{2q}_{a,t} ] \ \leq M_{a,2q}
\end{align*}

\noindent Hence,
\begin{align*}
    \dfrac{1}{s^{2q}_T}\sum^T_{t=1} \| Z_t\|^{2q}_{2q}
    \ &= \ \frac{T}{\Exs[T \xbta]^{2q}} \cdot \left[ \dfrac{1}{T}\sum^T_{t=1} \| Z_t\|^{2q}_{2q} \right]
    \\[8pt]
    & \overset{(i)}{\leq} \frac{1}{T^{2q-1} \epsilon^{2q}_T} \cdot \left[ \dfrac{1}{T}\sum^T_{t=1} \| Z_t\|^{2q}_{2q} \right]
    \\[8pt]
    &  \overset{(ii)}{\leq } \frac{1}{T^{2q-1} \epsilon^{2q}_T}   \\[8pt]
    & =  \frac{1}{T^{q-1} \hTto^{2q}} 
\end{align*}

\noindent Hence, it follows that
\begin{align*}
    \dist \left(\dfrac{1}{s_T}\sum^T_{t=1}Z_t,Z \right)
    \leq C \left[ (2 \cdot\rate)^{1/3} + \left(\frac{1}{T^{q-1} \hTto^{2q}}  \right)^{\frac{1}{2q+1}}  \right]
\end{align*}

\noindent The above inequalities follow because we only assume martingale noise are uniformly bounded above by $1$. We choose $q=2.$

\section{Proof of Theorem~\ref{thm:prec-reg}}

\noindent We begin the proof by noting the following string of inequalities.
\begin{align}\label{eqn:prec-reg}
\notag
    \bigg| \frac{\regret(T)}{\regret^\star(T)}  - 1\bigg|
    = \bigg| \frac{\sum_{a\in \action} \Delta_a \Exs[n_{a,T}]}{\sum_{a'\in \action} \Delta_{a'} n^\star_{a',T}}  - 1\bigg|
    &= \bigg| \frac{\sum_{a\in \action} \Delta_a \left(\Exs[n_{a,T}] - n^\star_{a,T} \right)}{\sum_{a'\in \action} \Delta_{a'} n^\star_{a',T}} \bigg| \\[8pt] \notag
    & \leq \ \sum_{a \in \action}\left( \frac{\Delta_a  n^\star_{a,T}}{\sum_{a'\in \action} \Delta_{a'} n^\star_{a',T} } \right) \bigg| \Exs \left[ \frac{n_{a_T}}{n^\star_{a,T}}\right] - 1 \bigg| \\[8pt]
    & \leq  \max_{a \in \action} \  \Exs \left[ \bigg| \frac{n_{a_T}}{n^\star_{a,T}} - 1  \bigg| \right] 
\end{align}

\noindent To control the $L_1$ norm, the high level idea of the proof is as follows. We write $n_{a,T}/n^\star_{a,T}$ as the product of $n_{a,T}/T\overline p_{T,a}$ and $\overline p_{T,a}/\xtas$, and control the $L_1$ norm for each of these terms separately. Let us define 
\begin{align}
   \mathcal{J}_{1,T}:= E\left[  \left|\frac{n_{a,T}}{T\overline p_{T,a}}-1\right|\right] \quad \text{and,} \quad  \mathcal{J}_{2,T}:= E\left[  \left|\frac{\overline p_{T,a}}{\xtas}-1\right|\right]
\end{align}

\noindent The lemma below states that it suffices to control $ \mathcal{J}_{1,T}$ and $ \mathcal{J}_{2,T}$ separately.
\begin{lemma}\label{lemma:jojto-upp-bdd}
    Let $\mathcal{J}_{1,T}$ and $\mathcal{J}_{2,T}$ be as defined above. Then we have
    \begin{align*}
       E\left[  \left|\frac{n_{a,T}}{T\xtas}-1\right|\right] 
       \leq  \mathcal{J}_{1,T} + \mathcal{J}_{2,T} 
    \end{align*}
\end{lemma}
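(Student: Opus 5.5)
The plan is to split the ratio multiplicatively, exactly as the preceding paragraph suggests. Write $X_T := n_{a,T}/(T\overline p_{T,a})$ and $Y_T := \overline p_{T,a}/\xtas$. Both are well defined and positive, since $\overline p_{T,a}\ge \vars>0$ and $\xtas\ge\vars$. Then $n_{a,T}/(T\xtas) = X_T Y_T$.

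The algebraic step is the identity
\begin{align*}
X_TY_T - 1 \;=\; (X_T-1) \;+\; (Y_T-1) \;+\; (X_T-1)(Y_T-1).
\end{align*}
Taking absolute values and expectations, the triangle inequality gives
\begin{align*}
\mathbb E\left|\frac{n_{a,T}}{T\xtas}-1\right| \;\le\; \mathcal J_{1,T}+\mathcal J_{2,T}+\mathbb E\bigl[|X_T-1|\,|Y_T-1|\bigr].
\end{align*}
The first two terms are exactly the quantities in the statement, so everything reduces to the cross term.

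The cross term is the main obstacle. The lemma as worded has no cross term, and $\mathbb E[|X_T-1|\,Y_T]\le \mathbb E|X_T-1|$ is false for general dependent $X_T,Y_T$. I therefore intend to read the statement in the convention of the paper's Notations section, where $\lesssim$ suppresses uniformly lower-order terms, and show the cross term is of smaller order than $\mathcal J_{1,T}+\mathcal J_{2,T}$.

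I would try two routes for this, in order.

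\emph{Truncation route.} Split on the event $\{|Y_T-1|\le\delta_T\}$, reusing the device from Part (i) of the proof of Theorem~\ref{thm:berry}. On that event the cross term is at most $\delta_T\,|X_T-1|$, contributing $\delta_T\mathcal J_{1,T}$, which is lower order. Off the event, I would use the deterministic bounds $X_T\le 1/\vars$ and $Y_T\le 1/\vars$ (all probabilities lie in $\Delta_{\vars}$, and $n_{a,T}\le T$). I would then control $\Prob(|Y_T-1|>\delta_T)$ through Lemma~\ref{lemma:ZT-concet} and Lemma~\ref{lemma:IS-bound}, exactly as in Part (i).

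\emph{Cauchy--Schwarz route.} If the $1/\vars^2$ factor from the truncation route is too large, I would instead bound $\mathbb E[|X_T-1|\,|Y_T-1|]\le \|X_T-1\|_2\|Y_T-1\|_2$. Here $\|X_T-1\|_2$ is controlled by the martingale $n_{a,T}-T\overline p_{T,a}=\sum_t(\indi\{A_t=a\}-p_{t,a})$, whose conditional variance sum is at most $T\overline p_{T,a}$. This gives roughly $(T\vars)^{-1/2}$ after using $\overline p_{T,a}\ge\vars$. Combined with $\|Y_T-1\|_2\lesssim \Psi_T$-type bounds, the product is lower order than each of $\mathcal J_{1,T}$ and $\mathcal J_{2,T}$.

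With the cross term absorbed, the lemma follows in the $\lesssim$ sense used throughout the paper. This is enough for its only use, namely bounding \eqref{eqn:prec-reg} in the proof of Theorem~\ref{thm:prec-reg}.
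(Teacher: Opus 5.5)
Your decomposition $X_TY_T-1=(X_T-1)+(Y_T-1)+(X_T-1)(Y_T-1)$ matches the paper. So does your reading of the lemma as valid only up to a lower-order cross term: the paper's proof also bounds $\mathbb E|(X_T-1)(Y_T-1)|$ and then discards it. The gap is that neither of your two routes actually controls this cross term.

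\emph{Truncation route.} You split on the wrong factor. Off $\{|Y_T-1|\le\delta_T\}$ you pay $\vars^{-2}=T/h_T^2$ times $\Prob(|Y_T-1|>\delta_T)$. Lemmas~\ref{lemma:ZT-concet} and~\ref{lemma:IS-bound} only bound this probability by $\mathbb E[IS(\pbar,p^\star_T)]\,(4/\delta_T^2+10)$. That quantity tends to zero only slowly (polylogarithmically for the recommended $\gamma_T,h_T$), so it cannot absorb a factor that is nearly linear in $T$.

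\emph{Cauchy--Schwarz route.} The input $\|Y_T-1\|_2\lesssim\Psi_T$ is not available. Theorem~\ref{thm:berry}\ref{thm:berry-i} is an $L^1$ statement obtained from the Itakura--Saito divergence, and $y-\log y-1$ grows only linearly in $y$, so it gives no comparable $L^2$ control. The arms that matter are the suboptimal arms with non-vanishing gaps: they carry the weight in \eqref{eqn:prec-reg} and have $p^\star_{T,a}=\vars$. For them the best available bound is $\mathbb E(Y_T-1)^2\le\mathcal J_{2,T}/p^\star_{T,a}$, which is only bounded by $\Psi_T\sqrt T/h_T$.

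The missing idea is to put the crude bound on the event where $X_T$ deviates. $X_T$ is pure sampling noise and has super-polynomially thin tails, whereas $Y_T$ does not. This is the paper's argument:
\begin{itemize}
\item On $\{|X_T-1|\le\delta_{2,T}\}$ the cross term is at most $\delta_{2,T}|Y_T-1|$, contributing $\delta_{2,T}\mathcal J_{2,T}$.
\item On the complement, $|X_T-1||Y_T-1|\le\vars^{-2}$. Lemma~\ref{lemma:napt-concent} with $(\sqrt{1+\delta_{2,T}}-1)^2=\log T/\sqrt T$ and $T\vars=h_T\sqrt T$ gives $\Prob(|X_T-1|>\delta_{2,T})\le 2T^{-h_T}$, so this piece is $O(T^{1-h_T}/h_T^2)$.
\end{itemize}

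Your Cauchy--Schwarz route can also be rescued without any $L^2$ rate for $Y_T$, by exact bookkeeping:
\begin{itemize}
\item The martingale variance identity and $\overline p_{T,a}\ge\vars$ give $\mathbb E(X_T-1)^2\le\mathbb E[\overline p_{T,a}]/(T\vars^2)=\mathbb E[\overline p_{T,a}]/h_T^2$.
\item The bound $|Y_T-1|\le 1/p^\star_{T,a}$ gives $\mathbb E(Y_T-1)^2\le\mathcal J_{2,T}/p^\star_{T,a}$.
\item The $p^\star_{T,a}$ in the second bound pairs with $\mathbb E[\overline p_{T,a}]$ in the first to give $\mathbb E[Y_T]\le 1+\mathcal J_{2,T}$.
\end{itemize}
Hence the cross term is at most $h_T^{-1}\sqrt{\mathcal J_{2,T}(1+\mathcal J_{2,T})}$. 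This is $o(\Psi_T)$ because $h_T^2\Psi_T\to\infty$. That is lower order than the final rate rather than than $\mathcal J_{1,T}$ and $\mathcal J_{2,T}$ themselves, which is all Theorem~\ref{thm:prec-reg} needs.
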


\noindent Now to control $\mathcal{J}_{1,T}$ we shall apply the following concentration inequality.
\begin{lemma}\label{lemma:napt-concent}
    Let $n_{a,T}$ and $\overline{p}_{T,a}$ be as defined above. Then for any $\delta > 0$ we have
    \begin{align}
        \prob \left( \left| \frac{n_{a,T}}{T\overline{p}_{T,a}}-1 \right| > \delta\right) 
        \leq 2 \cdot \exp \left\{ -(\sqrt{1 + \delta} - 1)^2 T \vars \right\}     
    \end{align}
\end{lemma}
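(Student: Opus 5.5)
The statement to prove is Lemma~\ref{lemma:napt-concent}. We need a multiplicative concentration bound for $n_{a,T}=\sum_{t=1}^T \mathbf{1}\{A_t=a\}$ around its compensator $T\overline p_{T,a}=\sum_{t=1}^T p_{t,a}$. The plan is a Freedman/Chernoff-type exponential supermartingale argument for Bernoulli increments with predictable success probabilities $p_{t,a}\in[\vars,1]$. We then exploit the deterministic lower bound $T\overline p_{T,a}\ge T\vars$, which holds because every $p_t\in\DE$.

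\textbf{Step 1 (exponential supermartingale).} Write $X_t:=\mathbf{1}\{A_t=a\}$ and $V_T:=\sum_t p_{t,a}=T\overline p_{T,a}$. Fix $\theta\in\mathbb{R}$. We use
\begin{align*}
\mathbb{E}[e^{\theta X_t}\mid\mathcal{F}_{t-1}] = 1+p_{t,a}(e^\theta-1)\le \exp\{p_{t,a}(e^\theta-1)\}.
\end{align*}
It follows that
\begin{align*}
M_t(\theta):=\exp\Bigl\{\theta\sum_{s\le t}X_s-(e^\theta-1)\sum_{s\le t}p_{s,a}\Bigr\}
\end{align*}
is a nonnegative supermartingale with $\mathbb{E}M_T(\theta)\le 1$.

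\textbf{Step 2 (handling the random normalizer).} The event $\{n_{a,T}>(1+\delta)V_T\}$ has a random threshold. We rewrite it in the form $\{\theta n_{a,T}-(e^\theta-1)V_T> c(\theta,\delta)V_T\}$ with $c(\theta,\delta)=\theta(1+\delta)-(e^\theta-1)$. On that event, $M_T(\theta)\ge e^{c V_T}\ge e^{c\,T\vars}$, using $V_T\ge T\vars$ whenever $c>0$. Markov's inequality then gives $\mathbb{P}\le e^{-cT\vars}$. This is the step where the floor $\vars$ is essential, since it replaces the random $V_T$ by a deterministic lower bound.

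\textbf{Step 3 (choosing $\theta$).} We do not need the optimal Chernoff exponent $(1+\delta)\log(1+\delta)-\delta$, only a convenient lower bound $(\sqrt{1+\delta}-1)^2$.
\begin{itemize}
\item \emph{Upper tail.} Take $\theta=\tfrac12\log(1+\delta)$. Then
\begin{align*}
c=\tfrac{1+\delta}{2}\log(1+\delta)-\sqrt{1+\delta}+1.
\end{align*}
Setting $u=\sqrt{1+\delta}$, we need $u^2\log u-u+1\ge (u-1)^2$, i.e.\ $u^2\log u\ge u^2-u$, i.e.\ $u\log u\ge u-1$, which holds for $u\ge1$.
\item \emph{Lower tail.} We only need $\delta<1$, since otherwise the event is empty. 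Apply the same argument with $\theta<0$. Choose $\theta=-\log(1/\sqrt{1-\delta})$, or any choice whose exponent dominates $(1-\sqrt{1-\delta})^2$. This is at least $(\sqrt{1+\delta}-1)^2$ because $1-\sqrt{1-\delta}\ge\sqrt{1+\delta}-1$ by concavity of the square root.
\end{itemize}
Adding the two tails yields the factor $2$ and the stated bound.

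\textbf{Main obstacle.} The delicate point is Step 2: correctly treating the random compensator $V_T$ inside a ratio event. Sign issues in the lower tail also need care, where $c>0$ must be checked for the chosen negative $\theta$. If the lower-tail algebra for a closed-form $\theta$ is messy, I will instead use the standard bound $e^\theta-1-\theta\le\theta^2/2$ for $\theta<0$. This gives an exponent $\delta^2/2\cdot V_T$, which already dominates $(\sqrt{1+\delta}-1)^2V_T$ since $\sqrt{1+\delta}-1\le\delta/2$.
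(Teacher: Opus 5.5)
Your proof is correct, and its skeleton matches the paper's. Both use an exponential supermartingale, Markov's inequality on the event where that supermartingale exceeds $e^{c\,T\overline{p}_{T,a}}$, and the deterministic floor $T\overline{p}_{T,a}\ge T\vars$ to remove the random normalizer.

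Where you differ is the choice of supermartingale. The paper applies a modified Freedman (Bernstein-type) inequality to the centered increments $\mathbf{1}\{A_t=a\}-p_{t,a}\le 1$. It uses $\exp\{\lambda X_t-\lambda^2V_t/(1-\lambda)\}$ together with the predictable-variance bound $\sum_t p_{t,a}(1-p_{t,a})\le T\overline{p}_{T,a}$. Minimizing $-\lambda\delta+\lambda^2/(1-\lambda)$ over $\lambda\in(0,1)$ gives exactly $(\sqrt{1+\delta}-1)^2$. The same computation, with increments $p_{t,a}-\mathbf{1}\{A_t=a\}\le 1$, handles the lower tail.

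You instead use the exact Bernoulli conditional MGF, i.e.\ the Poisson/Bennett-type supermartingale $\exp\{\theta n_{a,t}-(e^\theta-1)\sum_{s\le t}p_{s,a}\}$. Its optimized exponents are the sharper $(1+\delta)\log(1+\delta)-\delta$ and $(1-\delta)\log(1-\delta)+\delta$. You then deliberately take $\theta=\tfrac12\log(1\pm\delta)$ to land on $(\sqrt{1+\delta}-1)^2$ and $(1-\sqrt{1-\delta})^2$ respectively. The lower-tail algebra you left implicit does go through: with $w=\sqrt{1-\delta}$ it reduces to $w\log w\ge w-1$, which holds for all $w>0$.

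Your route is more elementary and self-contained. It needs no variance proxy, no auxiliary Freedman lemma, and none of the paper's conditioning on $\mathcal{F}_{T-1}$ with $\mathcal{F}_{T-1}$-measurable thresholds. It would also give a strictly better exponent if $\theta$ were optimized. The paper's route is more general: it uses only that the martingale increments are bounded above, so it would extend verbatim to non-indicator increments.
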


\noindent Let $\delta_{2,T} > 0$ be a real sequence converging to $0$, whose explicit form will be determined later. Now,
\begin{align}
\notag
&  \mathbb E
\left[
\left|
 \frac{n_{a,T}}{T\overline{p}_{T,a}}-1
\right|
\right] \\[8pt]
&=  
\mathbb E
\left[
\left|
\ \frac{n_{a,T}}{T\overline{p}_{T,a}}-1
\right|
\mathbf 1
\left\{
\frac{n_{a,T}}{T\overline{p}_{T,a}}
\in (1-\delta_{2,T},1+\delta_{2,T})
\right\}
\right]
\nonumber +
\mathbb E
\left[
\left|
 \frac{n_{a,T}}{T\overline{p}_{T,a}}-1
\right|
\mathbf 1
\left\{
\frac{n_{a,T}}{\overline{p}_{T,a}}
\notin (1-\delta_{2,T},1+\delta_{2,T})
\right\}
\right] \\[8pt]\notag
&
\overset{(*)}{\leq} \delta_{2,T} \ + \ \frac{1}{\varepsilon_T} \Prob \left( \frac{n_{a,T}}{T\overline{p}_{T,a}}
\notin (1-\delta_{2,T},1+\delta_{2,T}) \right) \\[8pt]
&\overset{(**)}{\leq}
 \delta_{2,T} \ + \ \frac{2}{\varepsilon_T} \cdot \exp \left\{ -(\sqrt{1 + \delta_{2,T}} - 1)^2 T \vars \right\}     
\end{align}

\noindent In the above chain of inequalities, $(*)$ follows from that fact that both $n_{a,T}$ and $T \overline{p}_{T,a}$ are bounded by $T$ and $\overline{p}_{T,a} \geq \vars$ whereas, $(ii)$ follows from Lemma ~\ref{lemma:napt-concent}. Let
$(\sqrt{1 + \delta_{2,T}} - 1)^2 = \log T/\sqrt{T}$ and observe that by substituting $\vars = \hTto/\sqrt{T}$ the above equation gets simplified as,
\begin{align*}
   \delta_{2,T} \ + \ \frac{2 \sqrt{T}}{h_T T^{\hTto}} 
\end{align*}

\noindent As $h_T \to \infty$ we highlight the fact that $\sqrt{T}/T^{\hTto} \to 0$. Now since $(\sqrt{1 + \delta_{2,T}} - 1)^2 = \log T/\sqrt{T}$ we must have 
\begin{align*}
    \sqrt{1 + \delta_{2,T}} = 1 + \sqrt{\frac{\log T}{\sqrt{T}}} \quad \text{which implies that,} \ \delta_{2,T} = \left( 1 + \sqrt{\frac{\log T}{\sqrt{T}}}\right)^2-1
\end{align*}

\noindent Further simplification yields that $\delta_{2,T} = 2\sqrt{\frac{\log T}{\sqrt{T}}} +  \frac{\log T}{\sqrt{T}}$ and hence we have,
\begin{align}
  \mathcal{J}_{1,T}
\leq 
2\sqrt{\frac{\log T}{\sqrt{T}}} +  \frac{\log T}{\sqrt{T}} + \frac{2 \sqrt{T}}{h_T T^{\hTto}}
\end{align}

\noindent Now, Theorem~\ref{thm:berry-i} states that,
 \begin{align}
     \Exs \left[\bigg|\frac{\xbta}{\xtas} - 1 \bigg| \right] \leq \rate
 \end{align}
 where 
 \begin{align*}
    \Psi_T:= (2 \cdot 15^{1/3}) \left(
    \frac{3K C^\star_{K,T}(\alpha)}{\hTo  }
    + \frac{\gamma_T(\log T)^2}{h_T^2}
  \right)^{1/3}, \quad \text{where} \ C^\star_{K,T}(\alpha) := \begin{cases}
      \log K, \  \text{if} \  \alpha \in  [1/3,1] \\
      \frac {\log T} 2, \  \text{if} \  \alpha \in  [0, 1/3].     
  \end{cases}
 \end{align*}

\noindent Therefore, we have
\begin{align*}
    \left|\frac{\regret(T)}{\regret^\star(T)}-1\right|
  \;\leq\;\Psi_T + 2\sqrt{\frac{\log T}{\sqrt{T}}} +  \frac{\log T}{\sqrt{T}} + \frac{2 \sqrt{T}}{h_T T^{\hTto}}
\end{align*}

\noindent along with equation~\eqref{eqn:prec-reg} we obtain the desired result. Now, once we characterize the idealized regret $\regret^\star(T)$ we are done. Recall that $\regret^\star(T) = \sum_{a \in \action} \Delta_a n^\star_{a,T}$ where  $n^\star_{a,T} = T \xopt$ and,
\begin{align*}
   \xopt = \arg \min_{\rho \in \Delta_{\vars}} \left[  \langle \mu, \rho \rangle + \pen R(\rho) \right].
\end{align*}

\noindent Our first observation is that $ \xopt$ has a closed form solution. We are interested in the optimization solution with a strictly convex objective function with constraints $\sum_a{\rho_a} = 1$ and $\rho \geq \vars$. By considering the Lagrangian and applying the KKT conditions we have:
\begin{align}\label{eqn:KKT-main}
    \grad_\rho \left[ \left\{ \langle \mu, \rho \rangle - \pen \sum_a \log \rho_a \right\} + \nu \left( 1 - \sum_{a} \rho_a \right) + \sum_{a}\alpha_a(\vars - \rho_a) \right] = 0.
\end{align}

\noindent Further simplification yields that for every arm $a \in \action$,
\begin{align*}
    \rho_a = \frac{\pen}{\mu_a - \nu - \alpha_a} \quad \text{where $\nu$ is defined as the solution of $\sum_a \rho_a = 1$,} 
\end{align*}
and $\alpha_a(\vars - \rho_a) = 0$, $\alpha_a \geq 0$ and $\rho_a \geq \vars$. Therefore whenever $\rho_a > \vars$ we have $\alpha_a = 0$ which implies that,
\begin{align*}
    \rho_a = \begin{cases}
        \frac{\pen}{\mu_a - \nu}, \quad  \text{if} \  \rho_a > \vars, \\[8pt]
        \vars, \quad \text{otherwise}
    \end{cases}
\end{align*}

\noindent A careful look indicates that we are rewrite the solution $\rho_a$ more concisely by observing that when $\rho_a = \vars$ $\alpha_a \geq 0$ and hence if $\mu_a - \nu > 0$ then,
\begin{align*}
    \frac{\pen}{\mu_a - \nu} \leq \frac{\pen}{\mu_a - \nu - \alpha_a} = \vars.
\end{align*}

\noindent Alternately, if $\mu_a - \nu < 0,$ then $\frac{\pen}{\mu_a - \nu} \leq  \vars$ holds trivially. Therefore, from these observations along with the definition of $\xopt$  we conclude that:
\begin{align}\label{eqn:xstar-charac}
    \xtas = \max \left(\vars,  \frac{\pen}{\mu_a - \nu^\star_T} \right)
    \quad \text{where $\nu^\star_T$ is the solution of}\  \sum_a  \max \left(\vars,  \frac{\pen}{\mu_a - \nu} \right) = 1.
\end{align}

\section{Discussion}
\label{sec:Discussion}
This work addresses a fundamental tension in stochastic multi-armed bandit problems: adaptive exploration strategies that minimize regret often violate the stability conditions required for valid statistical inference, thereby invalidating standard inferential procedures. Our results show that this tension can be resolved through appropriate regularization. More precisely, we show that a large class of penalized EXP3-type (stochastic mirror descent) algorithms simultaneously achieves stability  and minimax-optimal regret up to logarithmic factors. For these algorithms, we further establish a quantitative central limit theorem and a precise characterization of the regret.

Beyond stability, our analysis reveals a robustness property: the proposed algorithm tolerates up to $o\left(T^{1/2}\right)$ adversarially corrupted rewards while preserving both its regret guarantees and inferential validity. In marked contrast, competing stable algorithms---most notably UCB-based methods---incur linear regret under as few as $O(\log T)$ corruptions. These findings indicate that stability need not entail fragility to data contamination, a property of practical relevance in noisy environments.

Several directions merit further investigation. First, extending this framework to structured settings such as contextual or non-stationary bandits remains an open problem. Second, characterizing the minimal regularization required to achieve the threefold objective of adaptivity, stability, and robustness would sharpen our theoretical understanding. Finally, these results suggest that, for applications requiring valid inference, stability should be treated as a core algorithmic requirement on a par with regret minimization.



\bibliography{arXiv}

\appendix


\newpage

\section{Proofs of Key Lemmas}

\noindent This section contains the proof of all imporant lemmas used in the main section of the paper.

\subsection{Proof of Lemma~\ref{lem:normality}}

\noindent Fix an arm $a$. Let $\mathcal F_{t-1}$ denote the history before action
selection at time $t$, and define
\[
p_{t,a}:=\Prob(A_t=a\mid \mathcal F_{t-1}).
\]
\noindent Suppose that $\bar p_{T,a}:=\frac1T\sum_{t=1}^T p_{t,a}$
satisfies
\[
\frac{\bar p_{T,a}}{p_{T,a}^\star}\xrightarrow{\prob} 1
\]
for some constant $p_{T,a}^\star>0$. Assume that conditional on
$A_t=a$ and $\mathcal F_{t-1}$,
\[
\E[\ell_t\mid A_t=a,\mathcal F_{t-1}]=\mu_a,
\qquad
\Var(\ell_t\mid A_t=a,\mathcal F_{t-1})=1.
\]
Assume also that the rewards are uniformly bounded, say $0\le \ell_t\le 1$.
Let
\[
n_{a,T}:=\sum_{t=1}^T \mathbf 1\{A_t=a\},
\qquad
\widehat\mu_{a,T}
:=
\frac{1}{n_{a,T}}\sum_{t=1}^T \mathbf 1\{A_t=a\}l_t.
\]
Then
\[
\sqrt{n_{a,T}}\left(\widehat\mu_{a,T}-\mu_a\right)
\xrightarrow{d} N(0,1).
\]

\noindent Now, define $X_{t,a}:=\mathbf 1\{A_t=a\}(l_t-\mu_a).$
Then $\widehat\mu_{a,T}-\mu_a
=
\frac{1}{n_{a,T}}\sum_{t=1}^T X_{t,a},$
and therefore,
\[
\sqrt{n_{a,T}}\left(\widehat\mu_{a,T}-\mu_a\right)
=
\frac{1}{\sqrt{n_{a,T}}}\sum_{t=1}^T X_{t,a}.
\]

\noindent First, we show that $\{X_{t,a},\mathcal F_t\}_{t\ge 1}$ is a martingale
difference sequence. Indeed,
\begin{align*}
\Exs[X_{t,a}\mid \mathcal F_{t-1}]
&=
\Exs\left[
\mathbf 1\{A_t=a\}(\ell_t-\mu_a)
\mid \mathcal F_{t-1}
\right] \\
&=
\Exs[1\{A_t=a\} \Exs[\ell_t-\mu_a\mid A_t=a,\mathcal F_{t-1} ]\mid \mathcal F_{t-1}] \\
&=0.
\end{align*}
Moreover,
\begin{align*}
\Exs[X_{t,a}^2\mid \mathcal F_{t-1}]
&=
\E\left[
\mathbf 1\{A_t=a\}(\ell_t-\mu_a)^2
\mid \mathcal F_{t-1}
\right] \\
&=
\Exs[1\{A_t=a\} \Exs[(\ell_t-\mu_a)^2\mid A_t=a,\mathcal F_{t-1} ]\mid \mathcal F_{t-1}] \\
&=p_{t,a}.
\end{align*}
Thus the sum of conditional variances is
\[
V_{T,a}
:=
\sum_{t=1}^T \E[X_{t,a}^2\mid \mathcal F_{t-1}]
=
\sum_{t=1}^T p_{t,a}.
\]
By assumption,
\[
\frac{V_{T,a}}{Tp_{T,a}^\star}
=
\frac{\bar p_{T,a}}{p_{T,a}^\star}
\xrightarrow{\prob} 1
\]
In particular, since $p_{T,a}^\star \geq \vars$ and $T\vars \to \infty$  we have $V_{T,a}\to\infty$ in probability. 
Now we verify the conditional Lindeberg condition required by the martingale
CLT: for any arbitrary $\delta>0$,
\[
L_{T,a}(\delta)
:=
\frac{1}{V_{T,a}}
\sum_{t=1}^{T}
\E\!\left[
  X_{t,a}^{2}
  \,\mathbf{1}\!\left\{|X_{t,a}|>\delta\sqrt{V_{T,a}}\right\}
  \;\Big|\;
  \mathcal{F}_{t-1}
\right]
\xrightarrow{\prob} 0
\quad\text{as }T\to\infty.
\]

\noindent Fix $\delta>0$. Note that for each $t\in [T]$ we have $p_{t,a} \geq \varepsilon_t \geq \vars$. Therefore, $\bar p_{T,a} \geq \vars$ which implies that $V_{T,a} \geq T\vars$. Hence, $V_{T,a} \to \infty$ as $T \to \infty$ almost surely. Therefore, for any fixed $\delta$, there exists $T_0$ such that for all $T \geq T_0$ we must have $\mathbf{1}\!\left\{|X_{t,a}|>\delta\sqrt{V_{T,a}}\right\} = 0$. Therefore, the Lindeberg condition is satisfied. We shall apply the following of version martingale central limit theorem~\citep{hall2014martingale}.
\begin{lemma}
    Suppose  $(Z_{T,t},\fil_{T,t})$ is a martingale difference sequence satisfying the Lindeberg condition. Let $V_{T,t} = \sum^{T}_{t=1} \Exs[Z^2_{T,t} \mid \Fil_{T,t-1}]$ and assume that $V_{T,t} \xrightarrow{\prob} \eta$ where $\prob(\eta > 0) = 1$. Then we have
    \begin{align*}
        \frac{1}{\sqrt{V_{T,t}}}\sum^{T}_{t=1} Z_{T,t} \xrightarrow{d} \mathcal{N}(0,1).
    \end{align*}
\end{lemma}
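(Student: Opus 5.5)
The statement to be proved is the martingale CLT quoted from \citet{hall2014martingale}. For a martingale difference array with a conditional Lindeberg condition and $V_{T} \inprob \eta$, $\eta>0$ a.s., it gives $V_T^{-1/2}\sum_t Z_{T,t} \indist \mathcal N(0,1)$. My plan is to derive it from the standard conditional-variance-normalized martingale CLT (Hall and Heyde, Thm.~3.2 / Cor.~3.1). That theorem has two hypotheses: the array is nested, i.e.\ $\Fil_{T,t}\subseteq \Fil_{T+1,t}$ (automatic in our application, where the filtration does not depend on $T$), and $\sum_t \Exs[Z_{T,t}^2\mid\Fil_{T,t-1}] \inprob \eta$. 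Under these it gives stable convergence $\sum_t Z_{T,t} \to \eta^{1/2} N$, with $N\sim\mathcal N(0,1)$ independent of $\eta$. The conclusion then follows from stability together with $V_T\inprob\eta$: by Slutsky's lemma in its stable-convergence form, the pair $(\sum_t Z_{T,t}, V_T)$ converges jointly in law to $(\eta^{1/2}N,\eta)$. Since $\eta>0$ a.s., the continuous mapping theorem applied to $(x,v)\mapsto x/\sqrt{v}$ on $v>0$ gives $\sum_t Z_{T,t}/\sqrt{V_T}\indist N$.

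The main step is the stable CLT itself. I would prove it in the usual way. First I truncate: replace $Z_{T,t}$ by $Z_{T,t}\mathbf 1\{\sum_{s\le t}\Exs[Z_{T,s}^2\mid\Fil_{T,s-1}]\le \eta+1\}$, or more simply by a version stopped where the conditional variance first exceeds a fixed level $c$. This stopping preserves the martingale difference property and changes the sum only on an event of vanishing probability. Next I use the characteristic function product: I show $\Exs[\prod_t(1+i u Z_{T,t})\,\xi]\to \Exs[\xi]$ for bounded $\Fil_\infty$-measurable $\xi$. I then use the expansion $e^{ix}=(1+ix)e^{-x^2/2+r(x)}$ with $|r(x)|\le |x|^3$, combined with Lindeberg to kill $\sum_t r(uZ_{T,t})$ and $\max_t|Z_{T,t}|$. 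Convergence $\sum_t Z_{T,t}^2 \inprob \eta$ (which follows from the conditional variance condition plus Lindeberg by a standard martingale $L^1$ argument) then turns $e^{-u^2\sum Z^2/2}$ into $e^{-u^2\eta/2}$. The nesting condition is what lets me approximate $\eta$ by an $\Fil_{T,k}$-measurable variable for fixed $k$, so that $\eta$ can be pulled out of the conditional product.

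I expect the hardest step to be passing from the unconditioned product expectation to the mixed limit $\Exs[\xi e^{-u^2\eta/2}]$. The difficulty is that $\eta$ is random rather than constant. The fix I would try first is to handle the first $k$ terms separately: they are asymptotically negligible by Lindeberg, so I condition on $\Fil_{T,k}$ and let $k\to\infty$ after $T\to\infty$. Finally, in the application to Lemma~\ref{lem:normality}, the normalization by $Tp^\star_{T,a}$ makes $\eta=1$ deterministic, so the nesting/stability subtlety disappears. In that case the simpler Hall--Heyde Cor.~3.1 with constant limit already suffices, after dividing by $Tp^\star_{T,a}$ and invoking Slutsky's lemma.
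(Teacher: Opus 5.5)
Your proposal is correct and is essentially what the paper does: the paper gives no proof and simply invokes the Hall--Heyde martingale CLT, and your route is the standard derivation of the self-normalized form. That route is stable convergence $\sum_t Z_{T,t}\to\eta^{1/2}N$, then joint convergence with $V_{T}$, then the map $(x,v)\mapsto x/\sqrt{v}$ on $\{v>0\}$. The nesting hypothesis you add is not cosmetic: the lemma as printed omits it and is false for random $\eta$ (with independent Brownian motions $W,B$, take $Z_{n,i}=W(i/n)-W((i-1)/n)$ for $i\le n$ and $Z_{n,n+j}=\mathbf{1}\{W(1)>0\}\,(B(j/n)-B((j-1)/n))$ for $j\le n$; the conditional variances sum to $1+\mathbf{1}\{W(1)>0\}$ for every $n$ and Lindeberg holds, yet $\prob(\sum_i Z_{n,i}/\sqrt{V_n}\le 0)=5/8$), whereas in the paper's application the limit is the constant $1$, so the constant-limit version without nesting suffices, as you note at the end --- and that is the justification to rely on, rather than your claim that nesting is automatic, since the bandit's natural filtration is generated by actions driven by $T$-dependent tuning parameters and is not automatically nested.
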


\noindent Therefore, by substituting $Z_{T,t} = X_{t,a}/\sqrt{T p^\star_{T,a}}$ we have
\begin{align*}
    \frac{1}{\sqrt{V_{T,a}}} \sum_{t=1}^T X_{t,a} \xrightarrow{d}
\mathcal{N}(0,1)
\end{align*}

\noindent It remains to show that $n_{a,T}/V_{T,a}\xrightarrow{\prob}1$.
Recall that
\[
n_{a,T}-V_{T,a}
=
\sum_{t=1}^T D_t,
\qquad
D_t:=\mathbf 1\{A_t=a\}-p_{t,a}.
\]
Then $(D_t,\mathcal F_t)$ is a martingale difference sequence and
$|D_t|\le 1$. Moreover,
\[
\sum_{t=1}^T
\Var(D_t\mid \mathcal F_{t-1})
=
\sum_{t=1}^T p_{t,a}(1-p_{t,a})
\le
\sum_{t=1}^T p_{t,a}
=
V_{T,a}.
\]

\noindent Fix $\varepsilon>0$. For any deterministic $M>0$,
\begin{align*}
\Prob\left(
\left|n_{a,T}-V_{T,a}\right|>\varepsilon V_{T,a}
\right)
&\le
\Prob(V_{T,a}\le M) +
\Prob\left(
\left|\sum_{t=1}^T D_t\right|>\varepsilon V_{T,a},
\;V_{T,a}>M
\right).
\end{align*}
On the event $\{V_{T,a}>M\}$, the condition
$|\sum_{t=1}^T D_t|>\varepsilon V_{T,a}$ implies
$|\sum_{t=1}^T D_t|>\varepsilon M$. Also the predictable quadratic variation
satisfies
\[
\sum_{t=1}^T
\Var(D_t\mid \mathcal F_{t-1})
\le V_{T,a}.
\]

\noindent Now consider the following lemma~\citep{tropp2011freedman}.

\begin{lemma}[Freedman's inequality]
Let $(Y_t,\mathcal F_t)_{t\ge 1}$ be a martingale difference sequence with
$Y_t\le b$ almost surely for all $t$. Define
\[
S_T:=\sum_{t=1}^T Y_t,
\qquad
W_T:=\sum_{t=1}^T \E[Y_t^2\mid \mathcal F_{t-1}].
\]
Then, for any $x>0$ and $v>0$,
\[
\Prob\left(S_T\ge x,\; W_T\le v\right)
\le
\exp\left\{
-\frac{x^2}{2(v+bx/3)}
\right\}.
\]
Consequently, if $|Y_t|\le b$ almost surely, then
\[
\Prob\left(|S_T|\ge x,\; W_T\le v\right)
\le
2\exp\left\{
-\frac{x^2}{2(v+bx/3)}
\right\}.
\]
\end{lemma}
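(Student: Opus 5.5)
We prove the bound by the standard exponential-supermartingale (Chernoff) argument for martingales, with the predictable quadratic variation $W_T$ taking the role that the deterministic variance plays in Bernstein's inequality. Fix $\lambda\in(0,3/b)$. We build a nonnegative supermartingale of the form $M_t:=\exp\{\lambda S_t-g(\lambda)W_t\}$ for a suitable deterministic $g(\lambda)$. We then apply Markov's inequality on the event $\{S_T\ge x,\,W_T\le v\}$ and finally optimize over $\lambda$.

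\textbf{Step 1 (scalar inequality).} Let $\varphi(u):=(e^u-1-u)/u^2$, with $\varphi(0)=1/2$. This function is nondecreasing on $\mathbb{R}$. Hence for any real $y\le b$ we have $\lambda y\le\lambda b$, and so
\begin{align*}
e^{\lambda y}\le 1+\lambda y+\lambda^2y^2\varphi(\lambda b).
\end{align*}
The point of using monotonicity of $\varphi$ is that only the one-sided bound $Y_t\le b$ is needed; $y$ may be very negative. Next, expand $\varphi$ as a power series and use $k!\ge 2\cdot 3^{k-2}$ for $k\ge2$. This gives, for $0\le u<3$,
\begin{align*}
\varphi(u)=\sum_{k\ge2}\frac{u^{k-2}}{k!}\le\frac12\sum_{k\ge2}\Bigl(\frac u3\Bigr)^{k-2}=\frac{1}{2(1-u/3)}.
\end{align*}
Set $g(\lambda):=\lambda^2/\bigl(2(1-\lambda b/3)\bigr)$.

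\textbf{Step 2 (supermartingale).} Apply Step 1 with $y=Y_t$, take conditional expectations, use $\E[Y_t\mid\F_{t-1}]=0$, and then use $1+z\le e^z$. This yields
\begin{align*}
\E[e^{\lambda Y_t}\mid\F_{t-1}]\le \exp\{g(\lambda)\E[Y_t^2\mid\F_{t-1}]\}.
\end{align*}
Since $W_t$ is predictable, it follows that $\E[M_t\mid\F_{t-1}]\le M_{t-1}$. Thus $(M_t)$ is a supermartingale with $M_0=1$, and therefore $\E M_T\le1$.

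\textbf{Step 3 (Markov and optimization).} On $\{S_T\ge x,\,W_T\le v\}$ we have $M_T\ge \exp\{\lambda x-g(\lambda)v\}$. Markov's inequality then gives
\begin{align*}
\Prob(S_T\ge x,\,W_T\le v)\le \exp\{-\lambda x+g(\lambda)v\}.
\end{align*}
Choose $\lambda=x/(v+bx/3)$. Then $\lambda b<3$ and $1-\lambda b/3=v/(v+bx/3)$. Consequently $g(\lambda)v=\lambda^2(v+bx/3)/2=x^2/\bigl(2(v+bx/3)\bigr)$, while $\lambda x=x^2/(v+bx/3)$. The exponent therefore equals $-x^2/\bigl(2(v+bx/3)\bigr)$, which is the one-sided claim.

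For the two-sided statement, suppose $|Y_t|\le b$. Apply the one-sided bound to $-Y_t$, which is also a martingale difference sequence bounded above by $b$ and has the same $W_T$. A union bound then gives the factor $2$.

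The only delicate point is Step 1: the monotonicity of $\varphi$ and the factorial bound give the Bernstein-type constant $bx/3$ exactly. Monotonicity of $\varphi$ can be checked by writing $\varphi(u)=\int_0^1(1-s)e^{su}\,ds$, which is manifestly nondecreasing in $u$.
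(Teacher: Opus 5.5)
Your proof is correct; it is the standard Bernstein-type argument, and every step checks out. Monotonicity of $\varphi$ handles arbitrarily negative $Y_t$. The factorial bound gives $c(\lambda)=\lambda^2/(2(1-\lambda b/3))$. $M_t$ is bounded, hence integrable. The choice $\lambda=x/(v+bx/3)$ gives the exponent exactly.

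The paper does not prove this lemma at all; it imports it by citation. The closest in-paper argument is the proof of the modified Freedman inequality (Lemma~\ref{lem:freedman_ineq}) in the appendix. That proof uses the same skeleton as yours: the exponential supermartingale $\exp\{\lambda S_t-c(\lambda)W_t\}$, built from a one-sided scalar bound on $e^{\lambda y}$, followed by Markov's inequality.

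The difference lies in the scalar bound. The paper uses $e^{\lambda y}\le 1+\lambda y+\lambda^2y^2/(1-\lambda b)$ for $\lambda\in(0,1/b)$, so $c(\lambda)=\lambda^2/(1-\lambda b)$. Optimizing over $\lambda$ then gives an exponent of order $x^2/(4v)$ when $x/v$ is small. This is the $(\sqrt{1+\varepsilon_1}-1)^2$ rate appearing in Lemma~\ref{lemma:napt-concent}. It therefore cannot produce the denominator $2(v+bx/3)$ claimed in this statement. Your sharper estimate, from the monotonicity of $\varphi$ and $k!\ge 2\cdot 3^{k-2}$, is exactly what yields the Bernstein constant.

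The paper's version does buy one thing. It is phrased conditionally on $\mathcal F_{T-1}$, so $x$ and $v$ may be $\mathcal F_{T-1}$-measurable; that is how it is applied, with $x=\varepsilon_1\beta_T$ and $v=\beta_T$. Your unconditional Markov step assumes deterministic $x,v$, but that is all the present statement requires.
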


\noindent We apply Freedman's inequality to
\[
S_T:=\sum_{t=1}^T D_t
=
n_{a,T}-V_{T,a}.
\]
Here $|D_t|\le 1$ and
\[
W_T:=\sum_{t=1}^T \E[D_t^2\mid\mathcal F_{t-1}]
=
\sum_{t=1}^T p_{t,a}(1-p_{t,a})
\le V_{T,a}.
\]

\noindent Fix $\varepsilon>0$ and $M>0$. Then
\begin{align*}
\Prob\left(\left|\frac{n_{a,T}}{V_{T,a}}-1\right|>\varepsilon\right)
&\le
\Prob(V_{T,a}\le M)
+
\Prob\left(|S_T|>\varepsilon V_{T,a},\; V_{T,a}>M\right).
\end{align*}
For the second term, decompose according to dyadic intervals:
\[
\{V_{T,a}>M\}
=
\bigcup_{j=0}^{\infty}
\left\{2^jM<V_{T,a}\le 2^{j+1}M\right\}.
\]
On the event $\{2^jM<V_{T,a}\le 2^{j+1}M\}$,
\[
|S_T|>\varepsilon V_{T,a}
\quad\Longrightarrow\quad
|S_T|>\varepsilon 2^jM,
\]
and also
\[
W_T\le V_{T,a}\le 2^{j+1}M.
\]
Therefore, by Freedman's inequality with
\[
x=\varepsilon 2^jM,
\qquad
v=2^{j+1}M,
\qquad
b=1,
\]
we obtain
\begin{align*}
\Prob\left(
|S_T|>\varepsilon V_{T,a},\;
2^jM<V_{T,a}\le 2^{j+1}M
\right)
&\le
2\exp\left\{
-\frac{\varepsilon^2 2^{2j}M^2}
{2\left(2^{j+1}M+\varepsilon 2^jM/3\right)}
\right\}\\[8pt]
&\qquad=
2\exp\left\{
-\frac{\varepsilon^2 2^jM}
{2(2+\varepsilon/3)}
\right\}.
\end{align*}
Hence
\[
\Prob\left(|S_T|>\varepsilon V_{T,a},\; V_{T,a}>M\right)
\le
2\sum_{j=0}^{\infty}
\exp\left\{
-\frac{\varepsilon^2 2^jM}
{2(2+\varepsilon/3)}
\right\}.
\]
The right-hand side tends to $0$ as $M\to\infty$. Since
$V_{T,a}\xrightarrow{\prob}\infty$, we also have
\[
\Prob(V_{T,a}\le M)\to 0
\]
for every fixed $M$. Therefore, taking first $T\to\infty$ and then
$M\to\infty$ yields
\[
\Prob\left(\left|\frac{n_{a,T}}{V_{T,a}}-1\right|>\varepsilon\right)
\to 0.
\]
Thus
\[
\frac{n_{a,T}}{V_{T,a}}\xrightarrow{\prob}1.
\]

\subsubsection*{Extension to joint CLT}

Assume the single-arm result holds for every $a\in[K]$: that is,
Since $\hat{\sigma} \inprob 1$, Slutsky's lemma reduces the claim to
$\sigma^{-1}v_T\indist\N(0,I_K)$, where
\[
  v_T
  :=
  \bigl(
    \sqrt{n_{1,T}}\,(\widehat\mu_{1,T}-\mu_1),\;
    \dots,\;
    \sqrt{n_{K,T}}\,(\widehat\mu_{K,T}-\mu_K)
  \bigr)^\top.
\]
By the Cram\'er--Wold device it suffices to show, for every fixed
$\lambda\in\R^K$,
\[
  \frac{\lambda^\top v_T}{\|\lambda\|}\indist\N(0,1).
\]

\noindent Recall that $X_{t,a}:=\mathbf{1}\{A_t=a\}(l_t-\mu_a)$.
By exchanging the order of summation we have,
\[
  \lambda^\top v_T
  =
  \sum_{a=1}^K\frac{\lambda_a}{\sqrt{n_{a,T}}}
  \sum_{t=1}^T X_{t,a}
\]

\noindent Let us define $ Y_t:=\sum_{a=1}^K\frac{\lambda_a}{\sqrt{T p^\star_{T,a}}}\,X_{t,a}$.
Since $\E[X_{t,a}\mid\F_{t-1}]=0$ for each $a$ (single-arm lemma),
$\{Y_t,\F_t\}$ is itself a martingale difference sequence.
Furthermore, because $A_t$ selects exactly one arm at each time, the indicators
$\mathbf{1}\{A_t=a\}$ and $\mathbf{1}\{A_t=b\}$ cannot both equal $1$,
so $X_{t,a}X_{t,b}=0$ almost surely for $a\ne b$. The cross-terms in
$Y_t^2$ therefore vanish conditionally:
\begin{align*}
  \E[Y_t^2\mid\F_{t-1}]
  &=
  \sum_{a=1}^K\frac{\lambda_a^2}{T p^\star_{T,a}}
  \,\E[X_{t,a}^2\mid\F_{t-1}]
  +
  \underbrace{
    \sum_{a\ne b}\frac{\lambda_a\lambda_b}{\sqrt{T p^\star_{T,a}T p^\star_{T,a}}}
    \E[X_{t,a}X_{t,b}\mid\F_{t-1}]
  }_{=\,0}\\[4pt]
  &=
  \sum_{a=1}^K\frac{\lambda_a^2\,p_{t,a}}{T p^\star_{T,a}},
\end{align*}
where we used $\E[X_{t,a}^2\mid\F_{t-1}]=p_{t,a}$.
Summing over $t$,
\[
  V_T
  :=\sum_{t=1}^T\E[Y_t^2\mid\F_{t-1}]
  =\sum_{a=1}^K\frac{\lambda_a^2 V_{T,a}}{T p^\star_{T,a}},
  \qquad V_{T,a}:=\textstyle\sum_{t=1}^T p_{t,a}.
\]

\noindent In the previous section we have established that $n_{a,T}-V_{T,a}=O_p(\sqrt{V_{T,a}})$.
Dividing both sides by $V_{T,a}$ and using $V_{T,a}\ge T\varepsilon\to\infty$
in probability,
\[
  \frac{n_{a,T}}{V_{T,a}}
  =
  1+\frac{n_{a,T}-V_{T,a}}{V_{T,a}}
  =
  1+O_p\!\left(\frac{1}{\sqrt{V_{T,a}}}\right)
  \inprob 1.
\]

\noindent Hence,
\[
  \frac{V_T}{\|\lambda\|^2}
  =\sum_{a=1}^K\frac{\lambda_a^2}{\|\lambda\|^2}
  \cdot\frac{V_{T,a}}{T p^\star_{T,a}}
  \inprob
  \sum_{a=1}^K\frac{\lambda_a^2}{\|\lambda\|^2}=1,
\]
so $V_T\inprob\|\lambda\|^2$.
 
\noindent Furthermore, analogous arguments as the last section imply that the Lindeberg condition is also satisfied. Therefore, we have established that
\[
\widetilde v_T
:=
\left(
\frac{1}{\sqrt{T p_{T,1}^\star}}
\sum_{t=1}^T X_{t,1},
\ldots,
\frac{1}{\sqrt{T p_{T,K}^\star}}
\sum_{t=1}^T X_{t,K}
\right)^\top
\xrightarrow{d}
\mathcal N(0,I_K).
\]

\noindent It remains to replace the deterministic normalizers
$\sqrt{T p_{T,a}^\star}$ by the realized sample sizes
$\sqrt{n_{a,T}}$. Recall that
\[
\frac{n_{a,T}}{V_{T,a}}
\xrightarrow{\prob}
1,
\qquad
\frac{V_{T,a}}{T p_{T,a}^\star}
=
\frac{\bar p_{T,a}}{p_{T,a}^\star}
\xrightarrow{\prob}
1.
\]
Hence, by Slutsky's lemma,
\[
\frac{n_{a,T}}{T p_{T,a}^\star}
=
\frac{n_{a,T}}{V_{T,a}}
\cdot
\frac{V_{T,a}}{T p_{T,a}^\star}
\xrightarrow{\prob}
1.
\]
Applying the continuous mapping theorem yields
\[
\sqrt{\frac{T p_{T,a}^\star}{n_{a,T}}}
\xrightarrow{\prob}
1,
\qquad
a\in[K].
\]

\noindent Define the diagonal matrix
\[
D_T
:=
\operatorname{diag}
\left(
\sqrt{\frac{T p_{T,1}^\star}{n_{1,T}}},
\ldots,
\sqrt{\frac{T p_{T,K}^\star}{n_{K,T}}}
\right).
\]
Since each diagonal entry converges in probability to one, we have
\[
D_T
\xrightarrow{\prob}
I_K.
\]

\noindent Observe that
\begin{align*}
D_T \widetilde v_T
&=
\left(
\sqrt{\frac{T p_{T,1}^\star}{n_{1,T}}}
\cdot
\frac{1}{\sqrt{T p_{T,1}^\star}}
\sum_{t=1}^T X_{t,1},
\ldots,
\sqrt{\frac{T p_{T,K}^\star}{n_{K,T}}}
\cdot
\frac{1}{\sqrt{T p_{T,K}^\star}}
\sum_{t=1}^T X_{t,K}
\right)^\top \\
&=
\left(
\frac{1}{\sqrt{n_{1,T}}}
\sum_{t=1}^T X_{t,1},
\ldots,
\frac{1}{\sqrt{n_{K,T}}}
\sum_{t=1}^T X_{t,K}
\right)^\top \\
&=: v_T .
\end{align*}

\noindent Since
\[
\widetilde v_T
\xrightarrow{d}
\mathcal N(0,I_K)
\qquad\text{and}\qquad
D_T
\xrightarrow{\prob}
I_K,
\]
the multivariate Slutsky theorem implies
\[
v_T
=
D_T\widetilde v_T
\xrightarrow{d}
\mathcal N(0,I_K).
\]

\noindent Therefore,
\[
\left(
\sqrt{n_{1,T}}(\widehat\mu_{1,T}-\mu_1),
\ldots,
\sqrt{n_{K,T}}(\widehat\mu_{K,T}-\mu_K)
\right)^\top
\xrightarrow{d}
\mathcal N(0,I_K),
\]
which completes the proof.

\subsection{Proof of Lemma~\ref{lemma:instability}}
\label{sec:Proof-of-Lemma-instability}

We call a bandit algorithm \emph{symmetric} if the trajectory of both the arms are identical in distribution when reward distribution for all the arms are identical. It follows that for any symmetric bandit algorithm, 
\begin{align*}
    \Exs[n_{1,T}] =  \Exs[n_{2,T}] = \frac{T}{2}.
\end{align*}

\noindent In this section, we prove the following lemma:
\begin{lemma*}
    Any randomized, symmetric bandit algorithm with regret upper bounded by $c_0\sqrt{T}$ cannot satisfy condition~\eqref{eq:L1-conv} in Bernoulli environment. Here $c_0$ is a constant independent of $T$. 
\end{lemma*}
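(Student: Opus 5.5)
We argue by contradiction in a two-armed Bernoulli environment. Take $K=2$ and compare two instances: the null instance $\mu^{(0)}=(1/2,1/2)$ and the perturbed instance $\mu^{(1)}=(1/2,1/2+\Delta)$ with $\Delta = c_1/\sqrt{T}$ for a constant $c_1$ chosen below. Suppose the algorithm satisfies~\eqref{eq:L1-conv} under $\mu^{(0)}$ with some deterministic $p^\star_T$.

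\textbf{Step 1: identify the limit under the null.} By symmetry, $\Exs_0[\pbara]=\Exs_0[n_{a,T}]/T = 1/2$ for $a=1,2$. The $L^1$ condition gives $\Exs_0[\pbara]/p^\star_{T,a}\to 1$, hence $p^\star_{T,a}\to 1/2$, and therefore $\Exs_0|\pbara-1/2|\to 0$. So under the null the random average $\overline{p}_{T,2}$ concentrates at $1/2$.

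\textbf{Step 2: transfer to the perturbed instance.} Each round contributes KL divergence $p_{t,2}\,\mathrm{kl}(1/2,1/2+\Delta)\le 4p_{t,2}\Delta^2$ (for small $\Delta$). The chain rule then gives $\mathrm{KL}(\prob_0^{T},\prob_1^{T})\le 4\Delta^2\,\Exs_0[n_{2,T}] = 2\Delta^2 T = 2c_1^2$, a constant. Since $\overline{p}_{T,2}\in[0,1]$ is a bounded functional of the history (the external randomization is common to both instances), Pinsker's inequality yields
\[
\bigl|\Exs_1[\overline{p}_{T,2}]-\Exs_0[\overline{p}_{T,2}]\bigr|\le \sqrt{c_1^2}=c_1 .
\]
Moreover, the event $\{|\overline{p}_{T,2}-1/2|\le 1/4\}$ has $\prob_0$-probability tending to one by Step 1. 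If we only used a total-variation bound of $c_1$, this would give $\prob_1(\overline{p}_{T,2}\ge 1/4)\ge 1-o(1)-c_1$. To avoid requiring small $c_1$, the better route is the Bretagnolle--Huber inequality, which gives $\prob_1(\overline{p}_{T,2}\ge 1/4)\ge \tfrac12 e^{-2c_1^2}-o(1)$.

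\textbf{Step 3: contradiction with the regret bound.} Under $\mu^{(1)}$, arm 2 is suboptimal with gap $\Delta$. Writing $R_1$ for the regret under $\mu^{(1)}$, we have
\[
R_1 = T\Delta\,\Exs_1[\overline{p}_{T,2}] \ge T\Delta\cdot\tfrac14\cdot\bigl(\tfrac12 e^{-2c_1^2}-o(1)\bigr) = \sqrt{T}\,\tfrac{c_1}{8}e^{-2c_1^2}(1-o(1)).
\]
This is of order $\sqrt{T}$ but does not by itself exceed $c_0\sqrt{T}$. That is the main obstacle: a fixed $\Delta\asymp T^{-1/2}$ gives only the matching order. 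To get a contradiction, refine the choice to $\Delta=c_1\omega_T/\sqrt{T}$ with $\omega_T\to\infty$ slowly, and use that $n_{2,T}$ stays near $T/2$. The cleanest route is via the change of measure restricted to the event $E=\{|\overline{p}_{T,2}-1/2|\le\delta_T\}$, where $\delta_T$ is the rate from Step 1. One controls the likelihood ratio on $E$, where $\sum_t p_{t,2}$ is pinned near $T/2$, so the log-likelihood ratio is approximately Gaussian with variance $\asymp\Delta^2T$. The alternative is to use the regret bound under $\mu^{(1)}$ itself: $c_0\sqrt{T}\ge T\Delta\,\Exs_1\overline{p}_{T,2}$ forces $\Exs_1\overline{p}_{T,2}\le c_0/(c_1\omega_T)\to 0$. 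This contradicts the lower bound of Step 2 as long as the KL budget $\Delta^2T=c_1^2\omega_T^2$ grows slowly enough that the Bretagnolle--Huber bound $\tfrac12 e^{-2c_1^2\omega_T^2}$ still dominates $c_0/(c_1\omega_T)$.

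These two requirements are in tension, since $e^{-\omega^2}$ decays faster than $1/\omega$. So the step I expect to need real care is balancing them. I would first try a fixed large constant $c_1$, chosen so that $\tfrac{c_1}{8}e^{-2c_1^2}$ is compared against $c_0$. If that fails, I would instead exploit the $L^1$ stability quantitatively to sharpen the transfer in Step 2, bounding the KL restricted to the good event rather than using Pinsker or Bretagnolle--Huber globally.
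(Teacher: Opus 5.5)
Step 1 and your KL computation are fine, but the proposal has a genuine gap, and you point to it yourself: Step 3 never closes. The cause is the transfer in Step 2. Pinsker and Bretagnolle--Huber turn a bounded KL budget into only a \emph{constant}-probability statement under $\mathbb{P}_1$. The regret lower bounds they give, $c_1(\tfrac12-c_1)\sqrt{T}$ and $\tfrac{c_1}{8}e^{-2c_1^2}\sqrt{T}$, are therefore at most $\sqrt{T}/16$ whatever $c_1$ is. So your first fallback, a fixed large $c_1$, cannot work: it only refutes $c_0<1/16$. The $\omega_T\to\infty$ route fails for the reason you give. ``Bounding the KL restricted to the good event'' is not yet a concrete argument.

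The missing idea is contiguity. Step 1 gives a statement holding with $\mathbb{P}_0$-probability tending to one, and a bounded KL carries it over to $\mathbb{P}_1$ with probability still tending to one.
\begin{itemize}
\item By data processing, for any event $A$, $\mathrm{KL}(\mathbb{P}_1\|\mathbb{P}_0)\ge \mathbb{P}_1(A)\log\bigl(1/\mathbb{P}_0(A)\bigr)-\log 2$.
\item The chain rule gives $\mathrm{KL}(\mathbb{P}_1\|\mathbb{P}_0)=\mathbb{E}_1[n_{2,T}]\,\mathrm{kl}(\tfrac12+\Delta,\tfrac12)\le 4T\Delta^2=4c_1^2$ for every algorithm. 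Note this is the direction you need, not $\mathrm{KL}(\mathbb{P}_0\|\mathbb{P}_1)$.
\item Take $A_T=\{\overline{p}_{T,2}<1/4\}$, for which Step 1 gives $\mathbb{P}_0(A_T)\to 0$. Then $\mathbb{P}_1(A_T)\le (4c_1^2+\log 2)/\log\bigl(1/\mathbb{P}_0(A_T)\bigr)\to 0$.
\item Hence $\mathbb{E}_1[\overline{p}_{T,2}]\ge \tfrac14-o(1)$ and $R_1\ge \tfrac{c_1}{4}\sqrt{T}\,(1-o(1))$.
\end{itemize}
The fixed choice $c_1=8c_0$ then contradicts the regret bound, with no need for $\omega_T$.

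For comparison, the paper runs the argument in the other direction. It assumes stability under the perturbed instance, whose gap $\Delta_0\asymp T^{-1/2}$ is chosen so that the regret bound there forces $\mathbb{E}_{\Delta_0}[n_{1,T}]\le T/3$, against $\mathbb{E}_0[n_{1,T}]=T/2$. It then transfers the constant-probability event $\{n_{1,T}>5T/12\}$ from the null by a likelihood-ratio lower bound. This gives $\mathrm{Var}_{\Delta_0}(n_{1,T})\gtrsim T^2$, which contradicts concentration under the alternative. In that direction a constant-factor transfer of Bretagnolle--Huber type is enough. In yours, which gives the cleaner conclusion that stability already fails at the fixed instance $(1/2,1/2)$, you need the probability-tending-to-one transfer above.
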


\noindent The proof of this lemma follows the same structural argument as Theorem~2 of \cite{chen2026bandit}, 
extended here to encompass the general class of \emph{stochastic} algorithms, and allows for bounded rewards which are essential for mirror-descent style algorithm like EXP3. To accommodate 
randomized decision rules, we augment the probability space to carry an i.i.d.\ sequence 
$\{U_t\}_{t=1}^{T}$ of $\mathrm{Uniform}[0,1]$ random variables, independent of all arm rewards, 
serving as the internal randomization seed of the algorithm. Now for each arm $a$ define $\textbf{X}_{a,T} := (X_{a,1},\ldots, X_{a,T})$ and let the vector of all potential outcomes be $\textbf{X}_{T}:= (\textbf{X}_{1,T},\textbf{X}_{2,T})$.  At each round $t$, the arm selection 
takes the form
\[
A_t = \pi_t(U_1, A_1, X_1, \ldots, A_{t-1}, X_{t-1}, U_t),
\]
where $\pi_t$ is a measurable decision rule and $X_s$ denotes the reward observed at round $s$. 
The cumulative allocation to arm $1$ is then the random variable
\[
n_{1,T} \coloneqq \sum_{t=1}^{T} \mathbf{1}\{A_t = 1\},
\]
whose stochasticity arises jointly from the reward observations and the internal randomization 
$\{U_t\}_{t=1}^{T}$. Now instantiate the environment with Bernoulli rewards where the mean reward for the sub-optimal arm and optimal arm are $1/2$ and $1/2 + \Delta_0$ respectively, where the choice of $\Delta_0$ will be made later.
We claim that there exists a constant $c_0 > 0$ such that, for all 
sufficiently large $T$,
\begin{equation}\label{eq:var-lower}
    \sqrt{\mathrm{Var}_{\Delta_0}(n_{1,T})} > c_0\, T.
\end{equation}

\noindent Since the bandit algorithm being considered is minimax optimal, it follows that $\regret_T(\frac{3c_1}{\sqrt{T}}) \leq c_1\sqrt{T}$. Let $g(\Delta)$ denote the expected number of arm pulls of the sub-optimal arm  $\Exs_{\Delta}[n_{1,T}]$.  Then from regret decomposition we know that,
\begin{align*}
    \regret_T \left(\frac{3c_1}{\sqrt{T}} \right) = \frac{3c_1}{\sqrt{T}} g \left(\frac{3c_1}{\sqrt{T}} \right) \leq c_1\sqrt{T}.
\end{align*}
\noindent The above inequality further implies that  $g(3c_1/\sqrt{T}) \leq T/3$. Define $\Delta_0 = 3c_1/\sqrt{T}$ observe that, $g(0) - g(\Delta_0) \geq T/6$ Consider the following string of inequalities
\begin{align}\label{eqn:insta-low-bddo}
\notag
    \sqrt{\Var_{\Delta_0}[n_{1,T}]} 
    = \sqrt{\Exs_{\Delta_0} \left[\left(n_{1,T} - g(\Delta_0) \right)^2 \right]}
    &\geq \sqrt{\Exs_{\Delta_0} \left[\left(n_{1,T} - g(\Delta_0) \right)^2 \  \mathbf 1
\left\{
n_{1,T} > \frac{g(0) + g(\Delta_0)}{2}
\right\}  \right]} \\[8pt] \notag
& \geq \frac{T}{12} \sqrt{ \prob_{\Delta_0}
\left(
n_{1,T} > \frac{g(0) + g(\Delta_0)}{2}
\right)} \\[8pt]
&\geq \frac{T}{12} \sqrt{ \prob_{\Delta_0}
\left(
n_{1,T} > \frac{5T}{12}
\right)}
\end{align}

\noindent As the event $\left\{n_{1,T} > \frac{5T}{12} \right\}$ is measurable with respect to the sigma field jointly generated by $\textbf{X}_T,\textbf{U}_T$ we have,
\begin{align*}
  \prob_{\Delta_0}
\left(
n_{1,T} > \frac{5T}{12}
\right)
= \Exs_{\Delta_0} \left[  \mathbf 1
\left\{
n_{1,T} > \frac{5T}{12}
\right\} \right]
= \Exs \left[  \mathbf 1
\left\{
n_{1,T} > \frac{5T}{12}
\right\} \frac{f^{(1)}_{\Delta_0}(\textbf{X}_T) \cdot f^{(2)}(\textbf{U}_T)}{f^{(1)}_{0}(\textbf{X}_T) \cdot f^{(2)}(\textbf{U}_T)}\right]
\end{align*}

\noindent As we are considering Bernoulli rewards, we have
\begin{align*}
    \frac{f^{(1)}_{\Delta_0}(\textbf{X}_T)}{f^{(1)}_{0}(\textbf{X}_T)}
    =  \frac{\prod^T_{t=1}(\frac{1}{2} + \Delta_0)^{X_{1,t}}(\frac{1}{2} - \Delta_0)^{1-X_{1,t}}}{\left(\frac{1}{2} \right)^{T}}
    = \left( 1 + \frac{6c_1}{\sqrt{T}} \right)^{\sum^T_{i=1} X_{1,i}} \left( 1 -\frac{6c_1}{\sqrt{T}} \right)^{(T -\sum^T_{i=1} X_{1,i})}
\end{align*}

\noindent Further simplification yields:
\begin{align}
\notag
  \prob_{\Delta_0}
\left(
n_{1,T} > \frac{5T}{12}
\right)
&=  \Exs \left[  \mathbf 1
\left\{
n_{1,T} > \frac{5T}{12}
\right\} \left( 1 + \frac{6c_1}{\sqrt{T}} \right)^{\sum^T_{i=1} X_{1,i}} \left( 1 -\frac{6c_1}{\sqrt{T}} \right)^{(T -\sum^T_{i=1} X_{1,i})}   \right] \\[8pt] \notag
&\geq \Exs \left[  \mathbf 1
\left\{
n_{1,T} > \frac{5T}{12}
\right\} \mathbf 1
\left\{
\sum^T_{i=1} X_{1,i} \leq T - \sqrt{T}
\right\}
\left( 1 + \frac{6c_1}{\sqrt{T}} \right)^{\sum^T_{i=1} X_{1,i}} \left( 1 -\frac{6c_1}{\sqrt{T}} \right)^{(T -\sum^T_{i=1} X_{1,i})}  \right] \\[8pt] \notag
& \geq \Exs \left[  \mathbf 1
\left\{
n_{1,T} > \frac{5T}{12}
\right\} \mathbf 1
\left\{
\sum^T_{i=1} X_{1,i} \leq T - \sqrt{T}
\right\}
   \right]  \left( 1 - \frac{6c_1}{\sqrt{T}} \right)^{\sqrt{T}} \\[8pt]
  &=  \prob \left(  
n_{1,T} > \frac{5T}{12}
, \ 
\sum^T_{i=1} X_{1,i} \leq T- \sqrt{T} 
   \right)  \left( 1 -\frac{6c_1}{\sqrt{T}} \right)^{\sqrt{T}}
\end{align}

\noindent It follows that,
\begin{align*}
    \liminf_{T \to \infty} \prob_{\Delta_0}
\left(
n_{1,T} > \frac{5T}{12}
\right)
\geq  \liminf_{T \to \infty} \prob \left(  
n_{1,T} > \frac{5T}{12}
, \ 
\sum^T_{i=1} X_{1,i} \leq T 
   \right) \times e^{-6c_1}
\end{align*}

\noindent Define events  $A_T := \left\{ n_{1,T} > \frac{5T}{7}  \right\}$ and  $\left\{\sum^T_{i=1} X_{1,i} \leq T -\sqrt{T} \right\}$. By an application of the central limit theorem, $\prob(B_T) \to 1$.  Now observe that,
\begin{align*}
    \frac{T}{2} = \Exs[n_{1,T}] 
    &= \Exs \left[n_{1,T} \ \mathbf 1
\left\{
n_{1,T} > \frac{5T}{7}
\right\} +  n_{1,T} \ \mathbf 1
\left\{
n_{1,T} \leq \frac{5T}{12}
\right\} \right] \\[8pt]
& \leq \frac{5T}{7}\prob
\left(
n_{1,T} \leq \frac{5T}{12}
\right) + T \  \prob
\left(
n_{1,T} > \frac{5T}{12}
\right) \\[8pt]
& = \frac{5T}{12} + \frac{7T}{12} \prob
\left(
n_{1,T} > \frac{5T}{12}
\right)
\end{align*}

\noindent By rearranging the terms in the above inequality we have
\begin{align}\label{eqn:insta-low-bddto}
   \prob
\left(
A_T
\right)  \geq \frac{1}{7}
\end{align}

\noindent Therefore,
\begin{align*}
   \liminf_{T \to \infty} \prob(A_T \cap B_T) = \liminf_{T \to \infty} [\prob(A_T) + \prob( B_T) - \prob(A_T \cup B_T)]
    \geq \liminf_{T \to \infty}[\prob(A_T) + \prob( B_T) - 1] \geq \frac{1}{7}
\end{align*}

\noindent Hence by combining equations~\eqref{eqn:insta-low-bddo} and~\eqref{eqn:insta-low-bddto} we have
\begin{align}\label{eqn:insta-low-bdte}
     \sqrt{\Var_{\Delta_0}[n_{1,T}]} \geq c_0 T \quad \text{where} \ c_0 = \frac{1}{12 \sqrt{7}}
\end{align}

\noindent Now we claim that Algorithm~\ref{alg:st-exp3} does not satisfy Lai and Wei's stability condition, which implies that~\ref{thm:berry}~\ref{thm:berry-i} is also not satisfied, as it is a stronger condition. If possible, suppose the algorithm satisfies sampling stability. There must exist deterministic scalars $n_T \to \infty$ such that
\[
\frac{n_{1,T}}{n^\star_{1,T}} \xrightarrow{p} 1.
\]

\noindent Now fix any $\gamma \in (0,1)$. By convergence in probability, for all large $T$,
\[
\prob_{\Delta_0} \left(|n_{1,T} - n^\star_{1,T}| \geq \gamma n^\star_{1,T}\right) \leq \gamma.
\]
Using the bound $0 \leq n_{1,T} \leq T$ and $n^\star_{1,T} \leq 2T$, we obtain
\[
\Exs_{\Delta_0} \!\left[(n_{i,T} - n^\star_{1,T})^2\right]
\leq (\gamma n^\star_{1,T})^2 + (T + n^\star_{1,T})^2\, \prob_{\Delta_0} \left(|n_{1,T} - n^\star_{1,T}| \geq \gamma \ n^\star_{1,T}  \right)
\leq \left(4\gamma^2 + 9\gamma \right) T^2.
\]
Application of Jensen's inequality implies that $\left(\Exs_{\Delta_0}[n_{1,T}] - n^\star_{1,T} \right)^2 \leq \Exs_{\Delta_0}[(n_{1,T} - n^\star_{1,T})^2]$, so
\[
\mathrm{Var}(n_{1,T})
\leq \Exs_{\Delta_0}\!\left[(n_{1,T} - n^\star_{1,T})^2\right] + \left(\Exs_{\Delta_0}[n_{1,T}] - n^\star_{1,T}\right)^2
\leq 2\,\Exs_{\Delta_0} \!\left[(n_{1,T} - n^\star_{1,T})^2\right]
\leq 2\!\left(4\gamma^2 + 9\gamma\right) T^2.
\]
Now, since $\gamma > 0$ is arbitrary, this implies $\mathrm{Var}_{\Delta_0}[n_{1,T}]/ T^2 \to 0$, which contradicts equation~\eqref{eqn:insta-low-bdte}.

\subsection*{Proof of Lemma~\ref{lemma:actual regret bd}}
\label{proof:regret-main}

    \noindent Define $p^\star$ to be the optimal policy in the unregularized setting, that is, $p^\star_j:=\frac{1}{|I|}\I\{j\in I\}$. Note that $\mu^\star=\langle \mu, p^\star\rangle$. For all comparator $y\in\DE$ and $T\ge K$ we claim that
    \begin{align}
        \label{eq:almost regret bd}
        \E\left[\langle \mu,\xT-y\rangle\right]\le \frac{C_\alpha(K,T)}{\sqrt{T}}+\frac{K}{\sqrt{T}}+\frac{2K \gamma_T^2 \log^2 T }{\sqrt{T}}
    \end{align}

   \noindent  Suppose claim~\eqref{eq:almost regret bd} stated above is correct.  Define $p_{\vars}$ to be a $K$ dimensional vector in the probability simplex such that it is equal to $\vars$, for $K - |I|$ coordinates, and otherwise equals $\frac{1-(K-|I|)\vars}{|I|}$ for renormalization purpose. Then observe that,
   \begin{align*}
       ||p^\star-p_{\vars}||_1=\vars\cdot(K-|I|)+\left(\frac{1}{|I|}-\frac{1-(K-|I|)\vars}{|I|}\right)\cdot|I|=2(K-|I|)\vars \le 2K\vars.
   \end{align*}
   Plugging $y=p_{\vars} \in\DE$ in claim~\eqref{eq:almost regret bd}, we have
    \begin{align*}
        \E[\langle\mu,\xT-p^\star\rangle]&=\E[\langle\mu,\xT-p_{\vars}\rangle]+\E[\langle\mu,p_{\vars}-p^\star\rangle]\\
        &\le\E[\langle\mu,\xT-p_{\vars}\rangle]+||\mu||_\infty\cdot||p^\star-p_{\vars}||_1\\
        &\le\E[\langle\mu,\xT-p_{\vars} \rangle]+\frac{2K \hTto}{\sqrt{T}}\\
        &\le\frac{C_\alpha(K,T)+K+2K \gamma_T^2 \log^2 T+2K\hTto}{\sqrt{T}}
    \end{align*}

    \noindent For $\alpha\in\left[0,\frac{1}{3}\right)$, observe that $C_\alpha(K,T)+K\le 4K\log T$. On the other hand, for $\alpha\in\left[\frac{1}{3},1\right]$, we have $C_\alpha(K,T)+K\le4K\log K$. Thus we have,
    \begin{align*}
    \regret(T)\le \begin{cases}
        \left(4\log T+2 \gamma_T^2\log^2 T + 2\hTto \right)\cdot K\sqrt{T} ,&\alpha\in\left[0,\frac{1}{3}\right)\\
        \\
        \left(4\log K+2\gamma^2_T\log^2 T +2 \hTto \right)\cdot K\sqrt{T} ,&\alpha\in\left[\frac{1}{3},1\right]
    \end{cases}
\end{align*}
where,
\begin{align*}
    C_\alpha(K, \vars) = \begin{cases}
3K \log(\frac{1}{\vars}), & \text{if } \alpha \in \left[ 0, \frac{1}{3}\right), \\[6pt]
3 K \log K, & \text{if } \alpha \in \left[ \frac{1}{3} , 1\right].
\end{cases}
\end{align*}
    provided claim~\eqref{eq:almost regret bd} holds. To justify it, note that
    \begin{align*}
       R(y)-R(\xT)= \sum_{i=1}^K\left[-\log(y_i)+\log(\overline{p}_{T,i})\right] \le-\sum_{i=1}^K\log(y_i)\le K\log(1/\vars) 
    \end{align*}
     
\noindent Thus, from equation~\eqref{eqn:ffunc-upp-bdd} it follows
    \begin{align}
        \label{eq:regret general y upper bd}
        \E[\langle\mu,\xT-y\rangle]&\le \frac{C_\alpha(K,\vars)}{\setpz T}+\setpz K+\frac{\setpz \pen^2}{\vars^2}+\pen K\log\left(\frac{1}{\vars}\right)
    \end{align}
    Plugging the aforementioned choices of $\setpz,\pen$ and $ \vars$, we obtain
    \begin{align*}
        &\E[\langle\mu,\xT-y\rangle]\\
        \le&\; \frac{C_\alpha(K,T)}{\sqrt{T}}+\frac{K}{\sqrt{T}}+\frac{1}{\sqrt{T}}\cdot\frac{\gamma_T^2 \log^2 T}{\hTto^2}+\frac{K \gamma_T \log^2 T }{\sqrt{T}}\\
        &\leq \frac{C_\alpha(K,T)}{\sqrt{T}}+\frac{K}{\sqrt{T}}+\frac{2K \gamma_T^2 \log^2 T }{\sqrt{T}}
    \end{align*}
   \noindent Hence, our proof is complete.

\subsection{Proof of Lemma~\ref{lemma:ZT-concet}}
    First observe that $g$ is non-negative with $1$ being the only root. Secondly, for $z\in[1/2,2]$, we have $g(z)\ge(z-1)^2/4$. Now, fix $\delta>0$.
    \begin{align*}
        \prob\left(|Z_T-1|>\delta\right)&=\prob\left(|Z_T-1|>\delta,Z_T\in\left[\frac{1}{2},2\right]\right)+\prob\left(|Z_T-1|>\delta,Z_T\notin\left[\frac{1}{2},2\right]\right)\\
        &\le\prob\left((Z_T-1)^2>\delta^2,Z_T\in\left[\frac{1}{2},2\right]\right)+\prob\left(Z_T\notin\left[\frac{1}{2},2\right]\right)\\
        &\le\prob\left(g(Z_T)\ge\frac{(Z_T-1)^2}{4}>\frac{\delta^2}{4},Z_T\in\left[\frac{1}{2},2\right]\right)+\prob\left(g(Z_T)\ge\min\left\{g\left(\frac{1}{2}\right),g(2)\right\}\right)\\
        &\le\prob\left(g(Z_T)>\frac{\delta^2}{4}\right)+\prob\left(g(Z_T)>\frac{1}{10}\right)\\
        &\le\E\left[g(Z_T)\right]\cdot\left(\frac{4}{\delta^2}+10\right)
    \end{align*}
    
\noindent Thus, the lemma holds.

\subsection{Proof of Lemma~\ref{lemma:IS-bound}}

We begin the proof by recalling that the standard log-barrier regularizer is $\psi(p) := -\sum_a\ln p_a$. Observe that we can rewrite $f_{\lmep}$ as $f_{\lmep}(x)=\pen \psi(x)+\langle\alpha,x\rangle$ for some $\alpha\in\R^K$, since $\frac{1}{\vars} \sum_a\rho_a$ is constant over the simplex. Now,
    \begin{align*}
        f_{\lmep}(x)-f_{\lmep}(y)&=\pen\left(\psi(x)-\psi(y)\right)+\langle\alpha,x-y\rangle\\
        &=\pen\left[D_\psi(x,y) + \langle\nabla\psi(y),x-y\rangle\right] + \langle\alpha,x-y\rangle\\
        &\overset{(*)}{=}\pen\; IS(x,y)+\langle\alpha+\pen\nabla\psi(y),x-y\rangle\\
        &=\pen\; IS(x,y)+\langle\nabla f_{\lmep}(y),x-y\rangle
    \end{align*}
   Equality $(*)$ holds since $\nabla \left[\psi(y)\right]_j=-\frac{1}{y_j}$ implies that,
    \begin{align*}
        D_{\psi}(x,y)&=\psi(x)-\psi(y)-\langle\nabla\psi(y),x-y\rangle\\
        &=\sum_{j=1}^K\left[-\ln(x_j)+\ln(y_j)+\frac{x_j-y_j}{y_j}\right]\\
        &=\sum_{j=1}^K\left[\frac{x_j}{y_j}-\ln\left(\frac{x_j}{y_j}\right)-1\right]\\
        &=IS(x,y)
    \end{align*}
    
\noindent Now by plugging $y=x_{\lmep}$, it follows using \textit{first-order optimality condition} that for $x\in\DE$ and $\pen>0$,
    \begin{align}\label{eqn:f-to-IS}
        f_{\lmep}(x)-f_{\lmep}(p^\star_{\lmep})\ge\pen\; IS(x,p^\star_{\lmep})
    \end{align}

\noindent Therefore, it suffices to prove the upper bound on $\Exs[ f_{\lmep}(x)-f_{\lmep}(p^\star_{\lmep})].$ Consider the following lemma.
\begin{lemma}
    \label{lem:master eqn}
     For any $y\in\DE$,
    \begin{align*}
        \E\left[f_{\lmep}(\xT)-f_{\lmep}(y)\right]&\le \E\left[\frac{D_{\phi_\alpha}(y,x_1)}{\setpz T}+\frac{\setpz}{T}\sum_{t=1}^T \left|\left|\widehat{\ell}_t\right|\right|_{x_t,\alpha}^2+\frac{\setpz\pen^2}{T}\sum_{t=1}^T \left|\left|\nabla R(x_t)\right|\right|_{x_t,\alpha}^2\right]
    \end{align*}
    where $||v||_{x_t,\alpha}^2:=\sum_{i=1}^K v_i^2 x_{t,i}^{(2-\alpha)}$ and $\alpha\in[0,1]$.
\end{lemma}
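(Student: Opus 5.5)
The plan is to run the standard online mirror descent analysis with the surrogate gradients $\widetilde\ell_t = \widehat\ell_t + \pen\nabla R(x_t)$, convert the linearized regret into a gap in $f_{\lmep}$ via convexity, and split the local-norm term into the loss-estimate part and the regularizer part. Throughout, $x_t$ denotes the iterate $p_t$ and $x_1 = z_1$ the uniform start.

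\emph{Step 1 (convexity and Jensen).} Since $f_{\lmep}$ is convex and $\xT = \frac1T\sum_t x_t$, Jensen gives $f_{\lmep}(\xT) - f_{\lmep}(y) \le \frac1T\sum_t [f_{\lmep}(x_t)-f_{\lmep}(y)] \le \frac1T\sum_t \langle \nabla f_{\lmep}(x_t), x_t - y\rangle$. Because $x_t$ is $\mathcal F_{t-1}$-measurable and $\Exs[\widetilde\ell_t\mid\mathcal F_{t-1}] = \mu + \pen\nabla R(x_t) = \nabla f_{\lmep}(x_t)$, we have, for the deterministic comparator $y$,
\[
\Exs\langle\nabla f_{\lmep}(x_t),x_t-y\rangle = \Exs\langle\widetilde\ell_t,x_t-y\rangle .
\]
This reduces the claim to bounding the pseudo-regret of OMD on the linear losses $\widetilde\ell_t$.

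\emph{Step 2 (OMD one-step inequality).} The update $z_{t+1}$ satisfies $\nabla\phi_\alpha(z_{t+1}) = \nabla\phi_\alpha(x_t) - \setpz\widetilde\ell_t$. The projection $x_{t+1}$ onto $\DE$ obeys the generalized Pythagorean inequality. Together these give the standard bound
\[
\setpz\langle\widetilde\ell_t,x_t-y\rangle \le D_{\phi_\alpha}(y,x_t) - D_{\phi_\alpha}(y,x_{t+1}) + D_{\phi_\alpha}(x_t,z_{t+1}).
\]
Telescoping over $t$ and dividing by $\setpz T$ produces the first term $D_{\phi_\alpha}(y,x_1)/(\setpz T)$. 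It remains to show
\[
D_{\phi_\alpha}(x_t,z_{t+1}) \lesssim \setpz^2\|\widetilde\ell_t\|^2_{x_t,\alpha}.
\]
For the Tsallis maps, $\nabla^2\phi_\alpha(p) = \operatorname{diag}(p_i^{\alpha-2})$, whose inverse is $\operatorname{diag}(p_i^{2-\alpha})$. Writing $D_{\phi_\alpha}(x_t,z_{t+1}) = D_{\phi^*_\alpha}(\nabla\phi_\alpha(z_{t+1}),\nabla\phi_\alpha(x_t))$ and applying a second-order Taylor bound for the dual potential yields
\[
D_{\phi_\alpha}(x_t,z_{t+1}) \le \tfrac{\setpz^2}{2}\sum_i \widetilde\ell_{t,i}^2\,\xi_i^{2-\alpha},
\]
where $\xi$ lies on the segment between $x_t$ and $z_{t+1}$.

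\emph{Step 3 (the main obstacle: controlling $\xi$).} We must replace $\xi_i$ by $x_{t,i}$ up to a constant. The key point is that $\nabla R_\varepsilon \ge 0$ on $\DE$ (this is exactly why the $\frac1\varepsilon\sum p_i$ term was added) and that $\widehat\ell_t \ge 0$. Hence every coordinate of $\widetilde\ell_t$ is nonnegative, and the dual step only decreases $\nabla\phi_\alpha$ coordinatewise. This forces $z_{t+1,i} \le x_{t,i}$ for every $\alpha\in[0,1]$, since $\nabla\phi_\alpha$ is increasing in each coordinate. So $\xi_i \le x_{t,i}$, and because $2-\alpha>0$ we get $\xi_i^{2-\alpha}\le x_{t,i}^{2-\alpha}$. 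I would verify the monotonicity carefully for the $\alpha=0$ log-barrier and $\alpha=1$ entropy endpoints, where the mirror map is unbounded at the boundary. If the unconstrained step leaves the positive orthant, the sum-to-one normalization is deferred to the projection step, which is where this care is needed.

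\emph{Step 4 (splitting the norm).} Apply $(a+b)^2\le 2a^2+2b^2$ to $\widetilde\ell_t = \widehat\ell_t + \pen\nabla R(x_t)$, absorbing constants (or keep the factor $\tfrac12$ from Taylor to offset the $2$). This gives the two remaining terms
\[
\frac{\setpz}{T}\sum_t\|\widehat\ell_t\|^2_{x_t,\alpha}
\quad\text{and}\quad
\frac{\setpz\pen^2}{T}\sum_t\|\nabla R(x_t)\|^2_{x_t,\alpha}.
\]
Taking expectations completes the bound stated in Lemma~\ref{lem:master eqn}.
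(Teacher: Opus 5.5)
Your proposal is correct and follows the paper's proof essentially step for step: the three-point identity with the generalized Pythagorean inequality, the dual Taylor bound on $D_{\phi_\alpha}(x_t,z_{t+1})$ with the intermediate point controlled by $\nabla\phi_\alpha(z_{t+1})\le\nabla\phi_\alpha(x_t)$ coordinatewise, the $(a+b)^2\le 2a^2+2b^2$ split absorbed by the factor $\tfrac12$, and convexity plus Jensen with unbiasedness of $\widetilde\ell_t$. You make explicit what the paper leaves implicit, namely that this monotonicity comes from $\widehat\ell_t\ge 0$ and $\nabla R_\varepsilon\ge 0$ on $\DE$; the same fact also settles your worry about $z_{t+1}$ leaving the positive orthant, because each coordinate of $\nabla\phi_\alpha$ has range unbounded below for every $\alpha\in[0,1]$.
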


\noindent We make the following claim:
    \begin{align}
        \label{claim:bregamn-bd}
        D_{\phi_\alpha}\left(y,x_1\right)\le
        \begin{cases}
            K\log\left(\frac{1}{\varepsilon}\right),&\alpha=0\\
            3K\log\left(\frac{1}{\varepsilon}\right),&0<\alpha<\frac{1}{3}\\
            3K\log\left(K\right),&\frac{1}{3}\le\alpha<1\\
            \log\left(K\right),&\alpha=1
        \end{cases}
    \end{align}

\noindent Assume the claim is correct.  Next,
    \begin{align}\label{eqn:lnorm-ctrl}
        \E\left[\left|\left| \widehat{\ell}_t\right|\right|_{x_t,\alpha}^2\right]\le\E\left[\sum_{i=1}^K \widehat{\ell}^{\; 2}_{t,i}\;x_{t,i}\right]= \E\left[\sum_{i=1}^K \frac{\I\{A_t=i\}\ell_t^2}{x_{t,i}}\right] \le \E\left[\sum_{i=1}^K \frac{\I\{A_t=i\}}{x_{t,i}}\right]=K
    \end{align}
    \\
    We bound the last term as follows:
    \begin{align}\label{eqn:Rgrad-ctrl}
        \E\left[\left|\left|\nabla R(x_t)\right|\right|_{x_t,\alpha}^2\right]&\le\E\left[\sum_{i=1}^K\left(-\frac{1}{x_{t,i}}+\frac{1}{\varepsilon}\right)^2 x_{t,i}\right]\le \frac{1}{\varepsilon^2} 
    \end{align}
    Combining equations~\eqref{claim:bregamn-bd},~\eqref{eqn:lnorm-ctrl} and~\eqref{eqn:Rgrad-ctrl}, we have 
\begin{align}\label{eqn:ffunc-upp-bdd}
     \E\left[f_{\lmep}(\xT)-f_{\lmep}(y)\right]&\le  \frac{C_\alpha(K,\varepsilon)}{\setpz T}  +  \setpz K  +  \frac{\setpz^2 \pen}{\vars^2}
\end{align}

\noindent We conclude from equation~\eqref{eqn:f-to-IS} that once claim~\eqref{claim:bregamn-bd} is proved, our proposed result holds.     

\subsubsection*{Proof of Claim~\eqref{claim:bregamn-bd}}
    
\noindent We begin the proof by recalling from equation~\eqref{defn:mirror-map-alpha} that,
\begin{align}
  \phi_\alpha(p) :=
  \begin{cases}
    -\sum_{i=1}^K
    \dfrac{p_i^{\alpha} - \alpha p_i - (1-\alpha)}
    {\alpha(1-\alpha)},
    & 0 < \alpha < 1,\\[6pt]
    -\sum_{i=1}^K (\log p_i - p_i + 1),
    & \alpha = 0,\\[6pt]
    \;\sum_{i=1}^K (p_i\log p_i - p_i + 1),
    & \alpha = 1.
  \end{cases}
\end{align}

\noindent Since $x_1 = (1/K, \ldots, 1/K)'$, we compute each term in the Bregman divergence
$D_{\phi_\alpha}(y,x_1) = \phi_\alpha(y) - \phi_\alpha(x_1) - \langle \nabla\phi_\alpha(x_1), y - x_1\rangle$ separately. We shall bound  $D_{\phi_\alpha}(y,x_1)$ 

\noindent Let us begin with the case $\alpha = 0$.
Since $\nabla\phi_0(p) = -(p_i^{-1} - 1)_i$, evaluating at $x_1$ gives 
$\nabla\phi_0(x_1) = -(K-1)\mathbf{1}$, which is a constant vector. Hence the 
gradient term vanishes:
\begin{align*}
    \langle\nabla\phi_0(x_1), y - x_1\rangle 
    = -(K-1)\sum_{i=1}^K\left(y_i - \frac{1}{K}\right) = 0.
\end{align*}
The difference of potentials gives,
\begin{align*}
    \phi_0(y) - \phi_0(x_1) 
    &= -\sum_{i=1}^K\left[(\log y_i - y_i + 1) - (\log K^{-1} - K^{-1} + 1)\right]\\
    &= -\sum_{i=1}^K\left[\log y_i + \log K - \left(y_i - \frac{1}{K}\right)\right]
    = -\sum_{i=1}^K[\log y_i + \log K],
\end{align*}
where the last step uses $\sum_i y_i = \sum_i K^{-1} = 1$. Since $y_i \geq \vars$ 
for all $i$, we have $-\log y_i \leq \log(1/\vars)$ for each term, and hence
\begin{align}\label{eqn:d-phizo}
    D_{\phi_0}(y,x_1) = -\sum_{i=1}^K[\log y_i + \log K] \leq K\log\left(\frac{1}{\vars}\right). \qquad
\end{align}

\noindent Since $\nabla\phi_1(p) = (\log p_i)_i$, evaluating at $x_1$ gives 
$\nabla\phi_1(x_1) = -\log(K)\mathbf{1}$, which is a constant vector. Hence the 
gradient term vanishes:
\begin{align*}
    \langle\nabla\phi_1(x_1), y - x_1\rangle 
    = -\log(K)\sum_{i=1}^K\left(y_i - \frac{1}{K}\right) = 0.
\end{align*}
The difference of potentials gives,
\begin{align*}
    \phi_1(y) - \phi_1(x_1) 
    &= \sum_{i=1}^K\left[(y_i\log y_i - y_i + 1) - (K^{-1}\log K^{-1} - K^{-1} + 1)\right]\\
    &= \sum_{i=1}^K\left[y_i\log y_i + K^{-1}\log K - \left(y_i - K^{-1}\right)\right]
    = \sum_{i=1}^K y_i\log y_i + \log K,
\end{align*}
where the last step uses $\sum_i y_i = \sum_i K^{-1} = 1$. Adding the gradient 
term (which vanishes) and noting $x_{1,i} = K^{-1}$:
\begin{align*}
    D_{\phi_1}(y,x_1) 
    = \sum_{i=1}^K y_i\log y_i + \log K 
    = \sum_{i=1}^K y_i\log\left(\frac{y_i}{x_{1,i}}\right).
\end{align*}
Since $\sum_i y_i\log y_i \leq 0$ (as $y_i \in [0,1]$ implies $\log y_i \leq 0$), 
we conclude
\begin{align}\label{eqn:d-phio}
    D_{\phi_1}(y,x_1) = \sum_{i=1}^K y_i\log\left(\frac{y_i}{x_{1,i}}\right) 
    \leq \log K. 
\end{align}

\noindent For $\alpha \in (0,1)$, the $i$-th partial derivative is
\begin{align*}
    \frac{\partial \phi_\alpha}{\partial p_i}(p) = -\frac{p_i^{\alpha-1} - 1}{1-\alpha}.
\end{align*}
Evaluating at $x_1$ gives $\nabla\phi_\alpha(x_1) = -\frac{K^{1-\alpha}-1}{1-\alpha}\mathbf{1}$, a constant vector. Hence,
\begin{align*}
    \langle \nabla\phi_\alpha(x_1), y - x_1\rangle = -\frac{K^{1-\alpha}-1}{1-\alpha}\sum_{i=1}^K\left(y_i - \frac{1}{K}\right) = 0,
\end{align*}
since $\sum_i y_i = \sum_i K^{-1} = 1$, we have
\begin{align*}
    \phi_\alpha(y) - \phi_\alpha(x_1) 
    &= -\frac{1}{\alpha(1-\alpha)}\sum_{i=1}^K\left[(y_i^\alpha - \alpha y_i - (1-\alpha)) - (K^{-\alpha} - \alpha K^{-1} - (1-\alpha))\right]\\
    &= -\frac{1}{\alpha(1-\alpha)}\sum_{i=1}^K\left[y_i^\alpha - K^{-\alpha} - \alpha\left(y_i -  K^{-1}\right)\right]\\
    &= -\frac{1}{\alpha(1-\alpha)}\left[\sum_{i=1}^K(y_i^\alpha - K^{-\alpha}) - \alpha\left(\sum_{i=1}^K y_i - 1\right)\right]\\
    &= -\frac{1}{\alpha(1-\alpha)}\sum_{i=1}^K\left[y_i^\alpha - K^{-\alpha}\right],
\end{align*}
where the last step uses $\sum_i y_i = 1$. Since the gradient term vanishes, we conclude
\begin{align*}
    D_{\phi_\alpha}(y,x_1) = -\frac{1}{\alpha(1-\alpha)}\sum_{i=1}^K\left[y_i^\alpha - K^{-\alpha}\right]. \qquad\qedhere
\end{align*}

    \noindent For $\alpha\in (0,\frac{1}{3})$, observe that
    \begin{align}
    D_{\phi_\alpha}(y,x_1)=-\frac{1}{\alpha(1-\alpha)}\sum_{i=1}^K \left[y_i^\alpha-K^{-\alpha}\right]\le\frac{K}{\alpha(1-\alpha)} \left[\left(K^{-1}\right)^\alpha -\varepsilon^\alpha\right]
    \end{align}
    Applying Mean Value Theorem~\citep[\text{Theorem 5.9}]{rudin1976principles} on functions
    \begin{align*}
        f_1(\alpha)=\left(K^{-1}\right)^\alpha-\varepsilon^\alpha\qquad\text{and}\qquad f_2(\alpha)=\alpha(1-\alpha)
    \end{align*}
    we get, for some $c_{\alpha}\in (0,\alpha)$,
    \begin{align}\label{eqn:d-phi-alto}
        \frac{\left(K^{-1}\right)^\alpha -\varepsilon^\alpha}{\alpha(1-\alpha)}=\frac{f_1(\alpha)-f_1(0)}{f_2(\alpha)-f_2(0)}=\frac{f_1^\prime(c_\alpha)}{f_2^\prime(c_\alpha)} =\frac{\log\left( K^{-1}\right)\cdot K^{-c_\alpha}-\log\varepsilon\cdot\varepsilon^{c_\alpha}}{1-2c_\alpha}\le 3\log\left(\frac{1}{\varepsilon}\right)
    \end{align}
    The last inequality above follows from the observation that $c_{\alpha}<\alpha<\frac{1}{3}<1$. Hence, the claim holds for $\alpha\in\left(0,\frac{1}{3}\right)$. Finally, for $\alpha\in\left[\frac{1}{3},1\right)$, we have
    \begin{align*}
        D_{\phi_\alpha}(y,x_1)=-\frac{1}{\alpha(1-\alpha)}\sum_{i=1}^K \left[y_i^\alpha-K^{-\alpha}\right]\le\frac{3(K^{1-\alpha}-1)}{(1-\alpha)}
    \end{align*}
    By Mean Value Theorem, for some $0<v_\alpha<1-\alpha<1$, we conclude
    \begin{align}\label{eqn:d-phi-alte}
        \frac{K^{1-\alpha}-1}{1-\alpha}=\log K\cdot K^{v_\alpha}\le K \log K
    \end{align}
    Overall, by combining equations~\eqref{eqn:d-phizo},~\eqref{eqn:d-phio},~\eqref{eqn:d-phi-alto} and~\eqref{eqn:d-phi-alte} we conclude that for any $\alpha\in[0,1]$, $D_{\phi_\alpha}(y,x_1)\le C_\alpha(K,\varepsilon)$.

\subsection{Proof of Lemma~\ref{lemma:Eone-comp}}

\noindent We begin the proof from the following observation.
\begin{align}
\notag
    | \mathcal{D}_{T}| \ 
    &\leq \ \bigg| \sqrt{\frac{\Exs \left[ T \xbta\right]}{ n_{a,T}}} - 1 \bigg| \cdot | Q_{T} | \\[8pt] \notag
    & \leq \ \bigg| \frac{\Exs \left[  \xbta\right]}{ \xbta } - 1 \bigg| \times \left( \sqrt{\frac{\Exs \left[ T \xbta\right]}{ n_{a,T}}} + 1\right)^{-1} \cdot | Q_{T} | \\[8pt]
    &\ \leq \ \bigg| \frac{\Exs \left[  \xbta\right]}{ \xbta } - 1 \bigg| \cdot | Q_{T} | 
\end{align}

\noindent Let $m_T$ be any arbitrary sequence of positive real numbers diverging to infinity. Define event $\mathcal{M}(T) := \{ |\mathcal{Q}_T| < m_T \}$  and recall that $\Eone : = \{ | \mathcal{D}_{1,T}| \leq \kappa_{1,T} \}$. We are interested in quantifying the probability $\prob(\Eone^c)$. Observe that,
\begin{align*}
   \prob(\Eone^c) 
   &\leq  \prob(\Eone^c, \MEone ) \ + \  \prob(\Eone^c, \MEone^c ) \\[8pt]
   &\leq \prob \left( \bigg| \frac{\Exs \left[  \xbta\right]}{ \xbta } - 1 \bigg| \cdot | Q_{T} | > \kappa_{1,T}, \MEone \right) + \prob( \MEone^c ) \\[8pt]
   & \leq \prob \left( \bigg| \frac{\Exs \left[  \xbta\right]}{ \xbta } - 1 \bigg|  > \frac{\kappa_{1,T}}{m_T}, \MEone \right) + \prob( \MEone^c ) \\[8pt]
   & \leq \prob \left( \bigg| \frac{\Exs \left[  \xbta\right]}{ \xbta } - 1 \bigg|  > \frac{\kappa_{1,T}}{m_T} \right) + \prob( \MEone^c )
\end{align*}

\noindent Fix a $\delta>0$ and define event $\Etwo := \{ \xbta/\Exs[\xbta] \in (1-\delta, 1+\delta)\}$. Hence we can decompose further as follows
\begin{align*}
  \prob \left( \bigg| \frac{\Exs \left[  \xbta\right]}{ \xbta } - 1 \bigg|  > \frac{\kappa_{1,T}}{m_T} \right) 
  &\leq \prob \left( \bigg| \frac{\Exs \left[  \xbta\right]}{ \xbta } - 1 \bigg|  > \frac{\kappa_{1,T}}{m_T}, \Etwo \right) \ + \ \prob(\Etwo^c) \\[8pt]
  & = \prob \left( \bigg| \frac{\Exs \left[  \xbta\right]}{ \xbta } \bigg|  \bigg|\frac{\xbta}{ \Exs \left[  \xbta\right] }  - 1 \bigg|  > \frac{\kappa_{1,T}}{m_T}, \Etwo \right) \ + \ \prob(\Etwo^c) \\[8pt]
  & \leq  \prob \left(   \bigg|\frac{\xbta}{ \Exs \left[  \xbta\right] }  - 1 \bigg|  > \frac{\kappa_{1,T} \  (1-\delta)}{m_T}, \Etwo \right) \ + \ \prob(\Etwo^c) 
\end{align*}

\noindent By combining the last two equations and applying Markov's theorem, we have
\begin{align} \label{eqn:Eone-compo}
\notag
  \prob(\Eone^c) 
  &\leq \frac{m_T}{\kappa_{1,T} \ (1- \delta)} \Exs \left[ \bigg|\frac{\xbta}{ \Exs \left[  \xbta\right] }  - 1 \bigg| \right] \ + \ \prob(\Etwo^c) \ + \  \prob(\MEone^c) \\[8pt]
  & \leq \frac{m_T \ \rate}{\kappa_{1,T} \ (1- \delta)} \ + \ \prob(\Etwo^c) \ + \  \prob(\MEone^c)
\end{align}

\noindent Now observe that,
\begin{align} \label{eqn:Eone-compto}
    \prob(\Etwo^c) 
    &= \prob \left(\bigg| \frac{\xbta}{\Exs[\xbta]} - 1 \bigg| > \delta \right)
    \leq \frac{1}{\delta} \cdot \Exs \left[\bigg| \frac{\xbta}{\Exs[\xbta]} - 1 \bigg| \right]
    \leq \frac{2}{\delta} \cdot \rate
\end{align}

\noindent The last inequality holds by applying Markov's inequality along with  equation~\eqref{eqn:Lto-bd} and Lemma~\ref{lemma:Ltwo-Reg} in the proof of Lemma~\ref{lemma:berry-qt}. Now,
\begin{align} \label{eqn:Eone-compte}
\notag
    \prob(\MEone^c) 
    &= \prob ( |\mathcal{Q}_T| > m_T) \\[8pt] \notag
    &= \prob ( \mathcal{Q}_T > m_T) + \prob ( \mathcal{Q}_T <- m_T) \\[8pt] \notag
    & \overset{(*)}{\leq} 1-\Phi(m_T) + \Phi(-m_T) \ + \ 2 \cdot \rate^{1/3} \\[8pt] \notag
    &= 2 \left[1-\Phi(m_T) \right] \ + \ 2 \cdot \rate^{1/3} \\[8pt]
    &  \overset{(**)}{\leq} \frac{2 \phi(m_T)}{m_T} \ + \ 2 \cdot \rate^{1/3} 
\end{align}

\noindent Inequality $(**)$ above follows from \emph{Mill's inequality} whereas inequality $(*)$ holds by observing that Lemma~\ref{lemma:berry-qt} implies,
\begin{align*}
    \sup_{x} |\prob(\mathcal{Q}_T < x) - \Phi(x)| \leq \Psi^{1/3}_T \quad \text{and hence,} \quad
    \sup_{x} |\prob(\mathcal{Q}_T > x) - [1-\Phi(x)]| \leq \Psi^{1/3}_T 
\end{align*}

\noindent Therefore, by combining equations~\eqref{eqn:Eone-compo},~\eqref{eqn:Eone-compto} and~\eqref{eqn:Eone-compte} and fixing $\delta = 1/2$ we have,
\begin{align*}
     \Prob(\Eone^c)
     \leq  \frac{2m_T \ \rate}{\kappa_{1,T} } \ + \ \frac{2 \phi(m_T)}{m_T} \ + \  4 \cdot \rate  + \ 2 \cdot \rate^{1/3}
\end{align*}

\subsection{Proof of Lemma~\ref{lemma:jojto-upp-bdd}}

\noindent Let us recall that
\begin{align}
   \mathcal{J}_{1,T}:= E\left[  \left|\frac{n_{a,T}}{\overline p_{T,a}}-1\right|\right] \quad \text{and,} \quad  \mathcal{J}_{2,T}:= E\left[  \left|\frac{\overline p_{T,a}}{\xtas}-1\right|\right]
\end{align}

\noindent Define $\mathcal{Z}_{1,T} := \frac{n_{a,T}}{\overline p_{T,a}}-1$ and $\mathcal{Z}_{2,T} := \frac{\overline p_{T,a}}{\xtas}-1$. Then it follows that,
\begin{align}
    E\left[  \left|\frac{n_{a,T}}{T\xtas}-1\right|\right] 
    = E\left[  \left|\left(\mathcal{Z}_{1,T} + 1 \right) \left(\mathcal{Z}_{2,T} + 1 \right)-1\right|\right] 
    \leq  E\left[  \left|\mathcal{Z}_{1,T} \right|\right] + E\left[  \left|\mathcal{Z}_{2,T} \right|\right] + E\left[  \left|\mathcal{Z}_{1,T} \mathcal{Z}_{2,T} \right|\right]
\end{align}

\noindent Note that $E\left[  \left|\mathcal{Z}_{i,T} \right|\right] = \mathcal{J}_{i,T}$ for $i=1,2$ and hence it suffices to prove that $E\left[  \left|\mathcal{Z}_{1,T} \mathcal{Z}_{2,T} \right|\right]$ is $o(\mathcal{J}_{1,T})$. Consider two positive real valued sequences $\delta_T$ and $\delta_{2,T}$ be the sequences chosen in the proof of Theorem~\ref{thm:berry-i}, and note the following string of inequalities
\begin{align*}
   &E\left[  \left|\mathcal{Z}_{1,T} \mathcal{Z}_{2,T} \right|\right] \\[8pt]
   &\overset{(i)}{\leq} \delta_{2,T} \mathcal{J}_{1,T} + \frac{1}{\vars} E\left[ |\mathcal{Z}_{1,T}| \mathbf 1
\left\{
\frac{n_{T,a}}{T \overline{p}_{T,a}}
\notin (1-\delta_{2,T},1+\delta_{2,T})
\right\}\right] \\[8pt]
&\overset{}{\leq} \delta_{2,T} \mathcal{J}_{1,T} + \frac{\delta_{T}}{\vars} 
\prob\left(
\frac{n_{T,a}}{T \overline{p}_{T,a}}
\notin (1-\delta_{2,T},1+\delta_{2,T})
\right)\\[8pt]
& \quad + \ \frac{1}{\vars^2} 
\prob\left(
\frac{n_{T,a}}{T \overline{p}_{T,a}}
\notin (1-\delta_{2,T},1+\delta_{2,T}), \frac{\overline p_{T,a}}{\xtas}
\notin (1-\delta_T,1+\delta_T) \right)\\[8pt]
&\leq \delta_{2,T} \mathcal{J}_{1,T} + \frac{\delta_{T}}{\vars} 
\prob\left(
\frac{n_{T,a}}{T \overline{p}_{T,a}}
\notin (1-\delta_{2,T},1+\delta_{2,T})
\right) + \frac{1}{\vars^2} 
\prob\left(
\frac{n_{T,a}}{T \overline{p}_{T,a}}
\notin (1-\delta_{2,T},1+\delta_{2,T}) \right)
\end{align*}

\noindent In the above chain of inequalities, $(i)$ follows from that fact that both $n_{a,T}$ and $T \overline{p}_{T,a}$ are bounded by $T$ and hence $|n_{a,T} - T \overline{p}_{T,a}|/T\overline{p}_{T,a}$ is bounded above by $T/T\vars$ and hence $T$ gets eliminated. Now we have shown in the proof of Theorem~\ref{thm:berry-i}, for the choice of $\delta_{2,T} = 2\sqrt{\frac{\log T}{\sqrt{T}}} +  \frac{\log T}{\sqrt{T}}$ we have
\begin{align*}
  \prob\left(
\frac{n_{T,a}}{T \overline{p}_{T,a}}
\notin (1-\delta_{2,T},1+\delta_{2,T})
\right) \leq  \frac{2 }{ T^{\hTto}}  \quad \text{where} \ \vars = \frac{\hTto}{\sqrt{T}}.
\end{align*}

\noindent Hence, it follows that,
\begin{align*}
    E\left[  \left|\mathcal{Z}_{1,T} \mathcal{Z}_{2,T} \right|\right]
    \leq \left(2\sqrt{\frac{\log T}{\sqrt{T}}} +  \frac{\log T}{\sqrt{T}} \right) \mathcal{J}_{1,T} + \rate \frac{2 \sqrt{T} }{h_T T^{\hTto}} + \frac{2 T }{h^2_T T^{\hTto}} 
\end{align*}

\noindent Since these additional terms are of much smaller order compared to $\mathcal{J}_{1,T} + \mathcal{J}_{2,T}$ we ignore them, and conclude that our result holds.

\subsection{Proof of Lemma~\ref{lemma:napt-concent}}

\noindent Let $U_t:=\I\{A_t=a\}$. We have $\E[U_t\;|\;\F_{t-1}]=p_{T,a}$. Let $\beta_T:=\sum_{t=1}^T p_{T,a}$. Now consider the following modified Freedman's inequality.
\begin{lemma}
    \label{lem:freedman_ineq}
    {(Freedman's Inequality, modified)} Let $(X_t,\F_t)_{t\ge0}$ be a martingale with $X_0=0$. Assume $Y_t:=X_t-X_{t-1}\le b$ almost surely for some $b>0$ for all $t$. Let $V_T:=\sum_{t=1}^T \E[Y_t^2|\F_{t-1}]$. Then, for any $F_{T-1}$ measurable $x,v>0$ and for all $\pen\in\left(0,\frac{1}{b}\right)$, the following holds almost surely
    \begin{align*}
        \prob\left(X_T\ge x\text{ and }V_T\le v\Big|\F_{T-1}\right)\le M_{T-1}(\pen)\exp\left\{-\pen x+\frac{\pen^2 v}{1-\pen b}\right\}
    \end{align*}
    where $M_T(\pen)=\exp\left\{\pen X_T-\frac{\pen^2 V_T}{1-\pen b}\right\}$.
\end{lemma}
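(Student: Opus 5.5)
This is the standard Freedman supermartingale argument, run conditionally on $\F_{T-1}$. Throughout, $\pen$ denotes a free parameter in $(0,1/b)$, not the regularization weight.

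\textbf{Step 1: one-step exponential bound.} For $y\le b$ and $\pen\in(0,1/b)$ we have
\[
e^{\pen y}\le 1+\pen y+\frac{\pen^2y^2}{1-\pen b}.
\]
To see this, write $e^{u}-1-u=u^2\sum_{k\ge2}u^{k-2}/k!$. For $u=\pen y\le\pen b<1$, this is at most $u^2\sum_{k\ge 2}(\pen b)^{k-2}$, which is bounded by $u^2/(1-\pen b)$. (For $u<0$ one uses $e^u-1-u\le u^2/2$ directly.) Take conditional expectations, use $\E[Y_t\mid\F_{t-1}]=0$, and apply $1+z\le e^z$. This gives
\[
\E\bigl[e^{\pen Y_t}\mid\F_{t-1}\bigr]\le \exp\Bigl\{\frac{\pen^2\E[Y_t^2\mid\F_{t-1}]}{1-\pen b}\Bigr\}.
\]
Because $V_t-V_{t-1}=\E[Y_t^2\mid\F_{t-1}]$ is $\F_{t-1}$-measurable, this is exactly $\E[M_t(\pen)\mid\F_{t-1}]\le M_{t-1}(\pen)$. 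So $M_t(\pen)$ is a nonnegative supermartingale with $M_0=1$.

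\textbf{Step 2: conditional Chernoff step.} On the event $\{X_T\ge x,\ V_T\le v\}$ we have $\pen X_T\ge \pen x$ and $-\pen^2V_T/(1-\pen b)\ge -\pen^2 v/(1-\pen b)$. Hence
\[
\mathbf 1\{X_T\ge x,\,V_T\le v\}\le M_T(\pen)\exp\Bigl\{-\pen x+\frac{\pen^2 v}{1-\pen b}\Bigr\}.
\]
Since $x$ and $v$ are $\F_{T-1}$-measurable, the exponential factor can be pulled out of $\E[\cdot\mid\F_{T-1}]$. Step 1 then gives $\E[M_T(\pen)\mid\F_{T-1}]\le M_{T-1}(\pen)$ almost surely, which is the claimed inequality.

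\textbf{Main obstacle.} The only delicate point is the scalar inequality in Step 1 on the one-sided range $y\le b$, where $y$ may be very negative; it holds for all real $u<1/b$ as argued. The second delicate point is to make sure that $V_T$, being predictable, can be moved inside the conditional expectation. Nothing else is needed.
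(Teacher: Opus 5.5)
Your proposal is correct and follows the same route as the paper. Both arguments use the scalar bound $e^{\lambda y}\le 1+\lambda y+\lambda^2y^2/(1-\lambda b)$ for $y\le b$, the resulting one-step supermartingale property $\E[M_T(\lambda)\mid\F_{T-1}]\le M_{T-1}(\lambda)$, and a conditional Markov (Chernoff) step that uses the $\F_{T-1}$-measurability of $x$, $v$ and $V_T$; your series argument, with its separate treatment of negative $u$, justifies the scalar inequality more explicitly than the paper's bare appeal to Taylor's theorem.
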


\noindent In Lemma~\ref{lem:freedman_ineq}, set $X_t=\sum_{i=1}^t (U_i-x_{i,a})=n_{a,T}-\beta_T$ with $X_0=0$. Clearly $\{X_t\}_{t\ge0}$ is a martingale and $Y_t:=X_t-X_{t-1}=U_t-p_{T,a}\le1$. Moreover,
    \begin{align}
        \label{eq:pred cov sum bd}
        V_T=\sum_{t=1}^T\E[Y_t^2|\F_{t-1}]=\sum_{t=1}^T p_{T,a}(1-p_{T,a})\le\sum_{t=1}^T p_{T,a}=\beta_T
    \end{align}
    Pick arbitrary $\varepsilon_1>0$. With $b=1$, $x=\varepsilon_1 \beta_T$ and $v=\beta_T$ in Lemma~\ref{lem:freedman_ineq}, we have, for all $\pen\in(0,1)$, almost surely
    \begin{align*}
        \prob\left(n_{a,T}-\beta_T\ge \varepsilon_1\beta_T, V_T\le \beta_T\Big|\F_{t-1}\right)\le M_{T-1}(\pen)\exp\left\{-\pen \varepsilon_1\beta_T+\frac{\pen^2 \beta_T}{1-\pen}\right\}
    \end{align*}
    Let $\zeta(\pen):=-\pen \varepsilon_1+\frac{\pen^2}{1-\pen}$. For $\pen \in(0,1)$, this is minimized at $\lambda_0=1-\frac{1}{1+\varepsilon_1}$ and $\zeta(\lambda_0)=-\left(\sqrt{1+\varepsilon_1}-1\right)^2$. Therefore, using \eqref{eq:pred cov sum bd}, it follows
    \begin{align*}
        \prob\left(n_{a,T}-\beta_T\ge \varepsilon_1\beta_T\Big|\F_{t-1}\right)&=\prob\left(n_{a,T}-\beta_T\ge \varepsilon_1\beta_T, V_T\le \beta_T\Big|\F_{t-1}\right)\\
        &\le M_{t-1}(\lambda_0)\exp\left\{-\left(\sqrt{1+\varepsilon_1}-1\right)^2\beta_T\right\}\\
        &\le M_{t-1}(\lambda_0)\exp\left\{-\left(\sqrt{1+\varepsilon_1}-1\right)^2 T\varepsilon\right\}
    \end{align*}
    Therefore,
    \begin{align}
        \label{eq:freedman as upper bd}
        \prob\left(\frac{n_{a,T}}{\beta_T}-1\ge \varepsilon_1\Big|\F_{t-1}\right)\le M_{t-1}(\lambda_0)\exp\left\{-\left(\sqrt{1+\varepsilon_1}-1\right)^2 T\varepsilon\right\}
    \end{align}
    Similarly, applying the same argument to $\sum_{i=1}^t -(U_t-p_{T,a})$ yields
    \begin{align}
        \label{eq:freedman as lower bd}
        \prob\left(\frac{n_{a,T}}{\beta_T}-1\le -\varepsilon_1\Big|\F_{t-1}\right)\le M_{t-1}(\lambda_0)\exp\left\{-\left(\sqrt{1+\varepsilon_1}-1\right)^2 T\varepsilon\right\}
    \end{align}
    Combining \eqref{eq:freedman as upper bd} and \eqref{eq:freedman as lower bd}, we have
    \begin{align}
        \label{eq:freedman as bd}
        \prob\left(\left|\frac{n_{a,T}}{\beta_T}-1\right|\ge \varepsilon_1\Big|\F_{t-1}\right)\le 2M_{t-1}(\lambda_0)\exp\left\{-\left(\sqrt{1+\varepsilon_1}-1\right)^2 T\varepsilon\right\}
    \end{align}
    The bounds in \eqref{eq:freedman as upper bd}, \eqref{eq:freedman as lower bd} and \eqref{eq:freedman as bd} hold almost surely. Finally, taking expectation and using the fact that $M_{t}(\lambda_0)$ is a super-martingale, it follows,
    \begin{align*}
        \prob\left(\left|\frac{n_{a,T}}{\beta_T}-1\right|\ge \varepsilon_1\right)&\le 2\E\left[M_{t-1}(\lambda_0)\right]\exp\left\{-\left(\sqrt{1+\varepsilon_1}-1\right)^2 T\varepsilon\right\}\\
        &\le 2\E\left[M_{0}(\lambda_0)\right]\exp\left\{-\left(\sqrt{1+\varepsilon_1}-1\right)^2 T\varepsilon\right\}\\
        &=2\exp\left\{-\left(\sqrt{1+\varepsilon_1}-1\right)^2 T\varepsilon\right\}
    \end{align*}

\subsection{Proof of Lemma~\ref{lem:master eqn}}

\noindent We begin by analyzing the quantity $\langle\setpz \widetilde{\ell}_t,x_t-y\rangle$. Recall from Algorithm~\ref{alg:st-exp3} that $\setpz \widetilde{\ell}_t = \nabla\phi(x_t)-\nabla\phi(z_{t+1})$, which lead to the following string of inequalities:
    \begin{align*}
        \langle\setpz \widetilde{\ell}_t,x_t-y\rangle&=\langle\nabla\phi(x_t)-\nabla\phi(z_{t+1}),x_t-y\rangle\\
        &=D_\phi(y,x_t)+D_\phi(x_t,z_{t+1})-D_\phi(y,z_{t+1})\\
        &\overset{(a)}{\le}D_\phi(y,x_t)+D_\phi(x_t,z_{t+1})-D_\phi(y,x_{t+1})-D_\phi(x_{t+1},z_{t+1})\\
        &\le \left[D_\phi(y,x_t)-D_\phi(y,x_{t+1})\right]+D_\phi(x_t,z_{t+1})
    \end{align*}
    Here, inequality $(a)$ follows by observing $D_\phi(y,z_{t+1})\ge D_\phi(y,x_{t+1})+D_\phi(x_{t+1},z_{t+1})$. Next, we bound $D_\phi(x_t,z_{t+1})$. Now consider the following lemma.
    \begin{lemma}
\label{lem:primal-dual-bregman}
Let $\phi$ be a Legendre function
\citep[\text{Section 26}]{Rockafellar1970}.
Then for all $x,y \in \operatorname{int}(\operatorname{dom}\phi)$,
\[
D_\phi(x,y)
=
D_{\phi^\ast}\bigl(\nabla \phi(y),\, \nabla \phi(x)\bigr).
\]
Furthermore,  if $\phi:\mathfrak{D}_\phi\to\R$ is  twice differentiable, then there exists $\alpha^\ast \in(0,1)$ such that
    \begin{align*}
        D_\phi(x,y)=\frac{1}{2}(x-y)^T\left[\nabla^2\phi(\alpha^\ast x+(1-\alpha^\ast)y)\right](x-y)
    \end{align*}
\end{lemma}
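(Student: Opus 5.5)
The first identity is the standard Legendre-duality fact, and the second is a Taylor expansion with integral-form remainder followed by a mean-value step.

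\emph{Step 1 (primal--dual identity).} For a Legendre $\phi$, the gradient $\nabla\phi$ is a bijection from $\operatorname{int}(\operatorname{dom}\phi)$ onto $\operatorname{int}(\operatorname{dom}\phi^\ast)$, with inverse $\nabla\phi^\ast$ \citep[Theorem~26.5]{Rockafellar1970}. The Fenchel--Young equality then gives
\[
\phi^\ast(\nabla\phi(x)) = \langle \nabla\phi(x), x\rangle - \phi(x).
\]
Write $u=\nabla\phi(y)$ and $v=\nabla\phi(x)$, so that $\nabla\phi^\ast(v)=x$. Expanding the dual divergence,
\[
D_{\phi^\ast}(u,v)=\phi^\ast(u)-\phi^\ast(v)-\langle x, u-v\rangle .
\]
Substituting the two Fenchel--Young expressions, the terms $\langle v,x\rangle$ cancel. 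What remains is
\[
\langle u,y\rangle-\phi(y)+\phi(x)-\langle x,u\rangle,
\]
which equals $\phi(x)-\phi(y)-\langle\nabla\phi(y),x-y\rangle=D_\phi(x,y)$. This is pure algebra once the inverse-gradient property is invoked.

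\emph{Step 2 (quadratic form).} Set $g(s):=\phi(y+s(x-y))$ for $s\in[0,1]$. This is well defined and twice differentiable because $\operatorname{int}(\operatorname{dom}\phi)$ is convex and contains the whole segment. Then $D_\phi(x,y)=g(1)-g(0)-g'(0)$. Taylor's theorem with Lagrange remainder gives $g(1)=g(0)+g'(0)+\tfrac12 g''(s^\ast)$ for some $s^\ast\in(0,1)$. Here
\[
g''(s)=(x-y)^\top\nabla^2\phi\bigl(y+s(x-y)\bigr)(x-y).
\]
Setting $\alpha^\ast=s^\ast$ makes the evaluation point $\alpha^\ast x+(1-\alpha^\ast)y$, as required.

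\emph{Main obstacle.} The only delicate point is the regularity needed for the mean-value form. If $\nabla^2\phi$ is merely defined, Taylor's theorem in Lagrange form still holds, since only twice-differentiability of the scalar function $g$ is needed. For the mirror maps $\phi_\alpha$ used here, $\phi_\alpha$ is smooth on the open orthant, so this is automatic. One must also check that the iterates $x_t$ and $z_{t+1}$ lie in the interior of the domain, which holds since both have strictly positive coordinates.
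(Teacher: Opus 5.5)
Your proof is correct and follows the paper's argument: Rockafellar's Theorem~26.5 together with the Fenchel--Young equality gives the duality identity, and a one-variable Taylor expansion with Lagrange remainder along the segment gives the quadratic form. You write out the cancellation $\langle \nabla\phi(x), x\rangle$ that the paper leaves implicit, and your $g$ differs from the paper's $h$ only by an affine term.
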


    \noindent Using Lemma~\ref{lem:primal-dual-bregman} we conclude
    \begin{align*}
        D_\phi(x_t,z_{t+1})=\frac{1}{2}\left(\nabla\phi(z_{t+1})-\nabla\phi(x_t)\right)^T \left[\nabla^2{\phi^\ast}(\omega_t)\right]\left(\nabla\phi(z_{t+1})-\nabla\phi(x_t)\right)
    \end{align*}
    where $\phi^\ast(y):=\sup_{x\in R^K_{>0}}\left\{\langle y,x\rangle-\phi(x)\right\}$ is the dual map corresponding to $\phi$. Furthermore, since $\nabla\phi(z_{t+1})\le\nabla\phi(x_t)$, it implies $\omega_t\le \nabla\phi(x_t)$ component wise. For $\alpha\in[0,1]$, choosing $\phi_\alpha$ as the mirror map yields
    \begin{align*}
        \left[\nabla^2{\phi_\alpha^\ast}(\omega_t)\right]_{j,j}\le x_{t,j}^{2-\alpha}\le x_{t,j}
    \end{align*}
    This implies $\nabla^2{\phi_\alpha^\ast}(\omega_t) \preceq\text{Diag}(x_{t,1},\ldots,x_{t,K})$ for any choice of $\alpha\in[0,1]$. Hence,
    \begin{align*}
        D_{\phi_\alpha}(x_t,z_{t+1})&\le \frac{1}{2}\left|\left|\nabla{\phi_\alpha}(x_t)-\nabla{\phi_\alpha}(z_{t+1})\right|\right|_{x_t,\alpha}^2\\
        &=\frac{1}{2}\left|\left|\setpz \widehat{\ell}_t+\setpz\pen\nabla R(x_t)\right|\right|_{x_t,\alpha}^2\\
        &\le \setpz^2\left|\left|\widehat{\ell}_t\right|\right|_{x_t,\alpha}^2+\setpz^2\pen^2\left|\left|\nabla R(x_t)\right|\right|_{x_t,\alpha}^2 
    \end{align*}
    Overall, we arrive at
    \begin{align*}
        \langle\setpz \widetilde{g}_t,x_t-y\rangle\le\left[D_{\phi_\alpha}(y,x_t)-D_{\phi_\alpha}(y,x_{t+1})\right]+\setpz^2\left|\left|\widehat{\ell}_t\right|\right|_{x_t,\alpha}^2+\setpz^2\pen^2\left|\left|\nabla R(x_t)\right|\right|_{x_t,\alpha}^2
    \end{align*}
    On taking expectation, the left hand side simplifies to $\setpz\;\E\left[\langle\nabla f_{\lmep}(x_t),x_t-y\rangle\right]$. Convexity of $f_{\lmep}$ implies $\langle\nabla f_{\lmep}(x_t),x_t-y\rangle$ is lower bounded by $f_{\lmep}(x_t)-f_{\lmep}(y)$. Averaging over $t=1$ to $T$, we get 
    \begin{align}
        \label{eq:master-eqn-gen}
        \frac{1}{T}\sum_{t=1}^T\E\left[f_{\lmep}(x_t)-f_{\lmep}(y)\right]&\le \E\left[\frac{D_{\phi_\alpha}(y,x_1)}{\setpz T}+\frac{\setpz}{T}\sum_{t=1}^T \left|\left|\widehat{\ell}_t\right|\right|_{x_t,\alpha}^2+\frac{\setpz\pen^2}{T}\sum_{t=1}^T \left|\left|\nabla R(x_t)\right|\right|_{x_t,\alpha}^2\right]
    \end{align}
    The lemma follows from the observation that $\frac{1}{T}\sum_{t=1}^T\E\left[f_{\lmep}(x_t)-f_{\lmep}(y)\right]\ge\E\left[f_{\lmep}(\xT)-f_{\lmep}(y)\right]$.

\newpage

\section{Proof of Robust Inference for corrupted samples}
\label{proof:normality-corr}

\noindent This section provides the proof of Theorems~\ref{thm:normality-corruption} and~\ref{regret : corrupted} stated in the main section.

\subsection{Proof of Theorem~\ref{thm:normality-corruption}}
    
  \noindent Observe that for our choice of $\pen,\vars$ and $\setpz$ the original rewards $\ell_t$ satify the central limit theorem:
    \begin{equation}
        \frac{1}{\sqrt{n_{a,T}}}
        \sum_{t=1}^{T}
        \left(\ell_t - \mu_a\right)\,\mathbf{1}\{A_t = a\}
        \;\indist\;
        \mathcal{N}\!\left(0, \sigma_a^2\right),
        \qquad \forall\, a \in \{1,2,\ldots,K\}
        \label{decomposition_outline}
    \end{equation}
    Let us rewrite $\ell^{\mathrm{c}}_t$ as $\ell^{\mathrm{c}}_t = \ell_t + c_t$, where $c_t$ is the added corruption. For arm $a$, note that,
    \begin{align*}
        \frac{1}{\sqrt{n_{a,T}}}\sum_{t=1}^{T}\bigl(\ell^{\mathrm{c}}_t - \mu_a\bigr)\mathbf{1}\{A_t = a\}&=\frac{1}{\sqrt{n_{a,T}}}\sum_{t=1}^{T}\bigl(\ell_t + c_t - \mu_a\bigr)\mathbf{1}\{A_t = a\} \\
        &=\frac{1}{\sqrt{n_{a,T}}}\sum_{t=1}^{T}\bigl(\ell_t-\mu_a\bigr)\mathbf{1}\{A_t = a\}+\frac{1}{\sqrt{n_{a,T}}}\sum_{t=1}^{T}c_t\,\mathbf{1}\{A_t = a\}
    \end{align*}
    Let $C_{a,T}:=\sum_{t=1}^{T}c_t\,\mathbf{1}\{A_t = a\}$. We have that $C_T = o\left(T^\beta\right)$ and $\mathbb{E}\left[\left|C_{a,T}\right|\right] \le C_T$. For $\frac{1}{2}>\beta$, we have $\frac{1}{2}+\beta>2\beta$ and hence by Markov's Inequality, $C_{a,T} = o_P\left(\sqrt{T\;x_{\beta,a}}\right)$. Using \eqref{decomposition_outline} and applying Slutsky's theorem, it follows
    \begin{equation}
        \label{eq:normality-corr}
        \frac{1}{\sqrt{n_{a,T}}}
        \sum_{t=1}^{T}
        \bigl(\ell^{\text{c}}_t - \mu_a\bigr)\,\mathbf{1}\{A_t = a\}
        \;\indist\;
        \mathcal{N}\!\left(0, \sigma_a^2\right),
        \qquad \forall\, a \in \{1,2,\ldots,K\} 
    \end{equation}
    \paragraph{Consistency of $\widehat{\sigma}_{a,T}$}: 
    $\widehat{\sigma}_{a,T} = \frac{1}{n_{a,T}}\sum_{t=1}^T  \bigl(\ell^{\text{c}}_t\bigr)^2\,\mathbf{1}\{A_t = a\}-\bigl(\frac{1}{n_{a,T}}\sum_{t=1}^T  \ell^{\text{c}}_t\,\mathbf{1}\{A_t = a\}\bigr)^2$. From \ref{eq:normality-corr}, we get that, $\frac{1}{n_{a,T}}\sum_{t=1}^T  \ell^{\text{c}}_t\,\mathbf{1}\{A_t = a\} \inprob \mu_a$. Thus, it suffices to show that 
    \[\frac{1}{n_{a,T}}\sum_{t=1}^T  \bigl(\ell^{\;\mathrm{c}}_t\bigr)^2\,\mathbf{1}\{A_t = a\} \inprob \sigma^2_a + \mu_a^2\]
    Observe that, 
    \begin{align*}
        \frac{1}{n_{a,T}}\sum_{t=1}^T  \bigl(\ell^{\;\mathrm{c}}_t\bigr)^2\,\mathbf{1}\{A_t = a\}
        = \frac{1}{n_{a,T}}\sum_{t=1}^T  \bigl(\ell_t^2 + c_t^2 + 2c_t\ell_t\bigr)\,\mathbf{1}\{A_t = a\}.
    \end{align*}
    From \citet[Lemma 3]{laiwei82}, we get that, \[\frac{1}{n_{a,T}}\sum_{t=1}^T  \bigl(\ell_t\bigr)^2\,\mathbf{1}\{A_t = a\} \inprob \sigma_a^2 + \mu_a^2\]
    It is enough to show that, $\frac{1}{n_{a,T}}\sum_{t=1}^T  c_t^2\,\mathbf{1}\{A_t = a\} \inprob0$. Then, by Cauchy-Schwarz Inequality, $\frac{1}{n_{a,T}}\sum_{t=1}^T  \bigl(\ell_tc_t\bigr)\,\mathbf{1}\{A_t = a\}\le\sqrt{\frac{\sum \ell_t^2\mathbf{1}\{A_t = a\}}{n_{a,T}}}\cdot\sqrt{\frac{\sum c_t^2\mathbf{1}\{A_t = a\}}{n_{a,T}}} \inprob 0$. Now, 
    \[\frac{1}{n_{a,T}}\sum_{t=1}^T  c_t^2\,\mathbf{1}\{A_t = a\} \le \frac{2}{n_{a,T}}\sum_{t=1}^T  |c_t|\,\mathbf{1}\{A_t = a\} := M(T) \text{ (say)}\] 
    We thus have $\mathbb{E}[M(T)]\le 2C_T = \mathcal{O}\left(T^{\beta}\right)$ with $\beta < \frac{1}{2}$ (from Theorem~\ref{thm:normality-corruption} assumption) and we deduce $\frac{C_T}{\sqrt{Tx_{\gamma,a}}}\le \frac{C_T}{\sqrt{T\varepsilon}} \to 0.$ The result is obtained by applying Markov’s inequality. This completes the proof.

\subsection{Proof of Lemma~\ref{regret : corrupted}}
\label{proof:regret-corr}

  \noindent  Imitating the proof of Lemma~\ref{lem:master eqn}, we get,
    \begin{align*}
        \langle\setpz \widetilde{\ell}^{\mathrm{c}}_t,x_t-y\rangle\le\left[D_\phi(y,x_t)-D_\phi(y,x_{t+1})\right]+\setpz^2\left|\left|\widehat{\ell}^{\mathrm{c}}_t\right|\right|_{x_t,\alpha}^2+\setpz^2\pen^2\left|\left|\nabla R(x_t)\right|\right|_{x_t,\alpha}^2
    \end{align*}
    This implies
    \begin{align*}
        \langle\setpz \widetilde{\ell}_t,x_t-y\rangle&\le\left[D_\phi(y,x_t)-D_\phi(y,x_{t+1})\right]+\setpz^2\left|\left|\widehat{\ell}^{\mathrm{c}}_t\right|\right|_{x_t,\alpha}^2\\
        &\hspace{4cm}+\setpz^2\pen^2\left|\left|\nabla R(x_t)\right|\right|_{x_t,\alpha}^2 + \setpz\langle\widetilde{\ell}_t-\widetilde{\ell}^{\mathrm{c}}_t,x_t-y\rangle
    \end{align*}
    Next, we take a conditional expectation and bound the additional inner product term as follows
    \begin{align*}
        \langle\widetilde{\ell_t}-\widetilde{\ell_t}^{\mathrm{c}},x_t-y\rangle&\le \|\widetilde{\ell_t}-\widetilde{\ell_t}^{\mathrm{c}}\|_{\infty}\times\|x_t-y\|_{1}\\
        &\le \|\widetilde{\ell_t}-\widetilde{\ell_t}^{\mathrm{c}}\|_{\infty} \times\left(\sum_{i=1}^K(x_{t,i}-\varepsilon)+\sum_{i=1}^K(y_{i}-\varepsilon)\right)\\
        &\le 2\|\widetilde{\ell_t}-\widetilde{\ell_t}^{\mathrm{c}}\|_{\infty}
    \end{align*}
    By comparing with the proof of Lemma~\ref{lemma:actual regret bd} and taking expectation and averaging over $t=1$ to $T$, we arrive at
    \begin{align}
        \label{eq:master-eqn-corr}
        \E\left[f_{\lmep}(\xT)-f_{\lmep}(y)\right]&\le \frac{C_\alpha(K,\varepsilon)}{\setpz T}+\setpz K+\frac{\setpz \pen^2}{\varepsilon^2}+\frac{2 C_T}{T}
    \end{align}
       
    \noindent Note that from \eqref{eq:master-eqn-corr}, we have
\begin{align*}
    \E[\langle\mu,\xT-y\rangle]\le\frac{C_\alpha(K,T)}{\sqrt{T}}+\frac{K}{\sqrt{T}}+\frac{2C_T}{T}+\frac{\hTo^2\log^2 T}{T^{\frac{1}{2}+2\beta}}
\end{align*}
Therefore,
\begin{align*}
    \E[\langle\mu,\xT-p^\star\rangle]&\le\frac{C_\alpha(K,T)}{\sqrt{T}}+\frac{K}{\sqrt{T}}+\frac{\hTo^2\log^2 T}{T^{\frac{1}{2}+2\beta}}+\frac{2K}{T^{1-\beta}}\\
\end{align*}
For $\alpha\in\left[0,\frac{1}{3}\right)$, we have
\begin{align*}
    \regret(T)\le\left(4\log T  +\frac{\hTo^2\log^2 T}{T^{2\beta}}+2\; T^\beta\right)\cdot K\sqrt{T}
\end{align*}
Similarly, for $\alpha\in\left[\frac{1}{3},1\right]$, we get the following bound
\begin{align*}
    \regret(T)\le\left(4\log K  +\frac{\hTo^2\log^2 T}{T^{2\beta}}+2\; T^\beta\right)\cdot K\sqrt{T}
\end{align*}

\newpage

\section{Auxiliary Lemmas}
\label{sec:aux-lem}
\begin{lemma*}
Let $\phi$ be a Legendre function
\citep[\text{Section 26}]{Rockafellar1970}.
Then for all $x,y \in \operatorname{int}(\operatorname{dom}\phi)$,
\[
D_\phi(x,y)
=
D_{\phi^\ast}\bigl(\nabla \phi(y),\, \nabla \phi(x)\bigr).
\]
Furthermore,  if $\phi:\mathfrak{D}_\phi\to\R$ is  twice differentiable, then there exists $\alpha^\ast \in(0,1)$ such that
    \begin{align*}
        D_\phi(x,y)=\frac{1}{2}(x-y)^T\left[\nabla^2\phi(\alpha^\ast x+(1-\alpha^\ast)y)\right](x-y)
    \end{align*}
\end{lemma*}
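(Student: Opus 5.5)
The statement to be proved is the final auxiliary lemma: the duality identity $D_\phi(x,y)=D_{\phi^\ast}(\nabla\phi(y),\nabla\phi(x))$ for Legendre $\phi$, together with the mean-value (Taylor) representation of $D_\phi$ via the Hessian at an intermediate point. I would prove the two parts separately, each by direct computation from the definitions.

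\textbf{Duality identity.} I use the standard Legendre-transform facts from \citep[Section 26]{Rockafellar1970}. For Legendre $\phi$, the map $\nabla\phi$ is a bijection from $\operatorname{int}(\operatorname{dom}\phi)$ onto $\operatorname{int}(\operatorname{dom}\phi^\ast)$, with inverse $\nabla\phi^\ast$. Writing $u=\nabla\phi(x)$ and $v=\nabla\phi(y)$, the Fenchel--Young equality gives
\[
\phi^\ast(u)=\langle u,x\rangle-\phi(x),\qquad \phi^\ast(v)=\langle v,y\rangle-\phi(y),
\]
and also $\nabla\phi^\ast(u)=x$. Expanding the dual divergence gives
\[
D_{\phi^\ast}(v,u)=\phi^\ast(v)-\phi^\ast(u)-\langle x,v-u\rangle .
\]
Substituting the two Fenchel--Young expressions, the terms $\langle u,x\rangle$ cancel. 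What remains is $\phi(x)-\phi(y)-\langle v,x-y\rangle$, which is exactly $D_\phi(x,y)$ because $v=\nabla\phi(y)$. The only care needed here is to check the cancellation and the order of the arguments in the dual divergence.

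\textbf{Hessian representation.} Here I would not rely on a scalar Lagrange remainder applied blindly. Since $\operatorname{int}(\operatorname{dom}\phi)$ is convex, the segment between $y$ and $x$ stays in the domain, so I define $g(s):=\phi(y+s(x-y))$ for $s\in[0,1]$. By assumption $g$ is twice differentiable on $[0,1]$. Taylor's theorem with Lagrange remainder \citep[Theorem 5.15]{rudin1976principles} gives some $s^\ast\in(0,1)$ with
\[
g(1)=g(0)+g'(0)+\tfrac12 g''(s^\ast).
\]
We have $g(1)-g(0)-g'(0)=D_\phi(x,y)$. Also $g''(s)=(x-y)^\top\nabla^2\phi(y+s(x-y))(x-y)$, so taking $\alpha^\ast=s^\ast$ yields the intermediate point $\alpha^\ast x+(1-\alpha^\ast)y$.

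\textbf{Main obstacle.} The only potential subtlety is the rigorous justification of the Legendre facts: that $\nabla\phi$ and $\nabla\phi^\ast$ are inverse bijections between the interiors, and the Fenchel--Young equality. I would cite these from \citep[Theorem 26.5]{Rockafellar1970} rather than reprove them. I would also remark that if $x=y$ both sides of each identity are zero, so $\alpha^\ast$ can be chosen arbitrarily in that case.
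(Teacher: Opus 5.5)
Your proposal is correct and follows the paper's proof. The duality identity comes from substituting the Fenchel--Young equality at $x$ and at $y$, using $\nabla\phi^\ast=(\nabla\phi)^{-1}$ from Rockafellar's Theorem 26.5. The Hessian form comes from Taylor's theorem with Lagrange remainder along the segment from $y$ to $x$.

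The only cosmetic difference is in the auxiliary function. The paper's $h(\alpha)$ already subtracts $\phi(y)+\langle\nabla\phi(y),\alpha(x-y)\rangle$, so that $h(0)=h'(0)=0$. You instead work with $g(s)=\phi(y+s(x-y))$ and subtract the affine part afterwards. You also write out the term cancellation in the duality step, which the paper only asserts.
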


\begin{proof}
Since $\phi$ is Legendre, the gradient map
\[
\nabla \phi :
\operatorname{int}(\operatorname{dom}\phi)
\to
\operatorname{int}(\operatorname{dom}\phi^\ast)
\]
is bijective with inverse $\nabla \phi^\ast$ \citep[Theorem~26.5]{Rockafellar1970}.
Using \textit{Fenchel--Young} duality,
\[
\phi(x) + \phi^\ast(\nabla \phi(x)) = \langle x, \nabla \phi(x) \rangle,
\]
and substituting this identity for both $x$ and $y$ in the definition of
the Bregman divergence yields the claim.  Fix $x,y\in\mathfrak{D}_\phi$. For $\alpha\in[0,1]$, consider $h(\alpha)=\phi(\alpha x+(1-\alpha)y)-\phi(y)-\langle\nabla \phi(y),\alpha(x-y)\rangle$. Using Taylor's Theorem, we have
    \begin{align*}
        h(1)=h(0)+h'(0)\times(1-0)+\frac{1}{2}h''(\alpha^\ast)\times(1-0)^2
    \end{align*}
    for some $\alpha^\ast\in (0,1)$. Now, the lemma follows from the observations that $h(1)=D_\phi(x,y)$, $h(0)=h'(0)=0$ and $h''(\alpha^\ast)=(x-y)^T\left[\nabla^2\phi(\alpha^\ast x+(1-\alpha^\star)y)\right](x-y)$.

\end{proof}

\begin{lemma*}
    {(Freedman's Inequality, modified)} Let $(X_t,\F_t)_{t\ge0}$ be a martingale with $X_0=0$. Assume $Y_t:=X_t-X_{t-1}\le b$ almost surely for some $b>0$ for all $t$. Let $V_T:=\sum_{t=1}^T \E[Y_t^2|\F_{t-1}]$. Then, for any $F_{T-1}$ measurable $x,v>0$ and for all $\pen\in\left(0,\frac{1}{b}\right)$, the following holds almost surely
    \begin{align*}
        \prob\left(X_T\ge x\text{ and }V_T\le v\Big|\F_{T-1}\right)\le M_{T-1}(\pen)\exp\left\{-\pen x+\frac{\pen^2 v}{1-\pen b}\right\}
    \end{align*}
    where $M_T(\pen)=\exp\left\{\pen X_T-\frac{\pen^2 V_T}{1-\pen b}\right\}$.
\end{lemma*}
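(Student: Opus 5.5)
The statement is the modified Freedman inequality: for a martingale with increments $Y_t\le b$ and predictable quadratic variation $V_T$, the conditional tail of $X_T$ on $\{V_T\le v\}$ is bounded by $M_{T-1}(\pen)\exp\{-\pen x+\pen^2 v/(1-\pen b)\}$. The plan is a one-step exponential supermartingale argument together with a conditional Chernoff bound.

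\textbf{Step 1: the scalar inequality.} For $\pen\in(0,1/b)$ and $y\le b$, show
\[
e^{\pen y}\le 1+\pen y+\frac{\pen^2 y^2}{1-\pen b}.
\]
I would prove this from the series bound $e^{u}-1-u\le \frac{u^2}{2}\sum_{k\ge0}(u_+)^k/3^k$ applied with $u=\pen y\le\pen b$. The standard Bennett-type estimate $\frac{u^2}{2(1-u_+/3)}\le \frac{u^2}{1-\pen b}$ is enough; only the cruder constant is needed here. Negative $u$ are handled by $e^u-1-u\le u^2/2$.

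\textbf{Step 2: conditional exponential moment and supermartingale property.} Take conditional expectations given $\F_{t-1}$ in Step 1. Since $\E[Y_t\mid\F_{t-1}]=0$,
\[
\E[e^{\pen Y_t}\mid\F_{t-1}]\le 1+\frac{\pen^2}{1-\pen b}\E[Y_t^2\mid\F_{t-1}]\le \exp\Big\{\frac{\pen^2}{1-\pen b}\E[Y_t^2\mid\F_{t-1}]\Big\}.
\]
Because $V_t-V_{t-1}=\E[Y_t^2\mid\F_{t-1}]$ is $\F_{t-1}$-measurable, this gives $\E[M_t(\pen)\mid\F_{t-1}]\le M_{t-1}(\pen)$. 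So $M_t(\pen)$ is a nonnegative supermartingale with $M_0=1$; this fact is also used later in Lemma~\ref{lemma:napt-concent}.

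\textbf{Step 3: conditional Chernoff.} On the event $\{X_T\ge x,\ V_T\le v\}$ we have
\[
\pen X_T-\frac{\pen^2V_T}{1-\pen b}\ge \pen x-\frac{\pen^2 v}{1-\pen b},
\]
so
\[
\mathbf 1\{X_T\ge x,\ V_T\le v\}\le M_T(\pen)\exp\Big\{-\pen x+\frac{\pen^2 v}{1-\pen b}\Big\}.
\]
Since $x$ and $v$ are $\F_{T-1}$-measurable, the exponential factor can be pulled out of $\E[\cdot\mid\F_{T-1}]$. Step 2 then gives $\E[M_T(\pen)\mid\F_{T-1}]\le M_{T-1}(\pen)$, which yields the claim almost surely.

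\textbf{Main obstacle.} The only delicate point is Step 1 with the precise constant $1/(1-\pen b)$ (rather than $1/2(1-\pen b/3)$), uniformly over all $y\le b$, including large negative $y$. I would verify it by showing that $h(u)=(e^u-1-u)/u^2$ is increasing on $\R$. This gives $h(\pen y)\le h(\pen b)\le \sum_{k\ge 0}(\pen b)^k/(k+2)!\cdot$ bounded by $\frac{1}{2}\cdot\frac{1}{1-\pen b}\le \frac{1}{1-\pen b}$.
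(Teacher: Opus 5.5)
Your proposal is correct and follows the same route as the paper. Both proofs use the scalar bound $e^{\lambda y}\le 1+\lambda y+\lambda^2y^2/(1-\lambda b)$ for $y\le b$, deduce from it the supermartingale property of $M_t(\lambda)$, and finish with a conditional Markov (Chernoff) step that uses the $\mathcal{F}_{T-1}$-measurability of $x$ and $v$. The only difference is that you actually justify the scalar inequality, via the monotonicity of $(e^u-1-u)/u^2$ or the Bennett-type series bound, whereas the paper simply attributes it to Taylor's theorem.
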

\begin{proof}
    For all $y\le b$ and $\pen\in\left(0,\frac{1}{b}\right)$, using \textit{Taylor's Theorem}, it follows
    \begin{align*}
        e^{\pen y}\le 1+\pen y +\frac{\pen^2 y^2}{1-\pen b}
    \end{align*}
    Setting $y=Y_t$ above and taking conditional expectation, we have
    \begin{align*}
        \E\left[e^{\pen Y_t}\Big|\F_{t-1}\right]\le\E\left[1+\pen Y_t +\frac{\pen^2 Y_t^2}{1-\pen b}\Big|\F_{t-1}\right]=1+\frac{\pen^2}{1-\pen b}\E\left[Y_t^2\Big|\F_{t-1}\right]
    \end{align*}
    since $\E[Y_t|\F_{t-1}]=0$. Using, $1+\alpha\le e^\alpha$ for all $\alpha\in\R$, we conclude
    \begin{align}
        \label{eq:freedman sup mg exp bound}
        \E\left[e^{\pen Y_t}\Big|\F_{t-1}\right]\le\exp\left\{\frac{\pen^2}{1-\pen b}\E\left[Y_t^2\Big|\F_{t-1}\right]\right\}
    \end{align}
    Next, we prove that $\{M_T(\pen)\}_{T\ge1}$ is a super-martingale.
    \begin{align*}
        \E\left[M_T(\pen)\Big|\F_{T-1}\right]&=\E\left[\exp\left\{\pen X_T-\frac{\pen^2 V_T}{1-\pen b}\right\}\Big|\F_{T-1}\right]\\
        &=\E\left[\exp\left\{\pen (X_{T-1}+Y_T)-\frac{\pen^2 V_T}{1-\pen b}\right\}\Big|\F_{T-1}\right]\\
        &\overset{(i)}{=}\E\left[\exp\left\{\pen Y_T\right\}\Big|\F_{T-1}\right]\times\exp\left\{\pen X_{T-1}-\frac{\pen^2 V_T}{1-\pen b}\right\}\\
        &\overset{(ii)}{\le}\exp\left\{\frac{\pen^2}{1-\pen b}\E\left[Y_T^2\Big|\F_{T-1}\right]\right\}\times\exp\left\{\pen X_{T-1}-\frac{\pen^2 V_T}{1-\pen b}\right\}\\
        &=\exp\left\{\pen X_{T-1}-\frac{\pen^2}{1-\pen b}\left(V_T-\E[Y_T^2|\F_{T-1}]\right)\right\}\\
        &=\exp\left\{\pen X_{T-1}-\frac{\pen^2 V_{T-1}}{1-\pen b}\right\}\\
        &=M_{T-1}(\pen)
    \end{align*}
    \vspace{-3pt}
    Equality $(i)$ follows from the observation that $X_{T-1}$ and $V_T$ are $\F_{T-1}$-measurable, while inequality $(ii)$ is due to \eqref{eq:freedman sup mg exp bound}.\\
    Now, observe that $\{X_T\ge x, V_T\le v\}$ implies $\left\{M_t(\pen)\ge\exp\left\{\pen x-\frac{\pen^2 v}{1-\pen b}\right\}\right\}$. Since $x,v$ are $\F_{t-1}$-- measurable, by conditional Markov inequality,
    \begin{align*}
        \prob\left(X_T\ge x\text{ and }V_T\le v\Big|\F_{T-1}\right)&\le\prob\left(M_T(\pen)\ge\exp\left\{\pen x-\frac{\pen^2 v}{1-\pen b}\right\}\Big|\F_{T-1}\right)\\
        &\le\frac{\E\left[M_T(\pen)\Big|\F_{T-1}\right]}{\exp\left\{\pen x-\frac{\pen^2 v}{1-\pen b}\right\}}\\
        &\le M_{T-1}(\pen)\exp\left\{-\pen x+\frac{\pen^2 v}{1-\pen b}\right\}
    \end{align*}
    This completes the proof.
\end{proof}

\end{document}